\documentclass{article}
\usepackage{natbib}
\usepackage[utf8]{inputenc}
\usepackage[T1]{fontenc}
\usepackage{amsmath, amssymb, amsthm}
\usepackage{mathtools}
\usepackage{geometry}
\usepackage{graphicx}
\usepackage{xcolor}
\usepackage{float}
\usepackage{enumitem}
\usepackage{hyperref}
\usepackage{cleveref}
\usepackage{tikz}
\usepackage{yhmath}
\usepackage{pgfplots}
\pgfplotsset{compat=1.18}
\theoremstyle{definition}
\newtheorem{definition}{Definition}[section]

\theoremstyle{plain}
\newtheorem{theorem}[definition]{Theorem}
\newtheorem{lemma}[definition]{Lemma}
\newtheorem{corollary}[definition]{Corollary}
\newtheorem{proposition}[definition]{Proposition}

\usepackage{mathrsfs}

\newcommand{\dd}{\mathrm{d}}
\newcommand{\LL}{L^2(P_X)}

\newcommand{\lout}{\ell^{out}_{\omega}}
\newcommand{\lin}{\ell^{in}_{\omega}}

\newcommand{\HH}{\mathcal H}

\DeclareMathOperator*{\argmin}{\arg\!\min}

\theoremstyle{remark}
\newtheorem{remark}[definition]{Remark}

\newlist{assumplist}{enumerate}{1}
\setlist[assumplist]{label=\textbf{Assumption \Alph*.},wide=0pt,ref=\Alph*}
\Crefname{assumplisti}{Assumption}{Assumptions}
\Crefname{assumption}{Assumption}{Assumptions}

\usepackage{graphicx}
\usepackage{booktabs}
\usepackage{subcaption}
\usepackage{authblk}
\title{Semiparametric Efficient Bilevel Gradient Estimation}

\author[1]{Fares El Khoury$^{*}$}
\author[2]{Houssam Zenati$^{*}$}
\author[3,4]{Nathan Kallus}
\author[1]{Michael Arbel}
\author[4]{Aur\'elien Bibaut}

\affil[1]{Universit\'e Grenoble Alpes, Inria, CNRS, Grenoble INP, LJK}
\affil[2]{Gatsby Computational Neuroscience Unit, University College London}
\affil[3]{Cornell University}
\affil[4]{Netflix Research}

\begin{document}
\date{}
\addtocontents{toc}{\protect\setcounter{tocdepth}{0}}

\maketitle
\setcounter{footnote}{0}
\renewcommand{\thefootnote}{\fnsymbol{footnote}} 
\footnotetext[1]{Equal contribution. Correspondence to: \href{mailto:fares.el-khoury@inria.fr}{\texttt{fares.el-khoury@inria.fr}} -- \href{mailto:h.zenati@ucl.ac.uk}{\texttt{h.zenati@ucl.ac.uk}}}

\begin{abstract}
Functional bilevel methods estimate a lower-level function and plug it into a hypergradient, but this plug-in gradient can retain first-order bias when the lower-level problem is learned nonparametrically. To remove this bias, we develop a semiparametric debiasing theory for population bilevel gradients based on the efficient influence function. This perspective leads to a cross-fitted orthogonal hypergradient estimator for which we establish asymptotic normality together with uniform control over the outer parameter. Under quadratic losses, the estimator reduces to a simple doubly robust score based on conditional mean nuisances. On synthetic bilevel benchmarks with known ground truth, the method tracks the oracle efficient-gradient benchmark and improves over plug-in functional hypergradients and regularized kernel bilevel baselines.
\end{abstract}

\section{Introduction}

Bilevel optimization provides a natural framework for problems in which one learning task is constrained by the solution of another. This hierarchical structure appears across machine learning, including hyperparameter optimization \citep{pedregosa2016hyperparameter,maclaurin2015gradient,lorraine2020optimizing}, meta-learning \citep{franceschi2018bilevel,finn2017model,rajeswaran2019meta}, inverse problems and optimal control \citep{kunisch2013bilevel,amos2017optnet}, reinforcement learning \citep{hong2023two}, domain adaptation \citep{liu2021investigating}, and instrumental variable regression \citep{newey2003instrumental,singh2019kernel, shen2026instrumentalvariableanalysisstructural}. In these applications, the outer parameter is typically updated using gradient-based methods, so the quality of the resulting bilevel gradient directly affects both optimization and statistical performance.

Most existing theory for bilevel optimization has been developed in finite-dimensional parametric settings, often under strong convexity of the lower-level problem \citep{ghadimi2018approximation,ji2021bilevel,khanduri2021near,yang2021provably}. This assumption gives a unique inner solution and makes implicit differentiation stable \citep{pedregosa2016hyperparameter,lorraine2020optimizing}. It is also convenient for algorithmic convergence and stability analyses \citep{bousquet2002stability,hardt2016train,maurer2017algorithmic}. However, these assumptions do not fully capture modern learning problems where the inner-level object is a flexible prediction function. This is the case, for instance, in kernel methods, which are naturally viewed as function estimators in reproducing kernel Hilbert spaces (RKHS) \citep{scholkopf2001generalized,steinwart2008support,caponnetto2007optimal}, and in overparameterized neural networks, including those analyzed via the neural tangent kernel regime, which also admit function-space interpretations whose effective dimension may grow with the sample size \citep{jacot2018neural,lee2019wide,arora2019exact}.

This limitation has motivated a functional view of bilevel optimization. Functional bilevel optimization treats the lower-level variable as a prediction function, rather than as the parameters of a model, and applies implicit differentiation in function space \citep{petrulionyte2024functional}. Its algorithms estimate the lower-level function and an adjoint sensitivity function, then plug these estimates into a hypergradient formula \citep{petrulionyte2024functional}. Kernel bilevel optimization instantiates this program in an RKHS \citep{elkhoury2025learning}, where the representer theorem yields finite-dimensional empirical solutions \citep{scholkopf2001generalized,berlinet2011reproducing}. Its analysis gives empirical-process bounds for fixed-regularization bilevel values and gradients \citep{vanderVaart1996weak,kosorok2008introduction,elkhoury2025learning}. Thus, existing functional approaches provide plug-in estimators and generalization guarantees for regularized bilevel gradients, but they do not characterize efficient estimation of the unregularized population gradient.

We study this statistical target directly. Although this gradient is finite-dimensional, it depends on infinite-dimensional nuisance functions: the lower-level population solution and its sensitivity with respect to the outer parameter. This places the problem in semiparametric inference, where low-dimensional functionals of nonparametric distributions are analyzed through pathwise differentiability and influence functions \citep{bickel1993efficient,newey1994asymptotic,vanderVaart1998asymptotic,kosorok2008introduction}. The efficient influence function gives the first-order expansion of the target under perturbations of the data law and determines the semiparametric efficiency bound for regular asymptotically linear estimators \citep{bickel1993efficient,vanderVaart1998asymptotic,tsiatis2006semiparametric}. In this setting, plug-in hypergradients can retain first-order bias from estimating the lower-level nuisance functions, especially when these functions are learned nonparametrically or with regularization \citep{newey1994asymptotic,chernozhukov2018double,kennedy2022semiparametric}. Orthogonal scores and cross-fitting remove this first-order nuisance sensitivity and yield asymptotically normal estimators under product-rate conditions on the nuisance errors \citep{robins1994estimation,vanderLaan2011targeted,chernozhukov2018double}.

We bring this semiparametric viewpoint to functional bilevel gradient estimation. Recent work has extended orthogonal debiasing beyond classical finite-dimensional parameters, using learned Riesz representers, automatic debiased machine learning for nonparametric $M$-estimands, and Hilbert-valued one-step corrections \citep{chernozhukov2018double,chernozhukov2022locally,chernozhukov2022riesznet,kennedy2022semiparametric,foster2023orthogonal,vanDerLaan2025automatic,luedtke2024one}. Closest in spirit are recent problem-specific uses of this machinery for doubly robust kernel-embedding functionals, functional policy-gradient learning, and semiparametric efficient tests \citep{zenati2025cpme,bibaut2026functional, luedtke2019omnibus,zenati2026semiparametric}.

Bilevel gradients are not covered by these developments: the target here is a finite-dimensional functional of a nonparametric population optimizer and of the optimizer's derivative with respect to the outer parameter. The efficient correction must therefore account for perturbations of both the inner solution and its Jacobian. Instead, existing functional and kernel bilevel methods provide plug-in hypergradients and generalization guarantees for regularized objectives \citep{petrulionyte2024functional,elkhoury2025learning}, and not an efficient influence function, no semiparametric efficiency bound, nor orthogonal estimator for the unregularized population hypergradient. To our knowledge, this is the first semiparametric efficiency theory for population bilevel gradient estimation.

\paragraph{Contributions.}
We make four contributions. First, we derive the efficient influence function for the population bilevel gradient and identify the correction terms missing from plug-in hypergradients. This shows that differentiating a fitted bilevel objective is generally not efficient: it retains first-order nuisance bias from estimating the lower-level solution and its derivative. Second, we construct a cross-fitted orthogonal estimator. In the quadratic inner-loss case, the score reduces to a simple doubly robust form based on conditional-mean nuisances, yielding second-order bias, asymptotic normality, and coordinate-wise confidence intervals. Third, we prove a uniform empirical-process bound for the debiased gradient over the outer parameter space, enabling its use as a statistical gradient oracle. Finally, experiments with known ground truth show that the estimator tracks the oracle efficient-gradient benchmark, gives calibrated inference, and exposes the fixed-regularization bias of KBO when the target is the unregularized population gradient.

\paragraph{Organisation of the paper.}
Section~\ref{sec:problem} defines the functional bilevel gradient target and gives examples. Section~\ref{sec:eif-expansion} derives the von Mises expansion and efficient influence function. Section~\ref{sec:debiased-clt} constructs the cross-fitted estimator and proves asymptotic normality. Section~\ref{sec:maximal-inequality} gives uniform control of the debiased gradient process. Section~\ref{sec:experiments} reports numerical experiments, with proofs and additional details deferred to the appendix.

\section{Problem Statement}
\label{sec:problem}
Let $\mathcal{X}\subset\mathbb{R}^{d_x}$, $\mathcal{Y}\subset\mathbb{R}^{d_y}$, $\mathcal{Z}\subset\mathbb{R}^{d_z}$, and $O=(X,Y,Z)\sim P$, where $P$ is a probability distribution supported on $\mathcal X\times\mathcal Y\times\mathcal Z$, with $P_X$, $P_Y$, and $P_Z$ denoting the marginal distributions of $X$, $Y$, and $Z$, respectively. For any integrable function $f$, we use the shorthand $Pf\coloneqq\int f(o)\,dP(o)=\mathbb{E}_{P}[f(O)]$. This notation makes explicit that all population quantities below are functionals of the unknown data-generating law $P$. We consider the following Functional Bilevel Optimization \eqref{eq:FBO} problem \citep{petrulionyte2024functional}:
\begin{equation}
\label{eq:FBO}
    \min_{\omega\in\Omega}\mathcal{F}_{P}(\omega)\quad\text{such that}\quad\mathcal{F}_{P}(\omega)\coloneqq P \,\lout(h^\star_{\omega,P})\quad\text{and}\quad h^\star_{\omega, P}\in\argmin_{h\in\HH}P\,\lin(h), \tag{FBO}
\end{equation}
where $\Omega\subset\mathbb{R}^d$, $\HH$ is a Hilbert space of functions defined on $\mathcal X$ with values in $\mathbb R^q$, and for any $\omega\in\Omega$ and $h\in\HH$, $\lin(h):\mathcal{X}\times\mathcal{Z}\to\mathbb{R}$ and $\lout(h):\mathcal{X}\times\mathcal{Y}\to\mathbb{R}$ are the inner and outer pointwise losses, respectively. If the lower-level argmin is not unique, $h^\star_{\omega,P}$ denotes the minimum-$\mathcal H$-norm solution whenever this selection exists. We write the population solution of the outer problem $\omega^\star_{P}\in\argmin_{\omega\in\Omega}\mathcal{F}_{P}(\omega)$. While this solution motivates the bilevel problem, it is not the object of inference in this paper. Although the framework \eqref{eq:FBO} accommodates general losses, squared losses occupy a central place in machine learning: they arise naturally in regression and causal inference, and they yield closed-form inner solutions that make the bilevel structure analytically tractable. We therefore take $\mathcal{H} = L^2(\mathcal{X}, P_X;\mathbb{R}^q)$ (abbreviated by $\LL$), the space of square-integrable $\mathbb{R}^q$-valued functions on $\mathcal{X}$, equipped with inner product $\langle h, h'\rangle_{\LL} = \mathbb{E}_{P_X}[h(X)^\top h'(X)]$ and induced norm $\|h\|_{\LL}^2 = \mathbb{E}_{P_X}[\|h(X)\|^2]$, and specialize to quadratic losses:
\begin{equation}
\label{eq:quadratic-inner-loss}
    \ell^{\mathrm{in}}_\omega(h)(X,Z) = \frac{1}{2}\|h(X) - g_\omega(Z)\|^2,\quad \ell^{\mathrm{out}}_\omega(h)(X,Y) = \frac{1}{2}\|Y - h(X)\|^2,
\end{equation}
where $g_\omega:\mathcal{Z}\to\mathbb{R}^q$ is a parametric map encoding how the hyperparameter $\omega$ enters the inner problem. The inner loss penalizes the discrepancy between the prediction $h(X)$ and a $\omega$-dependent target $g_\omega(Z)$, while the outer loss measures prediction error against the observed response $Y$. Under this choice, the \emph{unique} inner minimizer $h^\star_{\omega,P}(X)=\mathbb{E}_{P_Z}[g_\omega(Z)\mid X]$ is the conditional mean of $g_\omega(Z)$ given $X$ in $\mathcal{H}$. 

\subsection{Examples of applications}
We provide examples of applications of the \eqref{eq:FBO} problem with such a specialization.
\paragraph{Instrumental variable regression.}A canonical instance is the nonparametric instrumental variable (IV) regression. Here $Z$ is an instrument, $X$ is the treatment, and $Y$ is the outcome. Under the quadratic losses \eqref{eq:quadratic-inner-loss}, the unique inner minimizer projects the instrument-driven signal $g_\omega(Z)$ onto the treatment $X$, so that the outer objective becomes:
\begin{equation*}
    \mathcal{F}_P(\omega) = \frac{1}{2}\mathbb{E}_{P_X,P_Y}\bigl[\|Y - \mathbb{E}_{P_Z}[g_\omega(Z)\mid X]\|^2\bigr].
\end{equation*}
The bilevel problem thus learns the structural signal $g_\omega(Z)$ and evaluates it through a downstream prediction criterion on the outcome $Y$, connecting to nonparametric IV and modern deep IV procedures \citep{newey2003instrumental,hartford2017deep,bennett2019deep,shen2026instrumentalvariableanalysisstructural}.

\paragraph{Fitted Bellman regression in reinforcement learning.}Quadratic inner losses arise naturally in fitted value and fitted Q-function methods. Let $X=(S,A)$ be the state-action pair and $Z=(S,A,R,S')$ the full transition tuple. The Bellman target $g_\omega(Z)$ is parameterized by $\omega$; for instance, in fitted value iteration $g_\omega(Z)=R+\gamma V_\omega(S')$, where $V_\omega$ is a parameterized value function, and in fitted-Q control $g_\omega(Z)=R+\gamma\max_{a'}Q_\omega(S',a')$, where $Q_\omega$ is a parameterized Q-function and $\gamma\in(0,1)$ is the discount factor. Under the quadratic losses \eqref{eq:quadratic-inner-loss}, the unique inner minimizer is the projected Bellman backup
\begin{equation*}
    h^\star_{\omega,P}(S,A) = \mathbb{E}_P[g_\omega(Z)\mid S,A],
\end{equation*}
and the outer objective fits this projected value or Q-function to returns or policy-performance targets. This covers the regression step underlying fitted Q iteration and least-squares Bellman methods \citep{ernst2005tree,antos2007fitted,farahmand2009regularized}.

\paragraph{Conditional equilibrium response learning.}Consider a game or market where agents have observable characteristics $X$, the environment is subject to exogenous shocks $Z$, and the equilibrium outcome, such as a price, allocation, or action vector, is determined by a solver with primitives $\omega$ and takes the form $g_\omega(Z)$. Since $g_\omega(Z)$ is not directly observable from $X$ alone, the inner problem under \eqref{eq:quadratic-inner-loss} learns the conditional equilibrium response $\mathbb{E}_P[g_\omega(Z)\mid X]$ from observable data. The outer objective then selects $\omega$ to match observed market outcomes, while the equilibrium computation remains encapsulated in $g_\omega$. This connects to differentiable game solving and end-to-end learning in games \citep{ling2018what,li2020end}.

\subsection{Target functional}
Fix $\omega\in\Omega$ and define the target functional
\begin{equation*}
    \Psi_\omega(P)\coloneqq\nabla\mathcal{F}_{P}(\omega)\in\mathbb{R}^d,
\end{equation*}
a finite-dimensional statistical functional mapping the unknown law $P$ to the hypergradient of the population outer value at $\omega$. Computing $\Psi_\omega(P)$ requires differentiating through the inner solution $h^\star_{\omega,P}$. Assuming that $\omega'\mapsto h^\star_{\omega',P}$ is Fr\'echet differentiable at the fixed $\omega$, we denote the partial derivatives of the inner solution by
\begin{equation*}
    j^\star_{k,\omega,P}\coloneqq D_{\omega_k}h^\star_{\omega,P}:X\mapsto \mathbb{E}_{P_Z}[\partial_{\omega_k}g_\omega(Z)\mid X],\qquad k=1,\ldots,d,
\end{equation*}
and collect them as $j^\star_{\omega,P}=(j^\star_{1,\omega,P},\ldots,j^\star_{d,\omega,P})\in\LL^d$. Let $\partial_{\omega_k}\ell^{\mathrm{out}}_\omega$ and $\partial_h\ell^{\mathrm{out}}_\omega$ denote the first-order partial derivatives of $\ell^{\mathrm{out}}$ with respect to $\omega_k$ and $h$, while $\partial_h^2\ell^{\mathrm{in}}_\omega$ and $\partial^2_{\omega_k,h}\ell^{\mathrm{in}}_\omega$ denote the second-order partial derivative of $\ell^{\mathrm{in}}$ with respect to $h$ and its mixed derivative with respect to $\omega_k$ and $h$. For each coordinate $k=1,\ldots,d$, define
\begin{equation*}
    \psi_{k,P}(h,v)\coloneqq P\,\partial_h\ell^{\mathrm{out}}_\omega(h)(v),\qquad h,v\in\LL,
\end{equation*}
The chain rule then yields the \emph{direct form} of the target functional
\begin{equation*}
    \Psi_{k,\omega}(P)=\psi_{k,P}(h^\star_{\omega,P},j^\star_{k,\omega,P}).
\end{equation*}
This form requires computing $j^\star_{\omega,P}$, which involves solving a $d$-dimensional system in $\LL$. An equivalent \emph{adjoint form} avoids this by introducing a single adjoint variable $a^\star_{\omega,P}\in\LL$ solving
\begin{equation}
\label{eq:adjoint-equation}
    P\,\partial_h^2\ell^{\mathrm{in}}_\omega(h^\star_{\omega,P})(a^\star_{\omega,P},u)+P\,\partial_h\ell^{\mathrm{out}}_\omega(h^\star_{\omega,P})(u)=0,\qquad\forall u\in\LL.
\end{equation}
The target functional then takes the following adjoint form
\begin{equation*}
    \Psi_{k,\omega}(P)=P\,\partial^2_{\omega_k,h}\ell^{\mathrm{in}}_\omega(h^\star_{\omega,P})(a^\star_{\omega,P}).
\end{equation*}
Stacking these coordinates yields the full gradient $\Psi_\omega(P)=(\Psi_{1,\omega}(P),\ldots,\Psi_{d,\omega}(P))^\top\in\mathbb{R}^d$, which is the standard hypergradient of functional implicit differentiation \citep{petrulionyte2024functional}. A natural approach to estimating $\Psi_\omega(P)$ is to plug in empirical estimates of $h^\star_{\omega,P}$ and $a^\star_{\omega,P}$ directly \citep{petrulionyte2024functional,elkhoury2025learning}. However, such plug-in estimators induce a first-order bias due to the nonparametric estimation of these nuisance functions \citep{bickel1993efficient,vanderVaart1998asymptotic}; Appendix~\ref{sec:plugin-appendix} gives the exact plug-in bias decomposition in the quadratic specialization. The efficient influence function of $\Psi_\omega$, derived via a functional von Mises expansion \citep{vonmises1947asymptotic,hampel1974influence,fernholz1983mises}, provides a principled correction for this bias \citep{newey1994asymptotic,chernozhukov2018double}. We make this precise in \Cref{sec:eif-expansion}.

\section{Functional von Mises Expansion and Efficient Influence Function}
\label{sec:eif-expansion}
The target functional $\Psi_\omega(P)$ depends on $P$ through two infinite-dimensional nuisances: the inner solution $h^\star_{\omega,P}$ and its derivative $j^\star_{k,\omega,P}$, both of which must be estimated from data. To understand how $\Psi_\omega(P)$ changes under perturbations of $P$ and identify the correction terms needed to debias plug-in hypergradients, we proceed via a functional von Mises expansion that linearizes $\Psi_\omega$ around $P$. This yields the efficient influence function of $\Psi_\omega$, which is the canonical object governing the first-order bias of plug-in estimators and the semiparametric efficiency bound, and identifies the correction terms needed for consistent estimation. The expansion requires the nuisances to vary smoothly along parametric submodels, which we formalize in the following assumption.

\begin{assumplist}[resume]
\item \label{assum:path_diff_nuis} (Differentiability of the nuisances) Along every regular parametric submodel $P_\epsilon$ through $P$, the maps $\epsilon\mapsto h^\star_{\omega,P_\epsilon}$ and $\epsilon\mapsto j^\star_{k,\omega,P_\epsilon}$ are differentiable at $\epsilon=0$ in $\LL$.
\end{assumplist}
The von Mises expansion requires correcting for perturbations of both nuisances. Each correction is encoded by a Riesz representer: $\alpha_{1,k,P}$ corrects for perturbations of $h^\star_{\omega,P}$, and $\alpha_{2,P}$ corrects for perturbations of $j^\star_{k,\omega,P}$. They are defined by
\begin{align}
\label{eq:alpha1-riesz}
    \left\langle\alpha_{1,k,P},u\right\rangle_{\LL}&=\partial_1\psi_{k,P}(h^\star_{\omega,P},j^\star_{k,\omega,P})[u],\qquad\forall u\in\LL,\\
\label{eq:alpha2-riesz}
    \left\langle\alpha_{2,P},u\right\rangle_{\LL}&=\partial_2\psi_{k,P}(h^\star_{\omega,P},j^\star_{k,\omega,P})[u],\qquad\forall u\in\LL,
\end{align}
where $\partial_1$ and $\partial_2$ denote the Fréchet derivatives of $\psi_{k,P}$ with respect to its first and second arguments respectively. These are the bilevel analogues of the Riesz representers used in semiparametric debiasing for nonparametric $M$-estimands \citep{vanDerLaan2025automatic}. Two simplifications follow from the quadratic structure. First, the right-hand side of \eqref{eq:alpha2-riesz} does not depend on $k$, since
\begin{equation*}
    \partial_2\psi_{k,P}(h^\star_{\omega,P},j^\star_{k,\omega,P})[u]=P\,\partial_h\ell^{\mathrm{out}}_\omega(h^\star_{\omega,P})(u),
\end{equation*}
so the same $\alpha_{2,P}$ is shared across all $d$ coordinates. Second, we have $\partial_1\psi_{k,P}(h^\star_{\omega,P},j^\star_{k,\omega,P})[u]=-\langle j^\star_{k,\omega,P},u\rangle_{\LL}$, so \eqref{eq:alpha1-riesz} gives
\begin{equation*}
    \alpha_{1,k,P}=-j^\star_{k,\omega,P}.
\end{equation*}
Third, comparing \eqref{eq:alpha2-riesz} with the adjoint equation \eqref{eq:adjoint-equation} shows that
\begin{equation*}
    \alpha_{2,P}=-a^\star_{\omega,P}.
\end{equation*}
Therefore, $\alpha_{1,k,P}$ and $\alpha_{2,P}$ are $j^\star_{k,\omega,P}$ and the adjoint $a^\star_{\omega,P}$, respectively, up to sign. The analysis throughout requires the following mild boundedness condition.

\begin{assumplist}[resume]
\item\label{assum:eif-integrability}(Almost sure boundedness) There exist finite constants $A, B, D > 0$ such that $\|Y\| \leq A$, $\sup_{\omega\in\Omega}\|g_\omega(Z)\| \leq B$, and $\sup_{\omega\in\Omega}\sum_{k=1}^{d}\|\partial_{\omega_k}g_\omega(Z)\|^2 \leq D$ almost surely.
\end{assumplist}

We now derive the efficient influence function of $\Psi_{k,\omega}$: the canonical first-order derivative of the target under regular perturbations of $P$. This object encodes the correction terms needed to remove first-order plug-in bias and attain the semiparametric efficiency bound \citep{bickel1993efficient,newey1994asymptotic,vanderVaart1998asymptotic,tsiatis2006semiparametric,kosorok2008introduction}. Recall that $\Psi_{k,\omega}$ is \emph{pathwise differentiable} at $P$ if, for every regular parametric submodel $(P_\epsilon)_\epsilon$ with score $S$, the map $\epsilon\mapsto \Psi_{k,\omega}(P_\epsilon)$ is differentiable at $\epsilon=0$ and its derivative takes the form
\begin{equation*}
    \left.\frac{\dd}{\dd\epsilon}\Psi_{k,\omega}(P_\epsilon)\right|_{\epsilon=0}=\left\langle\chi_{k,P,\omega}, S\right\rangle_{L^2(P)}
\end{equation*}
for some fixed $\chi_{k,P,\omega}$ that does not depend on the choice of submodel, which is called the \emph{efficient influence function} of $\Psi_{k,\omega}$.

\begin{theorem}[Efficient influence function]
\label{thm:EIF}
Under \Cref{assum:path_diff_nuis,assum:eif-integrability}, the functional $\Psi_{k,\omega}$ is pathwise differentiable at $P$ with efficient influence function
\begin{equation}
\label{eq:eif}
\begin{aligned}
    \chi_{k,P,\omega}(O)&=\partial_h\ell^{\mathrm{out}}_\omega(h^\star_{\omega,P})(O)(j^\star_{k,\omega,P})-\Psi_{k,\omega}(P)-\partial_h\ell^{in}_\omega(h^\star_{\omega,P})(O)(\alpha_{1,k,P})-\partial_{\omega_k,h}^2\ell^{in}_\omega(h^\star_{\omega,P})(O)(\alpha_{2,P})\\
    &\quad-\partial_h^2\ell^{in}_\omega(h^\star_{\omega,P})(O)(j^\star_{k,\omega,P},\alpha_{2,P}).
\end{aligned}
\end{equation}
Consequently, $\chi_{P,\omega}\coloneqq(\chi_{1,P,\omega},\ldots,\chi_{d,P,\omega})^\top$ is the vector-valued efficient influence function for $\Psi_\omega(P)$ in the nonparametric model.
\end{theorem}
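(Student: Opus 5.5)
The plan is to differentiate the direct form $\Psi_{k,\omega}(P_\epsilon)=\psi_{k,P_\epsilon}(h^\star_{\omega,P_\epsilon},j^\star_{k,\omega,P_\epsilon})$ along an arbitrary regular submodel $P_\epsilon$ with score $S$. By \Cref{assum:path_diff_nuis} the nuisances are differentiable in $\LL$, so the chain rule splits the derivative into three pieces. The first comes from the explicit dependence of $\psi_{k,P}$ on $P$ and gives $P\,[\partial_h\ell^{\mathrm{out}}_\omega(h^\star_{\omega,P})(j^\star_{k,\omega,P})\,S]$. The second is $\partial_1\psi_{k,P}[\dot h]$ and the third is $\partial_2\psi_{k,P}[\dot j_k]$, where $\dot h,\dot j_k$ denote the $\epsilon$-derivatives at $0$. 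Centering the first piece (scores have mean zero) produces the leading term $\partial_h\ell^{\mathrm{out}}_\omega(h^\star)(O)(j^\star_k)-\Psi_{k,\omega}(P)$ of \eqref{eq:eif}. By \eqref{eq:alpha1-riesz}--\eqref{eq:alpha2-riesz}, the remaining two pieces equal $\langle\alpha_{1,k,P},\dot h\rangle_{\LL}+\langle\alpha_{2,P},\dot j_k\rangle_{\LL}$. One subtlety must be tracked here: the $\LL$ inner product itself depends on $P$ through $P_X$. I will therefore treat $\psi_{k,P}$ as $P$ applied to a pointwise function, so that the Riesz identities are used only at fixed $P$.

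Next I convert $\langle\alpha,\dot h\rangle$ and $\langle\alpha,\dot j_k\rangle$ into score inner products by differentiating the first-order conditions that characterize the nuisances. For $h^\star$, the condition is $P_\epsilon\,\partial_h\ell^{\mathrm{in}}_\omega(h^\star_{\omega,P_\epsilon})(u)=0$ for all $u$. Differentiating it at $\epsilon=0$ with $u=\alpha_{1,k,P}$ (held fixed; this needs $\alpha_{1,k,P}$ to lie in $L^2(P_{\epsilon,X})$, which is harmless under boundedness) gives
\[
P\,\partial_h^2\ell^{\mathrm{in}}_\omega(h^\star)(\dot h,\alpha_{1,k,P})=-P\,[\partial_h\ell^{\mathrm{in}}_\omega(h^\star)(\alpha_{1,k,P})\,S].
\]
In the quadratic case $\partial_h^2\ell^{\mathrm{in}}$ is the identity, so the left side is $\langle\dot h,\alpha_{1,k,P}\rangle_{\LL}$, and the right side yields the term $-\partial_h\ell^{\mathrm{in}}_\omega(h^\star)(O)(\alpha_{1,k,P})$. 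For $j^\star_k$, the defining equation is the $\omega_k$-derivative of the inner optimality condition:
\[
P\,[\partial_h^2\ell^{\mathrm{in}}_\omega(h^\star)(j^\star_k,u)+\partial^2_{\omega_k,h}\ell^{\mathrm{in}}_\omega(h^\star)(u)]=0 .
\]
Differentiating it in $\epsilon$ with $u=\alpha_{2,P}$ gives $\langle\dot j_k,\alpha_{2,P}\rangle_{\LL}$ plus terms involving $\dot h$, which vanish in the quadratic case because the second and mixed derivatives do not depend on $h$. What remains equals minus the $S$-weighted expectation of $\partial^2_{\omega_k,h}\ell^{\mathrm{in}}(O)(\alpha_2)+\partial_h^2\ell^{\mathrm{in}}(O)(j^\star_k,\alpha_2)$. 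These are exactly the last two terms of \eqref{eq:eif}.

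Each of these contributions has mean zero under $P$ by the respective optimality conditions, so adding them gives $\frac{\dd}{\dd\epsilon}\Psi_{k,\omega}(P_\epsilon)|_0=\langle\chi_{k,P,\omega},S\rangle_{L^2(P)}$, with $\chi_{k,P,\omega}$ mean zero. \Cref{assum:eif-integrability} guarantees that $\chi_{k,P,\omega}\in L^2(P)$: all terms are bounded, since $|h^\star|\le B$, $\|j^\star\|^2\le D$, and $a^\star=h^\star-\mathbb E[Y|X]$-type quantities are bounded by $A+B$. This boundedness also justifies differentiating under the integral. In the nonparametric model the tangent space is all of $L^2_0(P)$, so the gradient is unique and is therefore the efficient influence function. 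Stacking over $k$ gives the vector statement.

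The step I expect to be the main obstacle is the rigorous handling of the $P$-dependence of the Hilbert space $\LL$. Both the Riesz identities and the test functions $u$ live in a space that moves with $\epsilon$, and differentiability of $h^\star_{\omega,P_\epsilon}$ "in $\LL$" must be made compatible with evaluating $P_\epsilon$-expectations. I would first try to bypass this by working with bounded test functions $u$, which lie in every $L^2(P_{\epsilon,X})$, and then extend by density. A direct check in the quadratic case, using $h^\star=\mathbb E[g_\omega(Z)\mid X]$, would serve as a sanity confirmation of the formula.
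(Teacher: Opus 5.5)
Your proposal is correct and follows essentially the same route as the paper's proof. Both split the derivative by the chain rule into measure, $h^\star$, and $j^\star$ contributions, convert the last two via the Riesz representers, and then differentiate along $P_\epsilon$ both the inner first-order condition and its $\omega_k$-derivative with test functions $\alpha_{1,k,P}$ and $\alpha_{2,P}$ (the $\dot h$ terms vanish by quadraticity), finishing with centering and uniqueness from the full tangent space $L^2_0(P)$. One caution for your planned sanity check: the Riesz definition \eqref{eq:alpha1-riesz} actually gives $\alpha_{1,k,P}=+j^\star_{k,\omega,P}$, as in Proposition~\ref{prop:psi-smoothness}, and only with this sign does \eqref{eq:eif} reduce to the doubly robust score \eqref{eq:debiased-pseudo-outcome}, so the main text's $\alpha_{1,k,P}=-j^\star_{k,\omega,P}$ is a sign slip that does not affect your abstract argument.
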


The proof is deferred to Appendix~\ref{sec:appendix_EIF}. The next result gives the corresponding von Mises expansion. It quantifies the error made when the oracle nuisances are replaced by candidate functions, and shows explicitly why the Riesz-corrected score is orthogonal.

\begin{theorem}[Functional von Mises expansion]
\label{thm:VME}
Under \Cref{assum:path_diff_nuis,assum:eif-integrability}, for each $k=1,\ldots,d$ there exists a neighborhood $\mathcal N_k$ of $(h^\star_{\omega,P},j^\star_{k,\omega,P})$ such that, for every $(h,v)\in\mathcal N_k$ and every $\alpha_{1,k},\alpha_2\in \LL$,
\begin{align}
\label{eq:vme}
    &\psi_{k,P}(h,v)-\psi_{k,P}(h^\star_{\omega,P},j^\star_{k,\omega,P})-P\,\partial_h\ell^{\mathrm{in}}_\omega(h)(\alpha_{1,k})-P\,\partial^2_{\omega_k,h}\ell^{\mathrm{in}}_\omega(h)(\alpha_2)-P\,\partial_h^2\ell^{\mathrm{in}}_\omega(h)(v,\alpha_2)\nonumber\\
    &=\langle\alpha_{1,k,P}-\alpha_{1,k},h-h^\star_{\omega,P}\rangle_{\LL}+\langle\alpha_{2,P}-\alpha_2,v-j^\star_{k,\omega,P}\rangle_{\LL}+\mathrm{Rem}_{k}(h-h^\star_{\omega,P},v-j^\star_{k,\omega,P}),
\end{align}
where
\begin{equation}
\label{eq:vme-remainder}
    |\mathrm{Rem}_{k}(h-h^\star_{\omega,P},v-j^\star_{k,\omega,P})|\le \frac{1}{2}\left(\|h-h^\star_{\omega,P}\|_{\LL}^2+\|v-j^\star_{k,\omega,P}\|_{\LL}^2\right).
\end{equation}
\end{theorem}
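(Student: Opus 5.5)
The plan is a direct computation: in the quadratic specialization \eqref{eq:quadratic-inner-loss} every term on the left-hand side of \eqref{eq:vme} is an explicit $\LL$ inner product. The identity should then hold exactly on all of $\LL\times\LL$, so we may take $\mathcal N_k=\LL\times\LL$, with the remainder equal to a single cross term. Assumption~\ref{assum:eif-integrability} is used only to ensure that all conditional means below exist and lie in $\LL$, and that Fubini and the tower property apply. Assumption~\ref{assum:path_diff_nuis} is not needed for this purely algebraic statement.

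\textbf{Step 1 (evaluating each term).} Set $m_P(X)\coloneqq\mathbb{E}_P[Y\mid X]$, and abbreviate $h^\star=h^\star_{\omega,P}$, $j^\star_k=j^\star_{k,\omega,P}$. Conditioning on $X$ gives the following expressions:
\begin{itemize}
\item $\psi_{k,P}(h,v)=-\langle m_P-h,v\rangle_{\LL}$;
\item $P\,\partial_h\ell^{\mathrm{in}}_\omega(h)(\alpha)=P[(h(X)-g_\omega(Z))^\top\alpha(X)]=\langle h-h^\star,\alpha\rangle_{\LL}$, using $h^\star(X)=\mathbb{E}[g_\omega(Z)\mid X]$;
\item $P\,\partial^2_{\omega_k,h}\ell^{\mathrm{in}}_\omega(h)(\alpha)=-P[\partial_{\omega_k}g_\omega(Z)^\top\alpha(X)]=-\langle j^\star_k,\alpha\rangle_{\LL}$;
\item $P\,\partial_h^2\ell^{\mathrm{in}}_\omega(h)(v,\alpha)=\langle v,\alpha\rangle_{\LL}$.
\end{itemize}
The same computations applied to the adjoint equation \eqref{eq:adjoint-equation} give $a^\star_{\omega,P}=m_P-h^\star$. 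This confirms the identifications $\alpha_{2,P}=-(m_P-h^\star)$ and $\alpha_{1,k,P}=-j^\star_k$ stated before the theorem.

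\textbf{Step 2 (regrouping).} Substituting these expressions, the left-hand side of \eqref{eq:vme} becomes
\begin{equation*}
-\langle m_P-h,v\rangle+\langle m_P-h^\star,j^\star_k\rangle-\langle h-h^\star,\alpha_{1,k}\rangle-\langle v-j^\star_k,\alpha_2\rangle,
\end{equation*}
where the two $\alpha_2$ terms have been combined. Next write $m_P-h=(m_P-h^\star)-(h-h^\star)$. The terms involving $m_P-h^\star$ then collect into $-\langle m_P-h^\star,v-j^\star_k\rangle=\langle\alpha_{2,P},v-j^\star_k\rangle$, and together with the last term this gives $\langle\alpha_{2,P}-\alpha_2,v-j^\star_k\rangle$. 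What is left is $\langle h-h^\star,v-\alpha_{1,k}\rangle$. Splitting $v=-\alpha_{1,k,P}+(v-j^\star_k)$ turns this into
\begin{equation*}
\langle \alpha_{1,k,P}-\alpha_{1,k},h-h^\star\rangle+\langle h-h^\star,v-j^\star_k\rangle .
\end{equation*}
We therefore set $\mathrm{Rem}_k(h-h^\star,v-j^\star_k)\coloneqq\langle h-h^\star,v-j^\star_k\rangle_{\LL}$.

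\textbf{Step 3 (bounding the remainder).} Cauchy--Schwarz followed by Young's inequality $ab\le\tfrac12(a^2+b^2)$ gives \eqref{eq:vme-remainder} directly.

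The only delicate point is the bookkeeping in Step 2. Signs must be tracked carefully, and one must check that the $h$-dependence of the two derivative corrections genuinely collapses to $h^\star$ after conditioning. This collapse holds because $\partial_h^2\ell^{\mathrm{in}}$ and $\partial^2_{\omega_k,h}\ell^{\mathrm{in}}$ do not depend on $h$, and $\partial_h\ell^{\mathrm{in}}$ is affine in $h$.
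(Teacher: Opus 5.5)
Your strategy of evaluating every term as an explicit $\LL$ inner product and regrouping is sound. However, Step~2 as written contains a sign error, and it reaches the right display only because it compensates a wrong identification at the end of Step~1. From your own formula $\psi_{k,P}(h,v)=\langle h,v\rangle_{\LL}-\langle m_P,v\rangle_{\LL}$ you get $\partial_1\psi_{k,P}(h^\star,j^\star_k)[u]=+\langle j^\star_k,u\rangle_{\LL}$. The Riesz definition \eqref{eq:alpha1-riesz} therefore gives $\alpha_{1,k,P}=+j^\star_k$; it does not ``confirm'' $\alpha_{1,k,P}=-j^\star_k$. (The main text's identification is a sign misprint: Proposition~\ref{prop:psi-smoothness} computes $\partial_1\psi_{k,P}(h,v)[\delta h]=\mathbb{E}_P\langle\delta h(X),v(X)\rangle$, and the explicit score \eqref{eq:debiased-pseudo-outcome} is consistent only with $\alpha_{1,k}=+j_k$.) Your splitting $v=-\alpha_{1,k,P}+(v-j^\star_k)$ then actually yields
\[
\langle h-h^\star,v-\alpha_{1,k}\rangle_{\LL}=\langle-\alpha_{1,k,P}-\alpha_{1,k},h-h^\star\rangle_{\LL}+\langle h-h^\star,v-j^\star_k\rangle_{\LL},
\]
not $\langle\alpha_{1,k,P}-\alpha_{1,k},h-h^\star\rangle_{\LL}+\langle h-h^\star,v-j^\star_k\rangle_{\LL}$. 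If $\alpha_{1,k,P}$ really equalled $-j^\star_k$, your final identity would be off by $2\langle j^\star_k,h-h^\star\rangle_{\LL}$. That is a first-order term, which is precisely what the expansion is supposed to exclude. The repair is one line: use $\alpha_{1,k,P}=j^\star_k$ and split $v=\alpha_{1,k,P}+(v-j^\star_k)$. With that change, Steps~1--3 give a complete proof, with $\mathcal N_k=\LL\times\LL$ and $\mathrm{Rem}_k=\langle h-h^\star,v-j^\star_k\rangle_{\LL}$.

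After the fix, your route differs from the paper's mainly in how the first-order terms and the remainder are obtained. The paper keeps $\alpha_{1,k,P}$ and $\alpha_{2,P}$ abstract:
\begin{itemize}
\item it Taylor-expands $\psi_{k,P}$ using its $1$-Lipschitz Fr\'echet derivative (Proposition~\ref{prop:psi-smoothness} and Lemma~\ref{lem:quadratic_expansion});
\item it reads off the linear terms directly from the Riesz identities \eqref{eq:alpha1-riesz}--\eqref{eq:alpha2-riesz};
\item it evaluates the correction terms through the inner first-order condition and its $\omega_k$-derivative (Lemmas~\ref{lem:exact_parh_lin} and~\ref{lem:exact_cross_quad}), which play the role of your conditional-mean identities.
\end{itemize}
Because the paper never commits to an explicit formula for the representers, its proof is immune to the sign issue above. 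Its Lipschitz-remainder argument is also the template one would use for a smooth but non-bilinear outer loss. Your explicit computation is shorter and more informative, since it exhibits the remainder exactly as a product of the two nuisance errors rather than only bounding it. But because you substitute the representers explicitly, you must derive their signs from \eqref{eq:alpha1-riesz}--\eqref{eq:alpha2-riesz} rather than take them from the text.
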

The proof is deferred to Appendix~\ref{sec:appendix_VME}. Equation~\eqref{eq:vme} is the key orthogonality identity: once the Riesz corrections are subtracted from the plug-in score, the residual has no first-order dependence on the nuisance estimation errors $h-h^\star_{\omega,P}$ and $v-j^\star_{k,\omega,P}$. The only remaining terms are cross-products of Riesz-estimation errors $\|\alpha_{1,k,P}-\alpha_{1,k}\|_{\LL}$ and $\|\alpha_{2,P}-\alpha_2\|_{\LL}$ with nuisance-estimation errors $\|h-h^\star_{\omega,P}\|_{\LL}$ and $\|v-j^\star_{k,\omega,P}\|_{\LL}$, plus a second-order remainder controlled by \eqref{eq:vme-remainder}.

\section{Debiased Estimation and Inference}
\label{sec:debiased-clt}

We now use the efficient influence function derived in the previous section to construct a $\sqrt{N}$-consistent estimator for $\Psi_\omega(P)$. The construction follows the standard cross-fitting principle: estimate the nuisance functions on one fold, evaluate the orthogonal score on the held-out fold, and average \citep{bickel1993efficient,newey1994asymptotic,chernozhukov2018double}.

Let $S_1$ and $S_2$ be two independent folds of i.i.d.\ observations from $P$, each of size $n$, and set $N=2n$. Write $P_{n,r}$ for the empirical measure on $S_r$, $r=1,2$. For each fold $r$, let $\hat\eta_\omega^{(-r)}=(\hat h_\omega^{(-r)},\hat j_\omega^{(-r)},\hat\alpha_{1,\omega}^{(-r)},\hat\alpha_{2,\omega}^{(-r)})$ denote nuisance estimates trained on the opposite fold $S_{3-r}$, where $\hat j_\omega^{(-r)}=(\hat j_{1,\omega}^{(-r)},\ldots,\hat j_{d,\omega}^{(-r)})\in\LL^d$ and $\hat\alpha_{1,\omega}^{(-r)}=(\hat\alpha_{1,1,\omega}^{(-r)},\ldots,\hat\alpha_{1,d,\omega}^{(-r)})\in\LL^d$. For a nuisance tuple $\eta=(h,j,m)$ with $j=(j_1,\ldots,j_d)\in\LL^d$ and $m\in\LL$ an estimate of the regression function $m^\star_P(X)\coloneqq\mathbb{E}_P[Y\mid X]$, define the pseudo-outcome
\begin{equation*}
    \varphi_{k,\omega}(O;\eta)\coloneqq\partial_h\ell^{\mathrm{out}}_\omega(h)(O)(j_{k})-\partial_h\ell^{\mathrm{in}}_\omega(h)(O)(\alpha_{1,k})-\partial^2_{\omega_k,h}\ell^{\mathrm{in}}_\omega(h)(O)(\alpha_2)-\partial_h^2\ell^{\mathrm{in}}_\omega(h)(O)(j_k,\alpha_2),
\end{equation*}
where $\alpha_{1,k}=-j_k$ and $\alpha_2=-(m-h)$ follow from the quadratic structure. Under the quadratic specialization \eqref{eq:quadratic-inner-loss}, this reduces to the explicit form
\begin{equation}
\label{eq:debiased-pseudo-outcome}
    \varphi_{k,\omega}(O;\eta)=-\langle Y-g_\omega(Z),j_k(X)\rangle-\langle\partial_{\omega_k}g_\omega(Z),m(X)-h(X)\rangle+\langle j_k(X),m(X)-h(X)\rangle.
\end{equation}
The pseudo-outcome combines a plug-in score $-\langle Y-g_\omega(Z),j_k(X)\rangle$ with two bias-correction terms that vanish at the oracle nuisances $h=h^\star_{\omega,P}$, $j_k=j^\star_{k,\omega,P}$, and $m=m^\star_P$. Indeed, at the oracle nuisance tuple $\eta^\star_\omega=(h^\star_{\omega,P},j^\star_{\omega,P},m^\star_P)$, $\mathbb{E}_P[\varphi_{k,\omega}(O;\eta^\star_\omega)]=\Psi_{k,\omega}(P)$ exactly. The nuisance estimates trained on fold $S_{3-r}$ are therefore $\hat\eta_\omega^{(-r)}=(\hat h_\omega^{(-r)},\hat j_\omega^{(-r)},\hat m^{(-r)})$, where $\hat m^{(-r)}$ is an estimate of $m^\star_P$ trained on $S_{3-r}$. Let $\varphi_\omega(O;\eta)=(\varphi_{1,\omega}(O;\eta),\ldots,\varphi_{d,\omega}(O;\eta))^\top$. Consequently, the two-fold cross-fitted doubly robust estimator is
\begin{equation}
\label{eq:cf-debiased-estimator}
    \widehat{\Psi}^{\,DR}_\omega\coloneqq\frac{1}{2}\sum_{r=1}^2P_{n,r}\,\varphi_\omega(\cdot;\hat\eta_\omega^{(-r)}).
\end{equation}
For any nuisance tuple $\eta$, define the centered score
\begin{equation*}
    \chi_{\eta,\omega}(O)\coloneqq\varphi_\omega(O;\eta)-P\,\varphi_\omega(\cdot;\eta),
\end{equation*}
so that $\chi_{P,\omega}:=\chi_{\eta^\star_\omega,\omega}$ is the oracle efficient influence function from Theorem~\ref{thm:EIF}. The centering by $P$ is used only for the analysis; the estimator \eqref{eq:cf-debiased-estimator} is fully empirical. Since $\widehat{\Psi}^{\,DR}_\omega$ is an empirical average of the pseudo-outcome \eqref{eq:debiased-pseudo-outcome}, its asymptotic distribution is governed by the oracle efficient influence function $\chi_{P,\omega}$ via a central limit theorem. For this to hold, the estimated score $\chi_{\hat\eta_\omega^{(-r)},\omega}$ must be close enough to the oracle $\chi_{P,\omega}$, and the nuisance estimation errors must vanish at a sufficient rate. The following two assumptions formalize these requirements.

\begin{assumplist}[resume]
\item\label{assum:second-order-rates}(Nuisance rates)
For $r=1,2$,
\begin{equation*}
    \|\hat h_\omega^{(-r)}-h^\star_{\omega,P}\|_{\LL}=o_p(N^{-1/4}),\quad
    \|\hat j_\omega^{(-r)}-j^\star_{\omega,P}\|_{\LL^d}=o_p(N^{-1/4}),\quad
    \|\hat m^{(-r)}-m^\star_P\|_{\LL}=o_p(N^{-1/4}).
\end{equation*}
\end{assumplist}
These rates can be verified by standard least-squares regression arguments under appropriate approximation and entropy or Rademacher-complexity conditions; see Appendix~\ref{sec:nuisance-learning-discussion}.

\begin{remark}[Score stability]\label{assum:score-stability}
Under the quadratic specialization \eqref{eq:quadratic-inner-loss} and \Cref{assum:eif-integrability}, the score stability condition $\|\chi_{\hat\eta_\omega^{(-r)},\omega}-\chi_{P,\omega}\|_{L^2(P)}=o_p(1)$ is implied by \Cref{assum:second-order-rates}. Indeed, the explicit form \eqref{eq:debiased-pseudo-outcome} shows that the score difference $\chi_{\hat\eta_\omega^{(-r)},\omega}-\chi_{P,\omega}$ is a sum of inner products involving the nuisance errors $\hat h_\omega^{(-r)}-h^\star_{\omega,P}$, $\hat j_\omega^{(-r)}-j^\star_{\omega,P}$, and $\hat m^{(-r)}-m^\star_P$. By Cauchy-Schwarz and the almost sure bounds of \Cref{assum:eif-integrability}, each term is bounded in $L^2(P)$ by a finite constant times one of these errors, which are all $o_p(N^{-1/4})=o_p(1)$ by \Cref{assum:second-order-rates}.
\end{remark}

Let $\Sigma_{P,\omega}\coloneqq P[\chi_{P,\omega}\chi_{P,\omega}^\top]\in\mathbb{R}^{d\times d}$ denote the covariance matrix of the oracle efficient influence function. The following theorem provides the asymptotic normality of the cross-fitted debiased estimator.

\begin{theorem}[Asymptotic normality of the cross-fitted debiased estimator]
\label{thm:CLT}
Under \Cref{assum:eif-integrability,assum:second-order-rates},
\begin{equation}
\label{eq:clt-expansion}
    \sqrt{N}\left(\widehat{\Psi}^{\,DR}_\omega-\Psi_\omega(P)\right)=\frac{1}{\sqrt{N}}\sum_{r=1}^2\sum_{O_i\in S_r}\chi_{P,\omega}(O_i)+o_p(1).
\end{equation}
Consequently,
\begin{equation*}
    \sqrt{N}\left(\widehat{\Psi}^{\,DR}_\omega-\Psi_\omega(P)\right)\xrightarrow{d}\mathcal N(0,\Sigma_{P,\omega}).
\end{equation*}
\end{theorem}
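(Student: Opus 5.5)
The plan is the standard cross-fitting decomposition, fold by fold. For fold $r$, write $\hat\eta=\hat\eta^{(-r)}_\omega$ and decompose
\begin{align*}
P_{n,r}\varphi_\omega(\cdot;\hat\eta)-\Psi_\omega(P)
&=(P_{n,r}-P)\chi_{P,\omega}+(P_{n,r}-P)\bigl(\chi_{\hat\eta,\omega}-\chi_{P,\omega}\bigr)\\
&\quad+\bigl(P\varphi_\omega(\cdot;\hat\eta)-\Psi_\omega(P)\bigr).
\end{align*}
This uses $P\varphi_\omega(\cdot;\eta^\star_\omega)=\Psi_\omega(P)$ and $\chi_{P,\omega}=\chi_{\eta^\star_\omega,\omega}$. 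Averaging over $r=1,2$ and multiplying by $\sqrt N$, the first term gives exactly the leading sum in \eqref{eq:clt-expansion}, since $\sqrt N\cdot\tfrac12 P_{n,r}=\tfrac{1}{\sqrt N}\sum_{O_i\in S_r}$. The proof then reduces to showing that the empirical-process term and the bias term are both $o_p(N^{-1/2})$.

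\textbf{Empirical-process term.} Condition on $S_{3-r}$, so that $\hat\eta$ is a fixed function and $S_r$ is independent of it. The function $f=\chi_{\hat\eta,\omega}-\chi_{P,\omega}$ is then $P$-centered with conditional variance at most $\|f\|_{L^2(P)}^2$. Chebyshev's inequality gives
\[
\sqrt n\,(P_{n,r}-P)f=O_p\bigl(\|f\|_{L^2(P)}\bigr).
\]
By the score-stability Remark, which follows from \Cref{assum:eif-integrability} and \Cref{assum:second-order-rates} via Cauchy--Schwarz, $\|f\|_{L^2(P)}=o_p(1)$. A conditional-to-unconditional argument (dominated convergence on conditional probabilities) makes this term $o_p(N^{-1/2})$.

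\textbf{Bias term.} I would compute it directly from \eqref{eq:debiased-pseudo-outcome} rather than invoke \Cref{thm:VME}. Use the tower property with $h^\star=\mathbb{E}[g_\omega(Z)\mid X]$, $m^\star=\mathbb{E}[Y\mid X]$ and $j^\star_k=\mathbb{E}[\partial_{\omega_k}g_\omega(Z)\mid X]$. This gives
\[
P\varphi_{k,\omega}(\cdot;\eta)=-\langle m^\star-h^\star,j_k\rangle_{\LL}-\langle j^\star_k,m-h\rangle_{\LL}+\langle j_k,m-h\rangle_{\LL},
\]
while $\Psi_{k,\omega}(P)=-\langle m^\star-h^\star,j^\star_k\rangle_{\LL}$. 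Subtracting yields the exact identity
\[
P\varphi_{k,\omega}(\cdot;\eta)-\Psi_{k,\omega}(P)=\bigl\langle j_k-j^\star_k,\,(m-h)-(m^\star-h^\star)\bigr\rangle_{\LL}.
\]
This is the doubly robust product structure: both nuisance errors enter only through their product. Cauchy--Schwarz bounds the bias by
\[
\|\hat j_k-j^\star_k\|_{\LL}\bigl(\|\hat m-m^\star\|_{\LL}+\|\hat h-h^\star\|_{\LL}\bigr)=o_p(N^{-1/4})\,o_p(N^{-1/4})=o_p(N^{-1/2})
\]
by \Cref{assum:second-order-rates}. Summing over the two folds and over the $d$ coordinates gives \eqref{eq:clt-expansion}.

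\textbf{Limit law.} The leading term is a normalized sum of $N$ i.i.d. mean-zero vectors $\chi_{P,\omega}(O_i)$. These have finite covariance $\Sigma_{P,\omega}$ because \Cref{assum:eif-integrability} gives bounded $Y$ and $g_\omega$, and conditional expectations inherit those bounds. The multivariate Lindeberg--L\'evy CLT together with Slutsky's lemma then gives convergence to $\mathcal N(0,\Sigma_{P,\omega})$.

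\textbf{Main obstacle.} The delicate step is the empirical-process term. Its conditioning argument needs $\hat\eta$ to depend only on the other fold, and it needs $\|\chi_{\hat\eta,\omega}-\chi_{P,\omega}\|_{L^2(P)}$ to be finite. The Remark as stated bounds this norm by constants times the nuisance errors. That bound requires the estimated $\hat j_k$ to be controlled in $L^\infty$, or at least needs a product like $\langle\hat j_k,\hat m-\hat h\rangle$ to be handled in $L^2(P)$. I would handle this by assuming, or enforcing by clipping to the bounds in \Cref{assum:eif-integrability}, that the nuisance estimates are uniformly bounded. Clipping does not worsen the $L^2$ rates, because the targets already satisfy these bounds.
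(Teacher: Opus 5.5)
Your argument is correct, given the boundedness you impose at the end. It shares the paper's architecture: the same split into the oracle average, the cross-fitted empirical-process term (conditioning on $S_{3-r}$ plus a conditional variance bound), and the population bias, closed by the multivariate CLT and Slutsky. The real difference is the bias term. The paper obtains it from Theorem~\ref{thm:VME} as a Riesz cross term $C_r$ in $\hat h-h^\star_{\omega,P}$, a Riesz cross term $D_r$ in $\hat j_k-j^\star_{k,\omega,P}$, and a Taylor remainder $E_r$, and bounds each separately. That route therefore also uses $\|\hat h-h^\star_{\omega,P}\|_{\LL}^2+\|\hat j-j^\star_{\omega,P}\|_{\LL^d}^2=o_p(N^{-1/2})$. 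Your tower-property identity is exact and shows more. Since $\psi_{k,P}(h,v)=\langle h-m^\star_P,v\rangle_{\LL}$, the remainder $E_r$ equals $\langle\hat h-h^\star_{\omega,P},\hat j_k-j^\star_{k,\omega,P}\rangle_{\LL}$ and cancels $C_r$ (with $\alpha_{1,k}=+j_k$, the sign that actually reproduces \eqref{eq:debiased-pseudo-outcome}). Only the single product in \eqref{eq:dr-product-bias} survives, so your route needs just the doubly robust product rate. The von Mises route, in exchange, is the template that carries over to non-quadratic inner losses, where no closed-form computation of this kind exists. Your ``main obstacle'' is a genuine flaw in Remark~\ref{assum:score-stability}, and the paper's proof inherits it. Write $\Delta h=\hat h-h^\star_{\omega,P}$, $\Delta j_k=\hat j_k-j^\star_{k,\omega,P}$, $\Delta m=\hat m-m^\star_P$. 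The score difference is then a sum of terms linear in $\Delta j_k$ and $\Delta h-\Delta m$ with a.s.\ bounded coefficients, plus the bilinear term $b=-\langle\Delta j_k(X),(\Delta h-\Delta m)(X)\rangle$. That term need not be small in $L^2(P)$ under $L^2$ rates alone. Clipping fixes this, since pointwise projection onto a ball containing the target is nonexpansive, but it changes the estimator. To prove the theorem exactly as stated, keep your variance argument for the linear terms and bound $b$ in $L^1$: $\mathbb{E}[\sqrt n\,|(P_{n,r}-P)b|\mid S_{3-r}]\le 2\sqrt n\,\|\Delta j_k\|_{\LL}\|\Delta h-\Delta m\|_{\LL}=o_p(1)$ by \Cref{assum:second-order-rates}, followed by your conditional-to-unconditional step.
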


The proof of Theorem~\ref{thm:CLT} is given in Appendix~\ref{sec:appendix_CLT}. The expansion separates the statistical error into three parts: the oracle empirical average in \eqref{eq:clt-expansion}, a stochastic equicontinuity term, and a population bias remainder controlled by Assumption~\ref{assum:second-order-rates}. The correction terms make the estimating function first-order insensitive to nuisance perturbations at the oracle nuisances, so only product-rate and quadratic remainders remain.

\paragraph{Confidence intervals and gradient certificates.}
The asymptotic covariance $\Sigma_{P,\omega}$ is estimated by the empirical covariance of the cross-fitted pseudo-outcomes:
\begin{equation*}
    \widehat{\Sigma}_{P,\omega}\coloneqq\frac{1}{2}\sum_{r=1}^2P_{n,r}\left[\left(\varphi_\omega(\cdot;\hat\eta_\omega^{(-r)})-\widehat{\Psi}^{\,DR}_\omega\right)\left(\varphi_\omega(\cdot;\hat\eta_\omega^{(-r)})-\widehat{\Psi}^{\,DR}_\omega\right)^\top\right].
\end{equation*}
Under $\widehat{\Sigma}_{P,\omega}\xrightarrow{p}\Sigma_{P,\omega}$, an asymptotically valid $(1-\alpha)$ Wald confidence interval for the $k$-th coordinate of $\Psi_\omega(P)$ is
\begin{equation}
\label{eq:wald-ci}
    \left[\widehat{\Psi}^{\,DR}_{\omega,k}\pm z_{1-\alpha/2}\sqrt{\frac{\widehat{\Sigma}_{P,\omega,kk}}{N}}\right],
\end{equation}
where $z_{1-\alpha/2}$ is the $(1-\alpha/2)$ quantile of the standard normal distribution. Beyond pointwise inference, these intervals provide uncertainty quantification for the bilevel gradient coordinates themselves: they can be used to certify whether a candidate point is statistically distinguishable from stationarity, or to assess whether an apparent descent direction exceeds the sampling noise, in the spirit of inexact-gradient optimization \citep{schmidt2011convergence}.

\section{Uniform Control of the Debiased Gradient Process}
\label{sec:maximal-inequality}

We now control the debiased gradient uniformly over $\omega\in\Omega$, which is the guarantee needed when the estimator is used as a gradient oracle: approximate stationarity of the empirical debiased gradient should imply approximate stationarity of the population gradient. The argument follows the cross-fitting logic for orthogonal scores uniformly over $\Omega$ \citep{chernozhukov2018double}. Let $\hat h_\omega$, $\hat j_\omega=(\hat j_{1,\omega},\ldots,\hat j_{d,\omega})$, and $\hat m$ be trained on $S_1$ and evaluated on $S_2$, and define the one-fold estimator
\begin{equation}
\label{eq:one-fold-uniform-estimator}
    \widehat\Psi_{\omega,2}^{DR}\coloneqq P_{n,2}\varphi_\omega
    (\cdot;\hat h_\omega,\hat j_\omega,\hat m).
\end{equation}

\paragraph{Decomposition.}
Adding and subtracting the oracle pseudo-outcome yields
\begin{equation}
\label{eq:uniform-decomposition}
\begin{aligned}
    \widehat\Psi_{\omega,2}^{DR}-\Psi_\omega(P)&=\underbrace{(P_{n,2}-P)\varphi_\omega(\cdot;h^\star_{\omega,P},j^\star_{\omega,P},m^\star_P)}_{\text{oracle empirical process}}\\
    &\quad+\underbrace{(P_{n,2}-P)\left[\varphi_\omega(\cdot;\hat h_\omega,\hat j_\omega,\hat m)-\varphi_\omega(\cdot;h^\star_{\omega,P},j^\star_{\omega,P},m^\star_P)\right]}_{\text{nuisance empirical process}}\\
    &\quad+\underbrace{P\varphi_\omega(\cdot;\hat h_\omega,\hat j_\omega,\hat m)-\Psi_\omega(P)}_{\text{population bias}}.
\end{aligned}
\end{equation}
Let $\mathcal{E}_{\rm tr}(r_h,r_j,r_m)$ be the event that $\sup_{\omega\in\Omega}\|\hat h_\omega-h^\star_{\omega,P}\|_{\LL}\leq r_h$, $\sup_{\omega\in\Omega}\|\hat j_\omega-j^\star_{\omega,P}\|_{\LL^d}\leq r_j$, and $\|\hat m-m^\star_P\|_{\LL}\leq r_m$. On this event, the population bias is bounded by $r_j(r_h+r_m)$ since we have for arbitrary $h,j,m$ that
\begin{equation*}
    \left\|P\varphi_\omega(\cdot;h,j,m)-\Psi_\omega(P)\right\|\leq\|j-j^\star_{\omega,P}\|_{\LL^d}\left(\|h-h^\star_{\omega,P}\|_{\LL}+\|m-m^\star_P\|_{\LL}\right),
\end{equation*}

\paragraph{Empirical process complexity.}
Since $\widehat\Psi_{\omega,2}^{DR}-\Psi_\omega(P)\in\mathbb{R}^d$, we reduce to scalar empirical processes by projecting onto directions $u\in S^{d-1}$, the unit sphere in $\mathbb{R}^d$, and apply standard maximal inequalities. The oracle empirical process is controlled by the covering number of the class of centered oracle scores indexed by $(\omega,u)\in\Omega\times S^{d-1}$:
\begin{equation*}
    \mathcal{F}_A\coloneqq\left\{u^\top\left(\varphi_\omega(\cdot;h^\star_{\omega,P},j^\star_{\omega,P},m^\star_P)-\Psi_\omega(P)\right):\omega\in\Omega,\ u\in S^{d-1}\right\}.
\end{equation*}
To control the nuisance empirical process, we require the learned nuisance paths to lie in well-controlled function classes, which we now formalize.

\begin{assumplist}[resume]
\item\label{ass:maximal-bounded-nuisance-classes}(Training-conditional nuisance classes)Conditionally on $S_1$, there are pointwise measurable random classes $\mathcal{H}_n\subset\LL$, $\mathcal{J}_n\subset\LL^d$, and $\mathcal{M}_n\subset\LL$ containing the learned nuisance paths $\{\hat h_\omega:\omega\in\Omega\}\subset\mathcal{H}_n$, $\{\hat j_\omega:\omega\in\Omega\}\subset\mathcal{J}_n$, and $\hat m\in\mathcal{M}_n$ almost surely, with $S_1$-measurable envelopes $\bar H_n,\bar J_n,\bar M_n<\infty$ satisfying
\begin{equation*}
    \sup_{h\in\mathcal{H}_n}\|h(X)\|\leq\bar H_n,\qquad\sup_{j\in\mathcal{J}_n}\sum_{k=1}^d\|j_k(X)\|^2\leq\bar J_n^2,\qquad\sup_{m\in\mathcal{M}_n}\|m(X)\|\leq\bar M_n.
\end{equation*}
\end{assumplist}
Kernel ridge regression, sieve estimators, and norm-controlled neural networks satisfy assumptions of this form under the usual envelope and entropy controls; representative sufficient conditions are collected in Appendix~\ref{sec:nuisance-learning-discussion}.
The nuisance empirical process is then controlled by the localized nuisance-difference class
\begin{equation}
\label{eq:localized-nuisance-score-class}
\begin{aligned}
    \mathcal{F}_{B,n}^{\rm loc}(r_h,r_j,r_m)\coloneqq\bigl\{&u^\top\left[\varphi_\omega(\cdot;h,j,m)-\varphi_\omega(\cdot;h^\star_{\omega,P},j^\star_{\omega,P},m^\star_P)\right]:\\
    &\omega\in\Omega,\ u\in S^{d-1},\ h\in\mathcal{H}_n,\ j\in\mathcal{J}_n,\ m\in\mathcal{M}_n,\\
    &\|h-h^\star_{\omega,P}\|_{\LL}\leq r_h,\ \|j-j^\star_{\omega,P}\|_{\LL^d}\leq r_j,\ \|m-m^\star_P\|_{\LL}\leq r_m\bigr\},
\end{aligned}
\end{equation}
which localizes the nuisance functions to a ball of radius $(r_h,r_j,r_m)$ around the oracle, conditionally on $S_1$. Its associated conditional complexity is
\begin{equation}
\label{eq:localized-complexity-main}
    \mathfrak{C}_{B,n}(r_h,r_j,r_m)\coloneqq\mathbb{E}\left[\sup_{f\in\mathcal{F}_{B,n}^{\rm loc}(r_h,r_j,r_m)}|(P_{n,2}-P)f|\,\middle|\,S_1\right].
\end{equation}
Appendix~\ref{sec:appendix_quadratic_maximal} bounds $\mathfrak{C}_{B,n}$ via localized Rademacher critical radii \citep{bartlett2005local,foster2023orthogonal}. The entropy of both the population classes and the training-conditional nuisance classes is controlled by the following two assumptions. Both assumptions below use the following notation: for a function class $\mathcal{C}$ with envelope $E_{\mathcal{C}}$ satisfying $\sup_{f\in\mathcal{C}}|f|\leq E_{\mathcal{C}}$ pointwise, $N(\varepsilon E_{\mathcal{C}},\mathcal{C},L^2(Q))$ denotes the $\varepsilon E_{\mathcal{C}}$-covering number of $\mathcal{C}$ in $L^2(Q)$, and $\sup_Q$ is taken over all finitely supported probability measures $Q$ on $\mathcal{X}\times\mathcal{Y}\times\mathcal{Z}$.

\begin{assumplist}[resume]
\item\label{ass:maximal-population-entropy}(Entropy of population classes)
Define the population function classes indexed by $\omega\in\Omega$: the structural map class $\mathcal{G}_\Omega=\{g_\omega:\omega\in\Omega\}$, its gradient class $\dot{\mathcal{G}}_\Omega=\{(\partial_{\omega_1}g_\omega,\ldots,\partial_{\omega_d}g_\omega):\omega\in\Omega\}$, the oracle inner solution class $\mathcal{H}^\star=\{h^\star_{\omega,P}:\omega\in\Omega\}$, and the oracle Jacobian class $\mathcal{J}^\star=\{j^\star_{\omega,P}:\omega\in\Omega\}$. There exist $p\in(0,2)$ and finite constants $K_{\mathcal{G}_\Omega},K_{\dot{\mathcal{G}}_\Omega},K_{\mathcal{H}^\star},K_{\mathcal{J}^\star}$ such that, for every $0<\varepsilon\leq1$,
\begin{equation*}
    \sup_Q\log N(\varepsilon E_{\mathcal{C}},\mathcal{C},L^2(Q))\leq K_{\mathcal{C}}\varepsilon^{-p},\qquad\mathcal{C}\in\{\mathcal{G}_\Omega,\dot{\mathcal{G}}_\Omega,\mathcal{H}^\star,\mathcal{J}^\star\},
\end{equation*}
with envelopes $E_{\mathcal{G}_\Omega}=E_{\mathcal{H}^\star}=B$ and $E_{\dot{\mathcal{G}}_\Omega}=E_{\mathcal{J}^\star}=D$.
\end{assumplist}

\begin{assumplist}[resume]
\item\label{ass:maximal-nuisance-entropy}(Entropy of training-conditional 
nuisance classes)
Conditionally on $S_1$, there are finite $S_1$-measurable constants $K_{\mathcal{H}_n},K_{\mathcal{J}_n},K_{\mathcal{M}_n}$ such that the learned nuisance classes $\mathcal{H}_n$, $\mathcal{J}_n$, $\mathcal{M}_n$ satisfy the same polynomial entropy bound with exponent $p\in(0,2)$: for every $0<\varepsilon\leq1$,
\begin{equation*}
    \sup_Q\log N(\varepsilon E_{\mathcal{C}},\mathcal{C},L^2(Q))\leq K_{\mathcal{C}}\varepsilon^{-p},\qquad\mathcal{C}\in\{\mathcal{H}_n,\mathcal{J}_n,\mathcal{M}_n\},
\end{equation*}
with envelopes $E_{\mathcal{H}_n}=\bar H_n$, $E_{\mathcal{J}_n}=\bar J_n$, and $E_{\mathcal{M}_n}=\bar M_n$.
\end{assumplist}
The condition $p\in(0,2)$ ensures that the entropy integral converges, which is the standard complexity condition in empirical-process theory \citep{vanderVaart1996weak,kosorok2008introduction}.

\begin{theorem}[Expected maximal inequality]
\label{thm:quadratic-maximal-inequality}
Under \Cref{assum:eif-integrability,ass:maximal-bounded-nuisance-classes,ass:maximal-population-entropy,ass:maximal-nuisance-entropy}, there exist constants $C>0$ depending only on $p$ and $K>0$ depending only on the population entropy constants such that, on $\mathcal{E}_{\rm tr}(r_h,r_j,r_m)$,
\begin{equation}
\label{eq:main-uniform-bound}
    \mathbb{E}\left[\sup_{\omega\in\Omega}\|\widehat\Psi_{\omega,2}^{DR}-\Psi_\omega(P)\|\,\middle|\,S_1\right]\leq\frac{C}{\sqrt{n}}\,(A+B)D\,\sqrt{1+K}+\mathfrak{C}_{B,n}(r_h,r_j,r_m)+r_j(r_h+r_m).
\end{equation}
\end{theorem}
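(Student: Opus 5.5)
We start from the three-term decomposition \eqref{eq:uniform-decomposition}, take the norm, then the supremum over $\omega\in\Omega$ and the conditional expectation given $S_1$. The triangle inequality gives three pieces, and we bound each one on $\mathcal{E}_{\rm tr}(r_h,r_j,r_m)$.

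\textbf{Population bias.} This term is deterministic given $S_1$. By the displayed bound $\|P\varphi_\omega(\cdot;h,j,m)-\Psi_\omega(P)\|\le\|j-j^\star_{\omega,P}\|(\|h-h^\star_{\omega,P}\|+\|m-m^\star_P\|)$, it is at most $r_j(r_h+r_m)$ uniformly in $\omega$. To verify that bound, write $\varphi_{k,\omega}$ from \eqref{eq:debiased-pseudo-outcome}. Condition on $X$, using $\mathbb{E}[Y\mid X]=m^\star_P$, $\mathbb{E}[g_\omega(Z)\mid X]=h^\star_{\omega,P}$ and $\mathbb{E}[\partial_{\omega_k}g_\omega\mid X]=j^\star_k$. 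The mean becomes $-\langle m^\star-h^\star,j_k\rangle-\langle j_k^\star,m-h\rangle+\langle j_k,m-h\rangle$. Subtracting $\Psi_{k,\omega}(P)=-\langle m^\star-h^\star,j^\star_k\rangle$ leaves $\langle j_k-j_k^\star,(m-m^\star)-(h-h^\star)\rangle$. Cauchy–Schwarz summed over $k$ gives the claim.

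\textbf{Nuisance empirical process.} Write $v=\sup_\omega\|\cdot\|$ for the vector-valued difference and use $\|v\|=\sup_{u\in S^{d-1}}u^\top v$. Conditionally on $S_1$, on $\mathcal{E}_{\rm tr}$ every triple $(\hat h_\omega,\hat j_\omega,\hat m)$ belongs to $\mathcal{H}_n\times\mathcal{J}_n\times\mathcal{M}_n$ by \Cref{ass:maximal-bounded-nuisance-classes} and lies within the localization radii. The associated function therefore belongs to $\mathcal{F}^{\rm loc}_{B,n}(r_h,r_j,r_m)$. Since $S_2$ is independent of $S_1$, the conditional expectation of the supremum is bounded by $\mathfrak{C}_{B,n}(r_h,r_j,r_m)$ by definition. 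The random nuisance paths depend only on $S_1$, so no further argument is needed.

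\textbf{Oracle empirical process.} This is the main step. We bound $\mathbb{E}\sup_{f\in\mathcal{F}_A}|(P_{n,2}-P)f|$ by symmetrization and Dudley's entropy integral with the uniform-entropy bound (van der Vaart–Wellner Thm.~2.14.1): $\lesssim n^{-1/2}\|F_A\|\,J(1,\mathcal{F}_A)$.

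First, the envelope. By \eqref{eq:debiased-pseudo-outcome} at the oracle nuisances, $|u^\top\varphi_\omega|\le(\|Y\|+\|g_\omega\|)\|j^\star(X)\|+\|\dot g_\omega\|(\|m^\star\|+\|h^\star\|)+\|j^\star\|(\|m^\star\|+\|h^\star\|)$. Jensen gives $\|h^\star\|\le B$, $\|m^\star\|\le A$ and $\sum_k\|j^\star_k\|^2\le D$. Hence $F_A\lesssim(A+B)D$, after absorbing $\sqrt D$ versus $D$ and constants into $C$, assuming $D\ge1$ or otherwise adjusting.

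Second, the entropy. $\mathcal{F}_A$ is built by sums and products of the classes $\mathcal{G}_\Omega,\dot{\mathcal{G}}_\Omega,\mathcal{H}^\star,\mathcal{J}^\star$, the fixed functions $Y,m^\star$, and the finite-dimensional sphere $S^{d-1}$. Standard permanence results for covering numbers of Lipschitz combinations of bounded classes (Lipschitz in each factor with constants given by envelopes) yield
\[
\log N(\varepsilon F_A,\mathcal{F}_A,L^2(Q))\lesssim\sum_{\mathcal C}K_{\mathcal C}(\varepsilon/c)^{-p}+d\log(c/\varepsilon).
\]
The obstacle is bookkeeping: the product structure couples the same $\omega$ across classes. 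Covering each class separately with its own index only enlarges the set, so this is fine, but we must track envelopes and the $u$-cover carefully. Since $p<2$, $\int_0^1\sqrt{1+K\varepsilon^{-p}}\,d\varepsilon\le C_p\sqrt{1+K}$, with the $d\log$ term absorbed into $C$ (or $K$). This gives the bound $\frac{C}{\sqrt n}(A+B)D\sqrt{1+K}$. Summing the three bounds yields \eqref{eq:main-uniform-bound}.
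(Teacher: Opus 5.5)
Your proposal is correct and follows essentially the same route as the paper. Both use the three-term decomposition, the tower-property identity $P\varphi_{k,\omega}-[\Psi_\omega(P)]_k=\langle j_k-j^\star_{k,\omega,P},(m-m^\star_P)-(h-h^\star_{\omega,P})\rangle_{\LL}$ for the product bias, the definitional bound by $\mathfrak{C}_{B,n}$ for the nuisance term, and a uniform-entropy maximal inequality for $\mathcal{F}_A$ with entropy transferred through sums, bounded products and sphere scalarization. Your $\sqrt{D}$-versus-$D$ caveat is legitimate: the paper's own Step~1 asserts $\|j^\star_{\omega,P}(X)\|_{2,d}\le D$, which follows only if \Cref{assum:eif-integrability} is read as $\sum_k\|\partial_{\omega_k}g_\omega(Z)\|^2\le D^2$ (matching the envelope conventions of \Cref{ass:maximal-bounded-nuisance-classes,ass:maximal-population-entropy}), and under that reading your envelope is exactly of order $(A+B)D$.
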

The three terms correspond to the decomposition \eqref{eq:uniform-decomposition}: oracle fluctuation, localized nuisance fluctuation, and population bias.

\begin{corollary}[Polynomial-entropy instantiation]
\label{cor:localized-polynomial-rate}
Under the conditions of Theorem~\ref{thm:quadratic-maximal-inequality}, suppose the nuisance envelopes and entropy constants remain bounded and $r_h,r_j,r_m=O(n^{-1/(2+p)})$. Then, on $\mathcal{E}_{\rm tr}(r_h,r_j,r_m)$,
\begin{equation}
\label{eq:polynomial-uniform-rate}
    \mathbb{E}\left[\sup_{\omega\in\Omega}\|\widehat\Psi_{\omega,2}^{DR}-\Psi_\omega(P)\|\,\middle|\,S_1\right]=O(n^{-1/2}).
\end{equation}
\end{corollary}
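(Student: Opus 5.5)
The plan is to plug the polynomial-entropy assumptions into the three terms of \eqref{eq:main-uniform-bound} and check that each is $O(n^{-1/2})$ when $r_h,r_j,r_m=O(n^{-1/(2+p)})$.

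\textbf{Oracle fluctuation and population bias.} The first term, $\frac{C}{\sqrt n}(A+B)D\sqrt{1+K}$, is already $O(n^{-1/2})$, since $A,B,D$ and $K$ are fixed population constants. For the population bias, $r_j(r_h+r_m)=O(n^{-2/(2+p)})$. Because $p<2$, we have $2/(2+p)>1/2$, so this term is $o(n^{-1/2})$.

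\textbf{Nuisance fluctuation.} The real work is bounding $\mathfrak{C}_{B,n}(r_h,r_j,r_m)$. I would proceed in four steps.

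\begin{enumerate}
\item Use the explicit form \eqref{eq:debiased-pseudo-outcome} to write each element of $\mathcal{F}^{\rm loc}_{B,n}$ as a sum of bilinear terms. Differencing against the oracle gives
\begin{multline*}
-\langle Y-g_\omega,\, j_k-j^\star_k\rangle-\langle\partial_{\omega_k}g_\omega,\,(m-m^\star)-(h-h^\star)\rangle\\
+\langle j_k-j^\star_k,\, m-h\rangle+\langle j^\star_k,\,(m-m^\star)-(h-h^\star)\rangle,
\end{multline*}
contracted with $u$.
\item Each summand is the product of a uniformly bounded factor and a localized nuisance error. The bounded factors are $Y-g_\omega$, $\partial g_\omega$, $j^\star$ and $m-h$, controlled by \Cref{assum:eif-integrability} and the bounded envelopes $\bar H_n,\bar J_n,\bar M_n$. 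Hence the $L^2(P)$ diameter of the class is $\sigma_n\lesssim r_j+r_h+r_m$.
\item Products and sums of classes with polynomial entropy have polynomial entropy with the same exponent $p$, with the constants adding; the sphere $S^{d-1}$ contributes only a $\log(1/\varepsilon)$ term. So \Cref{ass:maximal-population-entropy,ass:maximal-nuisance-entropy} give uniform entropy $\lesssim K'\varepsilon^{-p}$ for $\mathcal{F}^{\rm loc}_{B,n}$, with bounded envelope $F$.
\item Apply the local maximal inequality of van der Vaart--Wellner for uniformly bounded classes:
\[
\mathfrak{C}_{B,n}\lesssim \frac{J(\sigma_n/F)F}{\sqrt n}+\frac{J(\sigma_n/F)^2F}{\sigma_n^2\,n},\qquad J(\delta)\asymp\delta^{1-p/2}.
\]
This yields $\mathfrak{C}_{B,n}\lesssim n^{-1/2}\sigma_n^{1-p/2}+n^{-1}\sigma_n^{-p}$.
\end{enumerate}

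\textbf{Rate check.} With $\sigma_n\asymp n^{-1/(2+p)}$, the first part of the bound is $o(n^{-1/2})$. The second part is $n^{-1+p/(2+p)}=n^{-2/(2+p)}=o(n^{-1/2})$. The radius $n^{-1/(2+p)}$ is exactly the critical radius at which these two parts balance, which explains the hypothesis. If the radii are smaller than this critical radius, replace $\sigma_n$ by $\max(\sigma_n, n^{-1/(2+p)})$; the bound is monotone in the radius, so nothing is lost.

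Adding the three terms gives $O(n^{-1/2})$ on $\mathcal{E}_{\rm tr}$.

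\textbf{Main obstacle.} The hardest step is the entropy calculus for the product class. It has to hold uniformly over $\omega$ with \emph{coupled} indices: $h^\star_{\omega,P}$ and $g_\omega$ share the same $\omega$. I would bound covering numbers of $\{fg\}$ by the product of the covering numbers of $\{f\}$ and $\{g\}$, at scale $\varepsilon$ times the envelopes, ignoring the coupling; this only overcounts. One must also make sure that the localization radius, not the envelope, drives $\sigma_n$. This works because every term in the difference contains exactly one nuisance error factor.
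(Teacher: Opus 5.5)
Your proposal is correct. It agrees with the paper everywhere except in how $\mathfrak{C}_{B,n}$ is bounded. The shared parts are:
\begin{itemize}
\item the treatment of the oracle term and of the product bias $r_j(r_h+r_m)$;
\item the bilinear expansion of the score difference (your four terms regroup into the paper's three);
\item the radius $\rho_{B,n}\lesssim r_j+r_h+r_m$, which holds because each term carries exactly one nuisance error;
\item the decoupled product/sum/sphere entropy calculus, as in Steps~4--5 of the proof of Theorem~\ref{thm:quadratic-maximal-inequality}.
\end{itemize}

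For the nuisance fluctuation, the paper normalizes the localized class by $F_{B,n}$, adds $0$, and passes to the star hull. It then uses a standard local Rademacher bound $\mathfrak{R}_n\lesssim\sqrt{(1+K_{B,n})/n}\,\delta^{1-p/2}$ to get a critical radius $\delta_{B,n}\asymp n^{-1/(2+p)}$. Finally it invokes the Foster--Syrgkanis-type Lemma~\ref{lem:localized-osl-maximal}, which gives $\delta_{B,n}\rho_{B,n}+F_{B,n}\delta_{B,n}^2+F_{B,n}e^{-cn\delta_{B,n}^2}$.

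You instead apply the van der Vaart--Wellner local maximal inequality under uniform entropy directly. At $\sigma_n\asymp n^{-1/(2+p)}$ both of your terms equal $n^{-2/(2+p)}$, the same order as the paper's bound.

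Each route buys something different:
\begin{itemize}
\item Your route feeds Assumptions~\ref{ass:maximal-population-entropy} and~\ref{ass:maximal-nuisance-entropy} straight into a single published inequality. It stays in the same uniform-entropy language used for the oracle term, with no star hull, no separate entropy-to-Rademacher translation, and no exponential tail term.
\item The paper's critical-radius route is more modular. It needs only a local Rademacher bound, so it extends to nuisance learners (e.g.\ neural networks) controlled by Rademacher complexity rather than uniform entropy. It also displays the $\delta_n\rho$ product structure explicitly.
\end{itemize}

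Two small corrections.
\begin{itemize}
\item What is monotone in the radius is $\mathfrak{C}_{B,n}$ itself, by inclusion of the localized classes. Your bound is not monotone, since its second term decreases in $\sigma$. That class inclusion is what justifies enlarging the radius to $\max(\sigma_n,n^{-1/(2+p)})$.
\item With relative radius $\delta=\sigma/\|F\|$, the second term scales like $F^3J^2/(\sigma^2 n)$ rather than $FJ^2/(\sigma^2 n)$. This is harmless under the bounded-envelope hypothesis.
\end{itemize}
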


The proof of Theorem~\ref{thm:quadratic-maximal-inequality} and Corollary~\ref{cor:localized-polynomial-rate}, including the localized Rademacher bound for $\mathfrak C_{B,n}$, are deferred to Appendices~\ref{sec:appendix_quadratic_maximal}~and~\ref{sec:pr_cor}. Since $p<2$, the nuisance empirical process and product-bias terms are both $O(n^{-2/(2+p)})=o(n^{-1/2})$, so the uniform rate is driven by the oracle $\sqrt{n}$ fluctuation.

\paragraph{Optimization with the debiased gradient oracle.}
The uniform bound of Corollary~\ref{cor:localized-polynomial-rate} has a direct optimization consequence. By the triangle inequality, if an algorithm returns $\widehat\omega\in\Omega$ with $\|\widehat\Psi_{\widehat\omega,2}^{DR}\|\leq\tau_N$, then
\begin{equation}
\label{eq:stationarity-certificate}
    \|\Psi_{\widehat\omega}(P)\|\leq\tau_N+\sup_{\omega\in\Omega}\|\widehat\Psi_{\omega,2}^{DR}-\Psi_\omega(P)\|.
\end{equation}
Thus empirical stationarity for the debiased gradient implies population stationarity up to the uniform statistical error. This is the same role played by deterministic error control in inexact-gradient methods \citep{schmidt2011convergence}; a standard gradient descent consequence is given in Appendix~\ref{sec:appendix_optimization}.

\section{Numerical Experiments}
\label{sec:experiments}
\begingroup
\setlength{\textfloatsep}{5pt plus 1pt minus 2pt}
\setlength{\floatsep}{5pt plus 1pt minus 2pt}
\setlength{\intextsep}{5pt plus 1pt minus 2pt}
\setlength{\abovecaptionskip}{2pt}
\setlength{\belowcaptionskip}{0pt}
\captionsetup{skip=2pt}

We evaluate our proposed doubly robust estimator $\widehat\Psi^{DR}_\omega$ \eqref{eq:cf-debiased-estimator}, which we call \emph{OBiGrad} (orthogonal bilevel-gradient), on two synthetic benchmarks with known unregularized population bilevel gradients $\Psi_\omega(P)$ and quadratic losses \eqref{eq:quadratic-inner-loss}: an instrumental-variable (IV) benchmark with closed-form population gradients, and a fitted $Q$-evaluation (FQE) benchmark where the lower-level nuisance is the projected Bellman backup $Q_\omega(S,A)=\mathbb{E}[R+\gamma V_\omega(S')\mid S,A]$, with gradients computed accurately by quadrature. The FQE benchmark is a policy-evaluation regression problem, not a full fitted $Q$-iteration loop. In our experiments, feasible nuisance learners use only observed covariates: Fourier ridge features of $\sum_j X_j$ for IV and an observable basis in $(S,A)$ for FQE. We compare OBiGrad against the direct plug-in hypergradient, an oracle DR benchmark using the true nuisances $\eta^\star_\omega$ as an efficiency reference, and KBO \citep{elkhoury2025learning}. Since fixed-$\lambda$ KBO targets a regularized gradient $\Psi_{\omega,\lambda}(P)$ rather than the population target $\Psi_\omega(P)$, we report both its total error to $\Psi_\omega(P)$ and its decomposition into estimation error around $\Psi_{\omega,\lambda}(P)$ and regularization bias $\|\Psi_{\omega,\lambda}(P)-\Psi_\omega(P)\|$. Additional data generating process details, hyperparameters, full tables with Monte Carlo standard errors, and root-estimation experiments are reported in Appendix~\ref{app:experiment-details}. Our code is publicly available at \url{https://github.com/fareselkhoury/Semiparametric-Efficient-Bilevel-Gradient-Estimation}.

\begin{figure}[t]
    \centering
    \captionsetup[subfigure]{skip=1pt}
    \begin{subfigure}{0.45\linewidth}
        \centering
        \includegraphics[height=6cm]{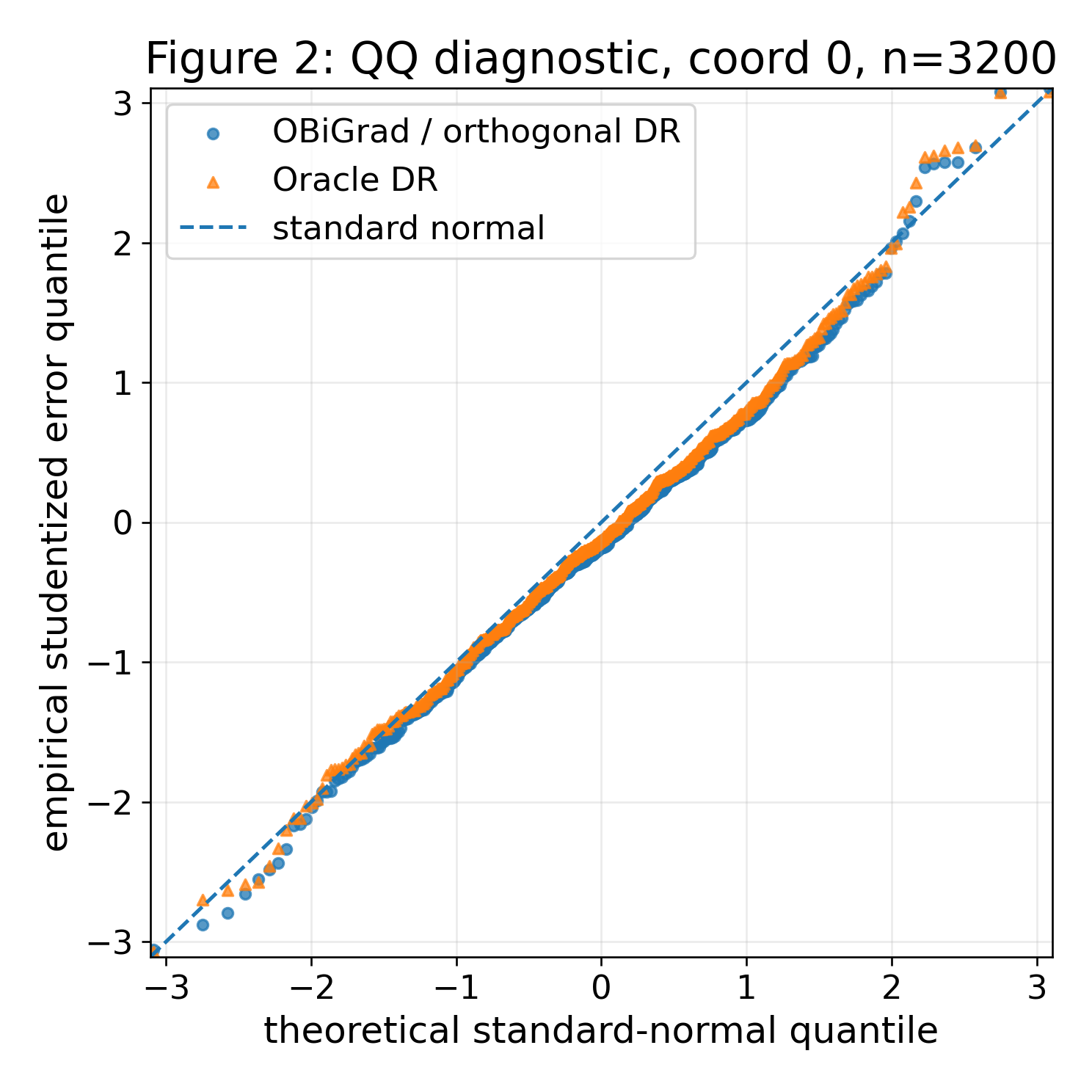}
        \caption{IV QQ.}
    \end{subfigure}
    \hfill
    \begin{subfigure}{0.48\linewidth}
        \centering
        \includegraphics[height=6cm]{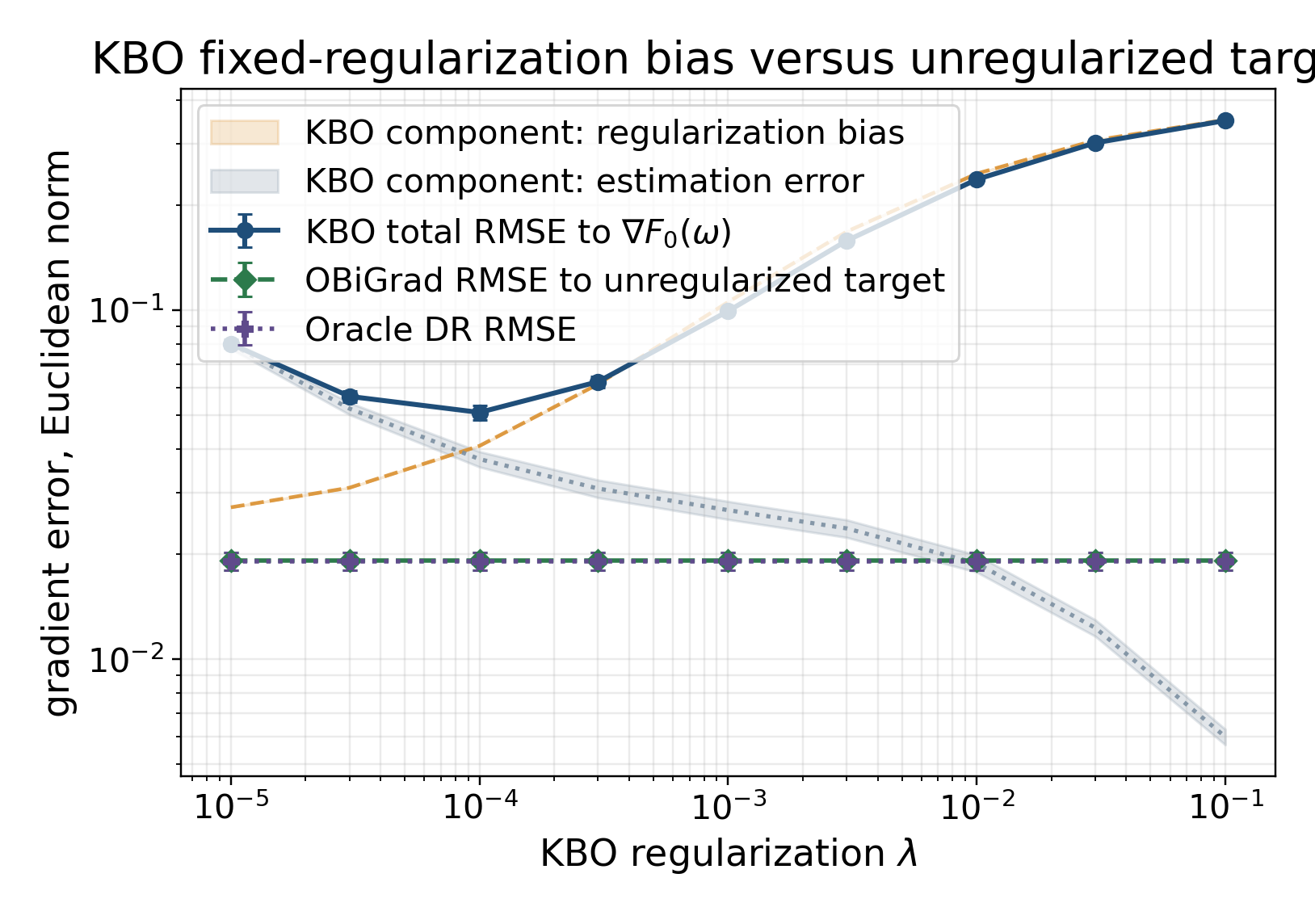}
        \caption{IV KBO.}
    \end{subfigure}

    \vspace{0.5em}

    \begin{subfigure}{0.45\linewidth}
        \centering
        \includegraphics[height=6cm]{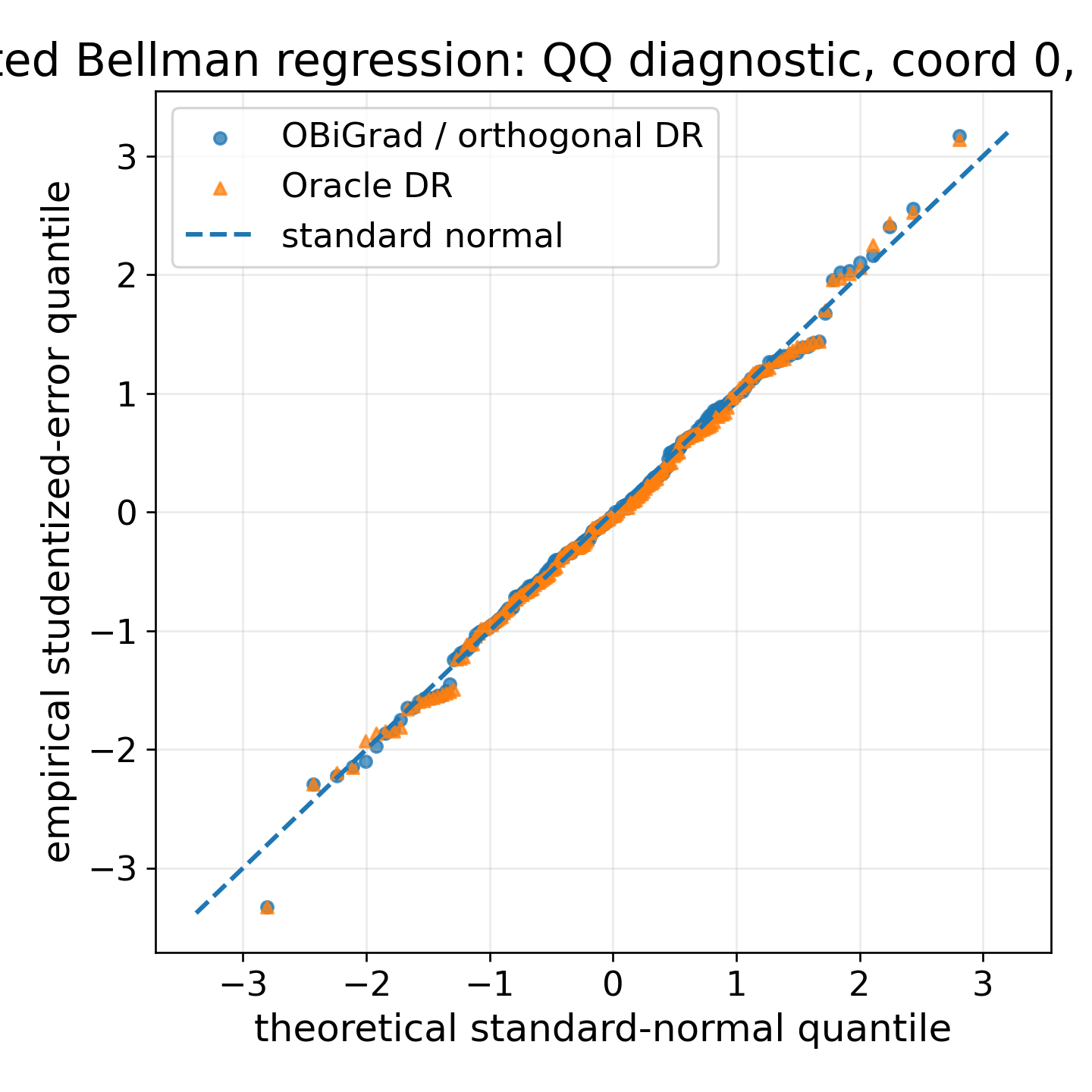}
        \caption{FQE QQ.}
    \end{subfigure}
    \hfill
    \begin{subfigure}{0.48\linewidth}
        \centering
        \includegraphics[height=6cm]{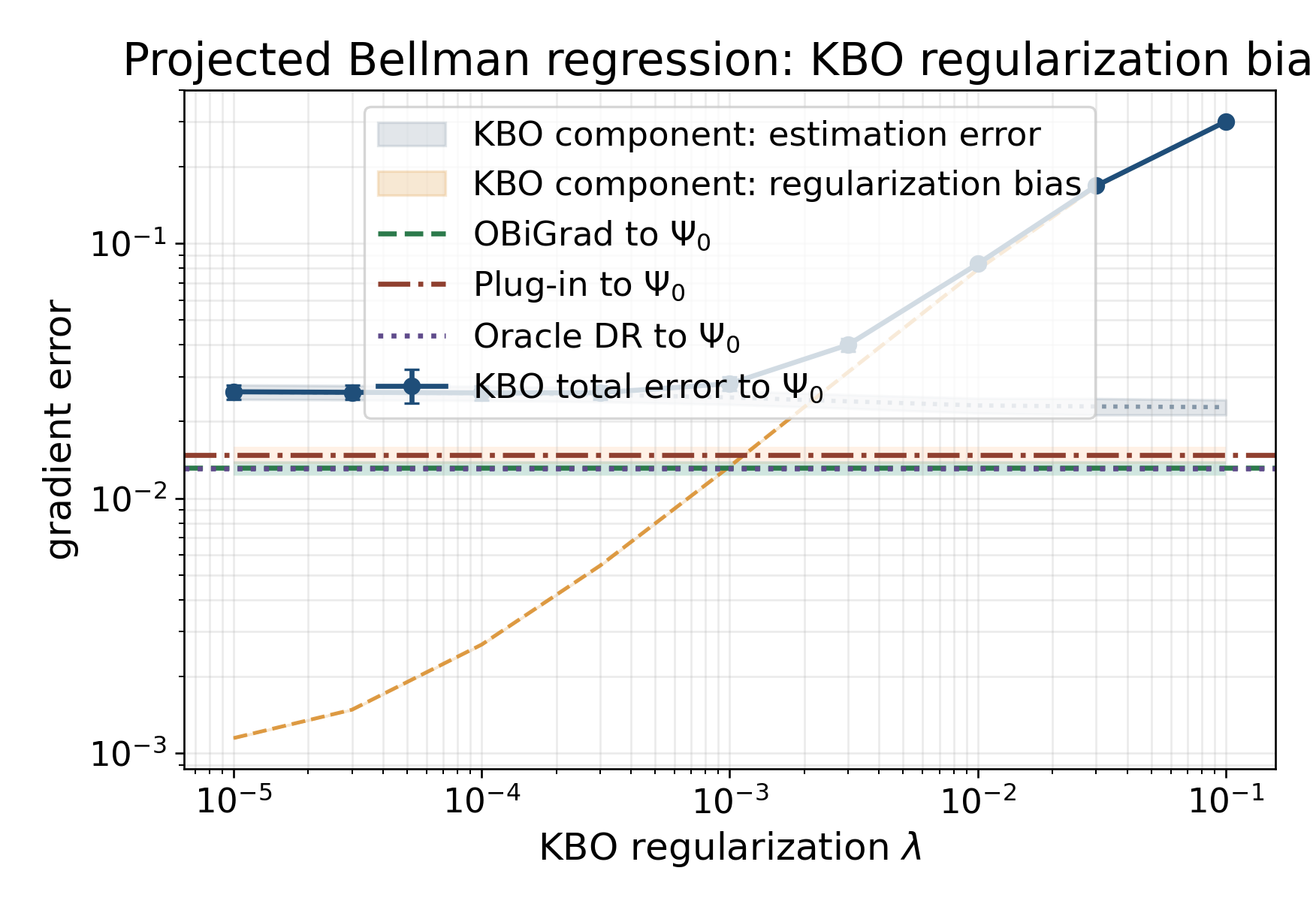}
        \caption{FQE KBO.}
    \end{subfigure}
    \caption{QQ plots for IV and FQE. KBO with error decomposition and 
    comparison to OBiGrad.}
    \label{fig:main-diagnostics}
\end{figure}

\paragraph{Gradient estimation.}
Table~\ref{tab:main-gradient-summary} reports the root mean squared error (RMSE) of $\widehat\Psi^{DR}_\omega$ relative to the population target $\Psi_\omega(P)$, measured in Euclidean norm. OBiGrad improves over the plug-in hypergradient at small and moderate sample sizes and converges toward the oracle DR benchmark as $N$ grows, consistent with the $\sqrt{N}$-asymptotic theory of Theorem~\ref{thm:CLT}. The gain is modest in the IV design and more pronounced in the FQE benchmark, where the nuisance regression is harder. For example, at $N=200$, the IV RMSE drops from $0.0441$ to $0.0388$, while in the FQE design it drops from $0.0862$ to $0.0486$. In both benchmarks, the product of nuisance estimation errors decreases with $N$, as predicted by the von Mises expansion \eqref{eq:vme}.

\begin{table}[ht]
\centering
\small
\renewcommand{\arraystretch}{0.88}
\setlength{\tabcolsep}{4pt}
\caption{Gradient estimation RMSE across IV and FQE. Error bars are reported in Appendix~\ref{app:experiment-details}.}
\label{tab:main-gradient-summary}
\begin{tabular}{rccc@{\qquad}ccc}
\toprule
& \multicolumn{3}{c}{IV} & \multicolumn{3}{c}{Fitted $Q$-evaluation} \\
\cmidrule(lr){2-4}\cmidrule(lr){5-7}
$N$
& PI & \textbf{OBiGrad} & Oracle
& PI & \textbf{OBiGrad} & Oracle \\
\midrule
200  & .0441 & \textbf{.0388} & .0359 & .0862 & \textbf{.0486} & .0314 \\
400  & .0251 & \textbf{.0249} & .0242 & .0511 & \textbf{.0262} & .0216 \\
800  & .0195 & \textbf{.0193} & .0190 & .0208 & \textbf{.0167} & .0157 \\
1600 & .0134 & \textbf{.0132} & .0131 & .0124 & \textbf{.0112} & .0112 \\
3200 & .0098 & \textbf{.0098} & .0097 & .0081 & \textbf{.0080} & .0079 \\
\bottomrule
\end{tabular}
\end{table}

\paragraph{Inference and Wald calibration.}
Table~\ref{tab:main-wald-summary} reports coordinate-wise empirical coverage of the $95\%$ Wald confidence intervals \eqref{eq:wald-ci} for OBiGrad and plug-in intervals. OBiGrad achieves close to nominal coverage in both benchmarks, while plug-in intervals are systematically shorter and undercover, consistent with the first-order plug-in bias that the efficient influence function correction \eqref{eq:eif} is designed to remove. The QQ plots in Figure~\ref{fig:main-diagnostics} support the Gaussian approximation: at $N=3200$, studentized OBiGrad errors are close to standard normal in the IV benchmark and reasonably close in FQE, with mild tail deviations attributable to the harder nuisance estimation in that benchmark.

\begin{table}[ht]
\centering
\small
\renewcommand{\arraystretch}{0.88}
\setlength{\tabcolsep}{4pt}
\caption{$95\%$ empirical coverage and length. Error bars are reported in Appendix~\ref{app:experiment-details}.}
\label{tab:main-wald-summary}
\begin{tabular}{rcccccccc}
\toprule
& \multicolumn{2}{c}{IV coverage} & \multicolumn{2}{c}{IV length}
& \multicolumn{2}{c}{FQE coverage} & \multicolumn{2}{c}{FQE length} \\
\cmidrule(lr){2-3}\cmidrule(lr){4-5}\cmidrule(lr){6-7}\cmidrule(lr){8-9}
$N$ & \textbf{OBiGrad} & PI & \textbf{OBiGrad} & PI & \textbf{OBiGrad} & PI & \textbf{OBiGrad} & PI \\
\midrule
200  & \textbf{.949} & .881 & \textbf{.0765} & .0660 & \textbf{.934} & .912 & \textbf{.1232} & .1163 \\
400  & \textbf{.964} & .915 & \textbf{.0532} & .0486 & \textbf{.951} & .922 & \textbf{.0630} & .0613 \\
800  & \textbf{.955} & .920 & \textbf{.0373} & .0357 & \textbf{.955} & .908 & \textbf{.0389} & .0395 \\
1600 & \textbf{.950} & .909 & \textbf{.0263} & .0255 & \textbf{.954} & .897 & \textbf{.0253} & .0260 \\
3200 & \textbf{.951} & .937 & \textbf{.0186} & .0180 & \textbf{.926} & .883 & \textbf{.0176} & .0177 \\
\bottomrule
\end{tabular}
\end{table}

\paragraph{KBO regularization bias.}
Figure~\ref{fig:main-diagnostics} decomposes KBO error into estimation error and regularization bias $\|\Psi_{\omega,\lambda}(P)-\Psi_\omega(P)\|$. Small $\lambda$ reduces regularization bias but increases estimation variance; large $\lambda$ stabilizes estimation but shifts the target away from $\Psi_\omega(P)$. This tradeoff is pronounced in the FQE benchmark, where KBO total error to $\Psi_\omega(P)$ remains above OBiGrad across the regularization grid. OBiGrad avoids this fixed-regularization bias entirely by targeting $\Psi_\omega(P)$ directly.

\section{Conclusion}
\label{sec:discussion}
We studied the population bilevel gradient $\Psi_\omega(P)$ as a statistical functional and used the efficient influence function to construct OBiGrad, a cross-fitted orthogonal estimator that removes first-order nuisance bias via the efficient influence function correction \eqref{eq:eif}. The experiments show that plug-in hypergradients retain first-order nuisance bias, as identified by the von Mises expansion \eqref{eq:vme}, while fixed-regularization kernel methods target a regularized gradient $\Psi_{\omega,\lambda}(P)$ rather than $\Psi_\omega(P)$; OBiGrad instead estimates the unregularized population gradient directly. Limitations remain: our guarantees concern gradient estimation and approximate stationarity, not a full end-to-end analysis of arbitrary bilevel algorithms, and they rely on nuisance learners satisfying the consistency, product-rate, and entropy conditions of \Cref{assum:second-order-rates,ass:maximal-nuisance-entropy}. Verifying these conditions for highly adaptive neural learners and extending the method to infinite-horizon reinforcement learning, meta-learning, and other structured bilevel problems remain natural next steps.
\endgroup

\subsubsection*{Acknowledgements}
Fares El Khoury and Michael Arbel are supported by the ANR project BONSAI (grant ANR-23-CE23-0012-01). Houssam Zenati is supported by the Gatsby Charitable Foundation.

\bibliography{references}
\newpage
\appendix
\section*{Appendix}

\addtocontents{toc}{\protect\setcounter{tocdepth}{2}}
{
    \hypersetup{linkcolor=black}
    \tableofcontents
}

\section{Efficient Influence Function}
\label{sec:appendix_EIF}

\begin{proof}[Proof of Theorem~\ref{thm:EIF}]
Fix $k\in\{1,\dots,d\}$. Let $(P_\epsilon)_\epsilon$ be a regular parametric submodel through $P$ with score $S\in L_0^2(P)$, meaning that $S$ satisfies $PS=0$ and $PS^2<\infty$. Recall the score identity: for any square-integrable function $f$,
\begin{equation*}
    \left.\frac{\dd}{\dd\epsilon}P_\epsilon f\right|_{\epsilon=0}=P[fS],
\end{equation*}
which follows by differentiating under the integral sign. Since $P_\epsilon$ is a probability distribution for all $\epsilon$, differentiating $\int \dd P_\epsilon=1$ at $\epsilon=0$ gives $PS=0$, so the score is always mean-zero under $P$.

\noindent Our goal is to find $\chi_{k,P,\omega}\in L^2_0(P)$ such that
\begin{equation*}
    \left.\frac{\dd}{\dd\epsilon}\Psi_{k,\omega}(P_\epsilon)\right|_{\epsilon=0}=\langle\chi_{k,P,\omega},S\rangle_{L^2(P)},\qquad\forall S\in L^2_0(P).
\end{equation*}
In the nonparametric model, the tangent space is all of $L^2_0(P)$, so this condition uniquely determines $\chi_{k,P,\omega}$, which is the efficient influence function.

\paragraph{Decomposition.}
For any $h,v\in\LL$, define $\Gamma_{k,\omega}(h,v)\coloneqq\partial_h\ell^{out}_\omega(h)(v)$. We differentiate
\begin{equation*}
    \Psi_{k,\omega}(P_\epsilon)=P_\epsilon\,\Gamma_{k,\omega}(h^\star_{\omega,P_\epsilon},j^\star_{k,\omega,P_\epsilon}).
\end{equation*}
There are three sources of $\epsilon$-dependence: the measure $P_\epsilon$, the inner solution $h^\star_{\omega,P_\epsilon}$, and the Jacobian $j^\star_{k,\omega,P_\epsilon}$. Denote their derivatives at $\epsilon=0$ by
\begin{equation*}
    \dot h\coloneqq\left.\frac{\dd}{\dd\epsilon}h^\star_{\omega,P_\epsilon}\right|_{\epsilon=0},\qquad\dot j_k\coloneqq\left.\frac{\dd}{\dd\epsilon}j^\star_{k,\omega,P_\epsilon}\right|_{\epsilon=0}.
\end{equation*}
By the chain rule,
\begin{equation*}
\begin{aligned}
    \left.\frac{\dd}{\dd\epsilon}\Psi_{k,\omega}(P_\epsilon)\right|_{\epsilon=0}
    &=\underbrace{
        \left.\frac{\dd}{\dd\epsilon}
        P_\epsilon\,\Gamma_{k,\omega}(h^\star_{\omega,P},j^\star_{k,\omega,P})
        \right|_{\epsilon=0}
    }_{\text{(A): measure perturbation}}\\
    &\quad+\underbrace{
        P\,\partial_1\Gamma_{k,\omega}(h^\star_{\omega,P},j^\star_{k,\omega,P})[\dot h]
    }_{\text{(B): effect on }h^\star_{\omega,P_\epsilon}}\\
    &\quad+\underbrace{
        P\,\partial_2\Gamma_{k,\omega}(h^\star_{\omega,P},j^\star_{k,\omega,P})[\dot j_k]
    }_{\text{(C): effect on }j^\star_{k,\omega,P_\epsilon}}.
\end{aligned}
\end{equation*}

\paragraph{Term (A): measure perturbation.}
By the score identity,
\begin{equation*}
    A=\left\langle\Gamma_{k,\omega}(h^\star_{\omega,P},j^\star_{k,\omega,P}),S\right\rangle_{L^2(P)}.
\end{equation*}
This is the plug-in term: the direct contribution from the measure shifting.

\paragraph{Term (B): effect of $h^\star_{\omega,P_\epsilon}$ changing.}
Since $h^\star_{\omega,P_\epsilon}$ minimizes $P_\epsilon\,\ell^{\mathrm{in}}_\omega(h)$ for all $\epsilon$, it satisfies the first-order condition
\begin{equation*}
    P_\epsilon\,\partial_h\ell^{\mathrm{in}}_\omega(h^\star_{\omega,P_\epsilon})(g)=0,\qquad\forall g\in\LL.
\end{equation*}
Differentiating this identity with respect to $\epsilon$ at $\epsilon=0$, and applying the score identity to the $P_\epsilon$-dependence and the chain rule to the $h^\star_{\omega,P_\epsilon}$-dependence, gives
\begin{equation*}
    \left\langle\partial_h\ell^{\mathrm{in}}_\omega(h^\star_{\omega,P})(g),S\right\rangle_{L^2(P)}+P\,\partial_h^2\ell^{\mathrm{in}}_\omega(h^\star_{\omega,P})(\dot h,g)=0.
\end{equation*}
This expresses $P\,\partial_h^2\ell^{\mathrm{in}}_\omega(\dot h,g)$ as a score inner product. To connect this to term (B), we use the Riesz representer $\alpha_{1,k,P}$, defined by
\begin{equation*}
    P\,\partial_h^2\ell^{\mathrm{in}}_\omega(h^\star_{\omega,P})(\alpha_{1,k,P},u)=P\,\partial_1\Gamma_{k,\omega}(h^\star_{\omega,P},j^\star_{k,\omega,P})[u],\qquad\forall u\in\LL.
\end{equation*}
Setting $g=\alpha_{1,k,P}$ in the differentiated first-order condition and using symmetry of the Hessian bilinear form,
\begin{equation*}
\begin{aligned}
    B&=P\,\partial_1\Gamma_{k,\omega}(h^\star_{\omega,P},j^\star_{k,\omega,P})[\dot h]\\
    &=P\,\partial_h^2\ell^{\mathrm{in}}_\omega(h^\star_{\omega,P})(\alpha_{1,k,P},\dot h)\\
    &=P\,\partial_h^2\ell^{\mathrm{in}}_\omega(h^\star_{\omega,P})(\dot h,\alpha_{1,k,P})\\
    &=-\left\langle\partial_h\ell^{\mathrm{in}}_\omega(h^\star_{\omega,P})(\alpha_{1,k,P}),
    S\right\rangle_{L^2(P)}.
\end{aligned}
\end{equation*}

\paragraph{Term (C): effect of $j^\star_{k,\omega,P_\epsilon}$ changing.}
The first-order condition holds along the submodel for all $\epsilon$:
\begin{equation*}
    P_\epsilon\,\partial_h\ell^{\mathrm{in}}_\omega(h^\star_{\omega,P_\epsilon})(g)=0,\qquad\forall g\in\LL.
\end{equation*}
Differentiating with respect to $\omega_k$ at the fixed $P$, and using the chain rule with $j^\star_{k,\omega,P}=D_{\omega_k}h^\star_{\omega,P}$, gives
\begin{equation*}
    P\,\partial^2_{\omega_k,h}\ell^{\mathrm{in}}_\omega(h^\star_{\omega,P})(g)+P\,\partial_h^2\ell^{\mathrm{in}}_\omega(h^\star_{\omega,P})(j^\star_{k,\omega,P},g)=0,\qquad\forall g\in\LL.
\end{equation*}
Now differentiate this identity along $P_\epsilon$ at $\epsilon=0$. There are three sources of $\epsilon$-dependence: $P_\epsilon$, $h^\star_{\omega,P_\epsilon}$, and $j^\star_{k,\omega,P_\epsilon}$. The third-order derivative terms arising from differentiating through $h^\star_{\omega,P_\epsilon}$ vanish identically under the quadratic inner loss, since $\ell^{\mathrm{in}}_\omega$ is quadratic in $h$ and all its derivatives of order three or higher are zero. Applying the score identity to the $P_\epsilon$-dependence and the chain rule to $j^\star_{k,\omega,P_\epsilon}$, we obtain
\begin{equation*}
    \left\langle\partial^2_{\omega_k,h}\ell^{\mathrm{in}}_\omega(h^\star_{\omega,P})(g),S\right\rangle_{L^2(P)}+\left\langle\partial_h^2\ell^{\mathrm{in}}_\omega(h^\star_{\omega,P})(j^\star_{k,\omega,P},g),S\right\rangle_{L^2(P)}+P\,\partial_h^2\ell^{\mathrm{in}}_\omega(h^\star_{\omega,P})(\dot j_k,g)=0.
\end{equation*}
This expresses $P\,\partial_h^2\ell^{\mathrm{in}}_\omega(\dot j_k,g)$ as a score inner product. Setting $g=\alpha_{2,P}$ and using symmetry of the Hessian,
\begin{equation*}
\begin{aligned}
    C&=P\,\partial_2\Gamma_{k,\omega}(h^\star_{\omega,P},j^\star_{k,\omega,P})[\dot j_k]\\
    &=P\,\partial_h^2\ell^{\mathrm{in}}_\omega(h^\star_{\omega,P})(\alpha_{2,P},\dot j_k)\\
    &=P\,\partial_h^2\ell^{\mathrm{in}}_\omega(h^\star_{\omega,P})(\dot j_k,\alpha_{2,P})\\
    &=-\left\langle\partial^2_{\omega_k,h}\ell^{\mathrm{in}}_\omega(h^\star_{\omega,P})(\alpha_{2,P})+\partial_h^2\ell^{\mathrm{in}}_\omega(h^\star_{\omega,P})(j^\star_{k,\omega,P},\alpha_{2,P}),S\right\rangle_{L^2(P)}.
\end{aligned}
\end{equation*}

\paragraph{Combining and centering.}
Adding terms (A), (B), and (C),
\begin{equation*}
    \left.\frac{\dd}{\dd\epsilon}\Psi_{k,\omega}(P_\epsilon)\right|_{\epsilon=0}=\langle\tilde\chi_{k,P,\omega},S\rangle_{L^2(P)},
\end{equation*}
where
\begin{equation*}
\begin{aligned}
    \tilde\chi_{k,P,\omega}
    &=\Gamma_{k,\omega}(h^\star_{\omega,P},j^\star_{k,\omega,P})-\partial_h\ell^{\mathrm{in}}_\omega(h^\star_{\omega,P})(\alpha_{1,k,P})\\
    &\quad-\partial^2_{\omega_k,h}\ell^{\mathrm{in}}_\omega(h^\star_{\omega,P})(\alpha_{2,P})-\partial_h^2\ell^{\mathrm{in}}_\omega(h^\star_{\omega,P})(j^\star_{k,\omega,P},\alpha_{2,P}).
\end{aligned}
\end{equation*}
Since $PS=0$, we may subtract any constant from $\tilde\chi_{k,P,\omega}$ without changing the inner product with $S$. Subtracting $\Psi_{k,\omega}(P)$ centers the function and ensures $P\chi_{k,P,\omega}=0$, giving the canonical gradient
\begin{equation*}
\begin{aligned}
    \chi_{k,P,\omega}
    &=\Gamma_{k,\omega}(h^\star_{\omega,P},j^\star_{k,\omega,P})-\Psi_{k,\omega}(P)-\partial_h\ell^{\mathrm{in}}_\omega(h^\star_{\omega,P})(\alpha_{1,k,P})\\
    &\quad-\partial^2_{\omega_k,h}\ell^{\mathrm{in}}_\omega(h^\star_{\omega,P})(\alpha_{2,P})-\partial_h^2\ell^{\mathrm{in}}_\omega(h^\star_{\omega,P})(j^\star_{k,\omega,P},\alpha_{2,P}).
\end{aligned}
\end{equation*}
By Assumption~\ref{assum:eif-integrability}, $\chi_{k,P,\omega}\in L^2_0(P)$. Since the nonparametric tangent space is all of $L^2_0(P)$, the identity $\left.\frac{\dd}{\dd\epsilon}\Psi_{k,\omega}(P_\epsilon)\right|_{\epsilon=0} =\langle\chi_{k,P,\omega},S\rangle_{L^2(P)}$ holds for every $S\in L^2_0(P)$, and $\chi_{k,P,\omega}$ is the unique element of $L^2_0(P)$ satisfying this, hence it is the efficient influence function. Stacking coordinates, $\chi_{P,\omega}=(\chi_{1,P,\omega},\ldots,\chi_{d,P,\omega})^\top$ is the vector-valued efficient influence function for $\Psi_\omega(P)$.
\end{proof}

\section{Functional von Mises Expansion}
\label{sec:appendix_VME}

\begin{proposition}
\label{prop:psi-smoothness}
Under the quadratic specialization \eqref{eq:quadratic-inner-loss}, for each $k=1,\ldots,d$, the map $(h,v)\mapsto\psi_{k,P}(h,v)$ is Fréchet differentiable on all of $\LL\times\LL$, with globally Lipschitz Fréchet derivative of constant $1$.
\end{proposition}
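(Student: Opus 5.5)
We will write $\psi_{k,P}$ explicitly and observe that it is a continuous bilinear-plus-linear form on $\LL\times\LL$; the proposition then follows from elementary bilinear calculus. Under the quadratic specialization \eqref{eq:quadratic-inner-loss}, $\partial_h\ell^{\mathrm{out}}_\omega(h)(O)(v)=-\langle Y-h(X),v(X)\rangle$. Hence
\begin{equation*}
    \psi_{k,P}(h,v)=-P\langle Y,v(X)\rangle+\langle h,v\rangle_{\LL}=-\langle m^\star_P,v\rangle_{\LL}+\langle h,v\rangle_{\LL}.
\end{equation*}
The second equality comes from conditioning on $X$, with $m^\star_P(X)=\mathbb{E}_P[Y\mid X]$. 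This conditioning is legitimate because $\|Y\|\le A$ by \Cref{assum:eif-integrability}. As a consequence, $m^\star_P\in\LL$, and $v\mapsto -\langle m^\star_P,v\rangle_{\LL}$ is a bounded linear functional. Notably, $\psi_{k,P}$ does not depend on $k$ except through which argument is plugged in.

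We then compute the Fréchet derivative directly. For increments $(\delta h,\delta v)$ we have
\begin{equation*}
    \psi_{k,P}(h+\delta h,v+\delta v)-\psi_{k,P}(h,v)=\langle \delta h,v\rangle_{\LL}+\langle h-m^\star_P,\delta v\rangle_{\LL}+\langle\delta h,\delta v\rangle_{\LL}.
\end{equation*}
By Cauchy--Schwarz, the last term is bounded by $\|\delta h\|\,\|\delta v\|\le \tfrac12(\|\delta h\|^2+\|\delta v\|^2)=o(\|(\delta h,\delta v)\|)$. Therefore $D\psi_{k,P}(h,v)[\delta h,\delta v]=\langle\delta h,v\rangle_{\LL}+\langle h-m^\star_P,\delta v\rangle_{\LL}$, which is a bounded linear map on the product space, equipped with the norm $\|(a,b)\|^2=\|a\|_{\LL}^2+\|b\|_{\LL}^2$. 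Differentiability therefore holds at every point of $\LL\times\LL$. The same computation gives the partial derivatives $\partial_1\psi_{k,P}(h,v)[u]=\langle v,u\rangle$ and $\partial_2\psi_{k,P}(h,v)[u]=\langle h-m^\star_P,u\rangle$, which agree with the forms used for the Riesz representers up to the paper's sign conventions.

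For the Lipschitz claim, we identify $D\psi_{k,P}(h,v)$ with the vector $(v,\,h-m^\star_P)\in\LL\times\LL$ via Riesz. Then
\begin{equation*}
    \|D\psi_{k,P}(h,v)-D\psi_{k,P}(h',v')\|_{\mathrm{op}}=\|(v-v',\,h-h')\|=\|(h-h',v-v')\|.
\end{equation*}
The derivative is thus an isometric swap, which is Lipschitz with constant exactly $1$. The only point needing care is the choice of norm on the product space: using the Euclidean product norm gives constant $1$, whereas the max or sum norm would change the constant. The same inequality $\|\delta h\|\,\|\delta v\|\le\frac12(\cdot)$ also yields the remainder bound \eqref{eq:vme-remainder} later. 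Beyond this, there is no real obstacle; the only nontrivial input is the integrability ensuring $m^\star_P\in\LL$.
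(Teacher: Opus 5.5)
Your proof is correct and follows essentially the same route as the paper: write $\psi_{k,P}$ explicitly as a bilinear-plus-linear form on $\LL\times\LL$, read off the derivative, and bound its variation by Cauchy--Schwarz in the Euclidean product norm. Your direct treatment of the joint increment, showing the cross term $\langle\delta h,\delta v\rangle_{\LL}$ is $o(\|(\delta h,\delta v)\|)$, justifies joint Fr\'echet differentiability more carefully than the paper's appeal to existence of bounded partial derivatives, and replacing $Y$ by $m^\star_P$ via conditioning is a harmless difference.
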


\begin{proof}
Fix $k\in\{1,\ldots,d\}$. Under the quadratic outer loss $\ell^{\mathrm{out}}_\omega(h)(X,Y)=\frac{1}{2}\|Y-h(X)\|^2$, we have
\begin{equation*}
    \psi_{k,P}(h,v)=P\,\partial_h\ell^{\mathrm{out}}_\omega(h)(v)=-\mathbb{E}_P[\langle Y-h(X),v(X)\rangle].
\end{equation*}
We can write this as
\begin{equation*}
    \psi_{k,P}(h,v)=-\mathbb{E}_P[\langle Y,v(X)\rangle]+\mathbb{E}_P[\langle h(X),v(X)\rangle],
\end{equation*}
which is affine in $h$ and linear in $v$.

\paragraph{Fr\'echet differentiability.}
For any $\delta h\in\LL$,
\begin{equation*}
    \psi_{k,P}(h+\delta h,v)-\psi_{k,P}(h,v)=\mathbb{E}_P[\langle\delta h(X),v(X)\rangle]\eqqcolon\partial_1\psi_{k,P}(h,v)[\delta h],
\end{equation*}
with remainder zero, so $\psi_{k,P}$ is exactly linear in $h$. The functional $\partial_1\psi_{k,P}(h,v)$ is bounded since by Cauchy-Schwarz,
\begin{equation*}
    |\partial_1\psi_{k,P}(h,v)[\delta h]|\leq\|v\|_{\LL}\|\delta h\|_{\LL}.
\end{equation*}
For any $\delta v\in\LL$,
\begin{equation*}
    \psi_{k,P}(h,v+\delta v)-\psi_{k,P}(h,v)=-\mathbb{E}_P[\langle Y-h(X),\delta v(X)\rangle]\eqqcolon\partial_2\psi_{k,P}(h,v)[\delta v],
\end{equation*}
with remainder zero, so $\psi_{k,P}$ is exactly linear in $v$. The functional $\partial_2\psi_{k,P}(h,v)$ is bounded since by Cauchy-Schwarz,
\begin{equation*}
    |\partial_2\psi_{k,P}(h,v)[\delta v]|\leq\|Y-h(X)\|_{L^2(P)}\|\delta v\|_{\LL}.
\end{equation*}
Since both partial Fr\'echet derivatives exist and are bounded, $\psi_{k,P}$ is Fr\'echet differentiable on all of $\LL\times\LL$.

\paragraph{Lipschitz derivative.}
For any $(h_1,v_1),(h_2,v_2)\in\LL\times\LL$, by Cauchy-Schwarz,
\begin{align*}
    \|\partial_1\psi_{k,P}(h_1,v_1)-\partial_1\psi_{k,P}(h_2,v_2)\|_{\mathrm{op}}&\leq\|v_1-v_2\|_{\LL},\\
    \|\partial_2\psi_{k,P}(h_1,v_1)-\partial_2\psi_{k,P}(h_2,v_2)\|_{\mathrm{op}}&\leq\|h_1-h_2\|_{\LL}.
\end{align*}
Combining,
\begin{equation*}
    \|D\psi_{k,P}(h_1,v_1)-D\psi_{k,P}(h_2,v_2)\|_{\mathrm{op}}\leq\|(h_1-h_2,v_1-v_2)\|_{\LL\times\LL},
\end{equation*}
so the Fr\'echet derivative is globally Lipschitz with constant $1$.
\end{proof}

\begin{lemma}[Coordinatewise Taylor expansion]
\label{lem:quadratic_expansion}
Fix $k\in\{1,\dots,d\}$. Under the quadratic specialization \eqref{eq:quadratic-inner-loss}, for all $h,v\in\LL$,
\begin{equation*}
\begin{aligned}
    &\psi_{k,P}(h^\star_{\omega,P}+h,j^\star_{k,\omega,P}+v)-\psi_{k,P}(h^\star_{\omega,P},j^\star_{k,\omega,P})\\
    &\quad=\partial_1\psi_{k,P}(h^\star_{\omega,P},j^\star_{k,\omega,P})[h]+\partial_2\psi_{k,P}(h^\star_{\omega,P},j^\star_{k,\omega,P})[v]+\mathrm{Rem}_{k}(h,v),
\end{aligned}
\end{equation*}
where
\begin{equation*}
    |\mathrm{Rem}_{k}(h,v)|\leq \frac{1}{2}\left(\|h\|_{\LL}^2+\|v\|_{\LL}^2\right).
\end{equation*}
\end{lemma}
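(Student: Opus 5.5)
The plan is to exploit the explicit bilinear structure of $\psi_{k,P}$ established in Proposition~\ref{prop:psi-smoothness}, namely
\begin{equation*}
    \psi_{k,P}(h,v)=-\mathbb{E}_P[\langle Y,v(X)\rangle]+\mathbb{E}_P[\langle h(X),v(X)\rangle],
\end{equation*}
which is affine in $h$ and linear in $v$. Because of this, the Taylor expansion around $(h^\star_{\omega,P},j^\star_{k,\omega,P})$ is exact at second order and can be computed directly, with no limiting argument or mean-value theorem needed.

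First, I expand $\psi_{k,P}(h^\star_{\omega,P}+h,j^\star_{k,\omega,P}+v)$ by multilinearity. The term $-\mathbb{E}_P[\langle Y,j^\star_{k,\omega,P}(X)+v(X)\rangle]$ splits linearly. The term $\mathbb{E}_P[\langle h^\star_{\omega,P}(X)+h(X),j^\star_{k,\omega,P}(X)+v(X)\rangle]$ splits into four pieces. After subtracting $\psi_{k,P}(h^\star_{\omega,P},j^\star_{k,\omega,P})$, three groups remain.
\begin{itemize}
    \item The piece $\mathbb{E}_P[\langle h(X),j^\star_{k,\omega,P}(X)\rangle]$. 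By the formula for $\partial_1\psi_{k,P}$ in Proposition~\ref{prop:psi-smoothness}, this is exactly $\partial_1\psi_{k,P}(h^\star_{\omega,P},j^\star_{k,\omega,P})[h]$.
    \item The pieces $-\mathbb{E}_P[\langle Y-h^\star_{\omega,P}(X),v(X)\rangle]$. These equal $\partial_2\psi_{k,P}(h^\star_{\omega,P},j^\star_{k,\omega,P})[v]$.
    \item The cross term $\mathbb{E}_P[\langle h(X),v(X)\rangle]$, which I define to be $\mathrm{Rem}_k(h,v)$.
\end{itemize}
Since every expectation is over $X$ only, apart from the $Y$ term, these are $\LL$ inner products. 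Finiteness is guaranteed by $h,v,h^\star_{\omega,P},j^\star_{k,\omega,P}\in\LL$ together with the boundedness of $Y$ from \Cref{assum:eif-integrability}.

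Second, I bound the remainder. By Cauchy--Schwarz in $\LL$,
\begin{equation*}
    |\mathrm{Rem}_k(h,v)|=|\langle h,v\rangle_{\LL}|\le\|h\|_{\LL}\|v\|_{\LL}.
\end{equation*}
Then the AM--GM inequality $ab\le\tfrac12(a^2+b^2)$ gives the stated bound $\tfrac12(\|h\|_{\LL}^2+\|v\|_{\LL}^2)$.

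There is no genuine obstacle here. The only point requiring care is the bookkeeping that identifies the linear pieces with the Fr\'echet derivatives at the oracle point, not at the perturbed point. In particular, the $\partial_2$ term must carry $h^\star_{\omega,P}$, not $h^\star_{\omega,P}+h$, so that the mixed term $\langle h,v\rangle$ is correctly isolated as the remainder.

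As a consistency check, the Lipschitz-derivative property of Proposition~\ref{prop:psi-smoothness} also implies the generic bound $\tfrac12\|(h,v)\|^2$ via the integral form of Taylor's theorem. However, the direct computation is cleaner and exhibits the remainder explicitly.
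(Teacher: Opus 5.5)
Your proof is correct, but it takes a more direct route than the paper. The paper never computes the remainder. It appeals to Proposition~\ref{prop:psi-smoothness}, which says the Fr\'echet derivative of $\psi_{k,P}$ is globally $1$-Lipschitz on $\LL\times\LL$. It then writes the increment via the integral form of Taylor's theorem with $z^\star=(h^\star_{\omega,P},j^\star_{k,\omega,P})$ and $r=(h,v)$, and bounds $\bigl|\int_0^1\{D\psi_{k,P}(z^\star+tr)-D\psi_{k,P}(z^\star)\}[r]\,\mathrm{d}t\bigr|\le\int_0^1 t\|r\|^2\,\mathrm{d}t=\frac12\|r\|^2$. You instead use bilinearity to identify the remainder exactly as $\mathrm{Rem}_k(h,v)=\langle h,v\rangle_{\LL}$, and finish with Cauchy--Schwarz and $ab\le\frac12(a^2+b^2)$.

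The paper's route uses only a Lipschitz bound on $D\psi_{k,P}$. It is therefore the one that would extend to smooth non-quadratic outer losses, locally, in the spirit of the neighborhood $\mathcal N_k$ in Theorem~\ref{thm:VME}. Your route is elementary and more informative. The intermediate bound $|\mathrm{Rem}_k(h,v)|\le\|h\|_{\LL}\|v\|_{\LL}$ is a genuine product of the two nuisance errors. In the proof of Theorem~\ref{thm:CLT}, the term $E_r$ could then be controlled by the product condition $\|\hat h_\omega^{(-r)}-h^\star_{\omega,P}\|_{\LL}\|\hat j_\omega^{(-r)}-j^\star_{\omega,P}\|_{\LL^d}=o_p(N^{-1/2})$. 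The lemma's sum-of-squares bound instead forces each error to be $o_p(N^{-1/4})$ separately.
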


\begin{proof}
Let $z^\star=(h^\star_{\omega,P},j^\star_{k,\omega,P})$ and $r=(h,v)$. By Proposition~\ref{prop:psi-smoothness}, $\psi_{k,P}$ is Fréchet differentiable on all of $\LL\times\LL$ with globally Lipschitz derivative of constant $1$. By the integral form of Taylor's theorem,
\begin{equation*}
    \psi_{k,P}(z^\star+r)-\psi_{k,P}(z^\star)=\int_0^1 D\psi_{k,P}(z^\star+tr)[r]\,\dd t.
\end{equation*}
Adding and subtracting $D\psi_{k,P}(z^\star)[r]$ gives
\begin{equation*}
    \psi_{k,P}(z^\star+r)-\psi_{k,P}(z^\star)=D\psi_{k,P}(z^\star)[r]+\int_0^1\left\{D\psi_{k,P}(z^\star+tr)-D\psi_{k,P}(z^\star)\right\}[r]\,\dd t,
\end{equation*}
where the first term equals
\begin{equation*}
    D\psi_{k,P}(z^\star)[r]=\partial_1\psi_{k,P}(h^\star_{\omega,P},j^\star_{k,\omega,P})[h]+\partial_2\psi_{k,P}(h^\star_{\omega,P},j^\star_{k,\omega,P})[v].
\end{equation*}
For the remainder, the Lipschitz bound from Proposition~\ref{prop:psi-smoothness} gives
\begin{equation*}
    \left|\int_0^1\left\{D\psi_{k,P}(z^\star+tr)-D\psi_{k,P}(z^\star)\right\}[r]\,\dd t\right|\leq\int_0^1\|D\psi_{k,P}(z^\star+tr)-D\psi_{k,P}(z^\star)\|_{\mathrm{op}}\|r\|\,\dd t\leq\int_0^1 t\|r\|^2\,\dd t=\frac{1}{2}\|r\|^2,
\end{equation*}
so
\begin{equation*}
    |\mathrm{Rem}_{k}(h,v)|\leq\frac{1}{2}\|(h,v)\|_{\LL\times\LL}^2\leq\frac{1}{2}\left(\|h\|_{\LL}^2+\|v\|_{\LL}^2\right).
\end{equation*}
\end{proof}

\begin{lemma}
\label{lem:exact_parh_lin}
Under the quadratic inner-loss specialization \eqref{eq:quadratic-inner-loss}, for all $h,f,g\in\LL$,
\begin{equation*}
    P\,\partial_h\ell^{\mathrm{in}}_\omega(h+f)(g)=P\,\partial_h\ell^{\mathrm{in}}_\omega(h)(g)+P\,\partial_h^2\ell^{\mathrm{in}}_\omega(h)(f,g).
\end{equation*}
\end{lemma}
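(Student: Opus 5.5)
The identity is exact, with no remainder, because under \eqref{eq:quadratic-inner-loss} the inner loss is a quadratic polynomial in $h$, so its first derivative is affine in $h$. The plan is to compute the pointwise derivatives explicitly and then integrate against $P$.

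First, for fixed $(X,Z)$ and any $h,g\in\LL$, I expand
\begin{equation*}
    \ell^{\mathrm{in}}_\omega(h+tg)(X,Z)=\tfrac12\|h(X)-g_\omega(Z)\|^2+t\langle h(X)-g_\omega(Z),g(X)\rangle+\tfrac{t^2}{2}\|g(X)\|^2 .
\end{equation*}
Reading off the coefficients gives the pointwise derivatives
\begin{equation*}
    \partial_h\ell^{\mathrm{in}}_\omega(h)(X,Z)(g)=\langle h(X)-g_\omega(Z),g(X)\rangle,\qquad \partial_h^2\ell^{\mathrm{in}}_\omega(h)(X,Z)(f,g)=\langle f(X),g(X)\rangle .
\end{equation*}
The second derivative does not depend on $h$, and all higher derivatives vanish.

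Second, I substitute $h+f$ for $h$ in the first-derivative formula:
\begin{equation*}
    \partial_h\ell^{\mathrm{in}}_\omega(h+f)(g)=\langle h(X)-g_\omega(Z),g(X)\rangle+\langle f(X),g(X)\rangle .
\end{equation*}
This is exactly $\partial_h\ell^{\mathrm{in}}_\omega(h)(g)+\partial_h^2\ell^{\mathrm{in}}_\omega(h)(f,g)$, pointwise in $O$.

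Third, I apply $P$ to both sides using linearity of the expectation. The only step needing care is integrability, so that each expectation is finite and linearity applies. By Cauchy--Schwarz and \Cref{assum:eif-integrability}, $|\langle h(X)-g_\omega(Z),g(X)\rangle|\le(\|h(X)\|+B)\|g(X)\|$, and this bound has finite $P$-expectation because $h,g\in\LL$. Similarly, $|\langle f(X),g(X)\rangle|$ is integrable, since its expectation is at most $\|f\|_{\LL}\|g\|_{\LL}$.

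I do not expect a real obstacle here. The only subtlety is notational: $\partial_h\ell^{\mathrm{in}}_\omega$ must be interpreted as the pointwise derivative in $h(X)$, consistent with its use in the proof of Theorem~\ref{thm:EIF}. One should also check that this pointwise derivative coincides with the Fréchet derivative of $h\mapsto P\,\ell^{\mathrm{in}}_\omega(h)$ on $\LL$, which follows from the exact quadratic expansion above.
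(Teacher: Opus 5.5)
Your proposal is correct and takes the same approach as the paper. The paper's proof is the one-line remark that the identity is immediate because $\ell^{\mathrm{in}}_\omega$ is quadratic in $h$; your explicit computation of the pointwise first and second derivatives, followed by linearity of $P$, writes out that remark, and your integrability check is a detail the paper leaves implicit.
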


\begin{proof}
This follows immediately from the fact that $\ell^{in}_\omega$ is quadratic in $h$.
\end{proof}

\begin{lemma}
\label{lem:exact_cross_quad}
Under the quadratic inner-loss specialization \eqref{eq:quadratic-inner-loss}, for all $h,f,g\in\LL$ and all $k=1,\dots,d$,
\begin{equation*}
    P\,\partial^2_{\omega_k,h}\ell^{\mathrm{in}}_\omega(h+f)(g)=P\,\partial^2_{\omega_k,h}\ell^{\mathrm{in}}_\omega(h)(g).
\end{equation*}
\end{lemma}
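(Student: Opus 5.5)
The identity is a direct computation with the explicit inner loss \eqref{eq:quadratic-inner-loss}. We write out the mixed partial derivative and observe that it does not depend on $h$.

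First, the pointwise derivative in direction $g$ is
\begin{equation*}
\partial_h\ell^{\mathrm{in}}_\omega(h)(X,Z)(g)=\langle h(X)-g_\omega(Z),g(X)\rangle .
\end{equation*}
Differentiating with respect to $\omega_k$ at fixed $h$ and $g$ gives
\begin{equation*}
\partial^2_{\omega_k,h}\ell^{\mathrm{in}}_\omega(h)(X,Z)(g)=-\langle\partial_{\omega_k}g_\omega(Z),g(X)\rangle .
\end{equation*}
Differentiation under the expectation is legitimate because, by \Cref{assum:eif-integrability}, $\partial_{\omega_k}g_\omega$ is bounded uniformly in $\omega$, so dominated convergence applies. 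The right-hand side has no dependence on $h$. Evaluating at $h+f$ instead of $h$ therefore yields the same pointwise function, and taking expectations under $P$ gives
\begin{equation*}
P\,\partial^2_{\omega_k,h}\ell^{\mathrm{in}}_\omega(h+f)(g)=-\mathbb{E}_P\langle\partial_{\omega_k}g_\omega(Z),g(X)\rangle=P\,\partial^2_{\omega_k,h}\ell^{\mathrm{in}}_\omega(h)(g).
\end{equation*}
The expectation is finite by Cauchy--Schwarz: $\|\partial_{\omega_k}g_\omega(Z)\|^2\le D$ almost surely and $g\in\LL$.

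The same conclusion follows structurally. The loss $\ell^{\mathrm{in}}_\omega$ is quadratic in $h$ with Hessian equal to the identity, which is independent of $\omega$. Hence $\partial_h\ell^{\mathrm{in}}_\omega$ is affine in $h$, its $h$-linear part does not involve $\omega$, and the mixed derivative is constant in $h$. This is the analogue of Lemma~\ref{lem:exact_parh_lin} one order higher.

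The only point needing care is justifying the interchange of $\partial_{\omega_k}$ and $P$, which the uniform bound in \Cref{assum:eif-integrability} handles. Otherwise the result is immediate.
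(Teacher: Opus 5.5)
Your proof is correct and matches the paper's: both compute the pointwise mixed derivative $-\langle\partial_{\omega_k}g_\omega(Z),g(X)\rangle$, observe that it does not depend on $h$, and then take expectations under $P$. Your remark about interchanging $\partial_{\omega_k}$ and $P$ is harmless but unnecessary, since $P\,\partial^2_{\omega_k,h}\ell^{\mathrm{in}}_\omega(h)(g)$ is by definition the expectation of the pointwise derivative; \Cref{assum:eif-integrability} is needed only to guarantee that this expectation is finite.
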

\begin{proof}
Under the quadratic inner loss, the mixed derivative is
\begin{equation*}
    \partial^2_{\omega_k,h}\ell^{\mathrm{in}}_\omega(h)(g)(X,Z)=-\langle\partial_{\omega_k}g_\omega(Z),g(X)\rangle,
\end{equation*}
which does not depend on $h$. Hence $\partial^2_{\omega_k,h}\ell^{\mathrm{in}}_\omega(h+f)(g)=\partial^2_{\omega_k,h}\ell^{\mathrm{in}}_\omega(h)(g)$.
\end{proof}

We now prove Theorem~\ref{thm:VME}.

\begin{proof}[Proof of Theorem~\ref{thm:VME}]
Fix $k\in\{1,\dots,d\}$.

\paragraph{Step 1: First-order expansion of $\psi_{k,P}$.}By Lemma~\ref{lem:quadratic_expansion}, applied at the base point $(h^\star_{\omega,P},j^\star_{k,\omega,P})$,
\begin{equation*}
\begin{aligned}
    &\psi_{k,P}(h,v)-\psi_{k,P}(h^\star_{\omega,P},j^\star_{k,\omega,P})\\
    &=\partial_1\psi_{k,P}(h^\star_{\omega,P},j^\star_{k,\omega,P})[h-h^\star_{\omega,P}]+\partial_2\psi_{k,P}(h^\star_{\omega,P},j^\star_{k,\omega,P})[v-j^\star_{k,\omega,P}]+\mathrm{Rem}_{k}(h-h^\star_{\omega,P},v-j^\star_{k,\omega,P}),
\end{aligned}
\end{equation*}
where 
\begin{equation*}
    |\mathrm{Rem}_{k}(h-h^\star_{\omega,P},v-j^\star_{k,\omega,P})|\leq\frac{1}{2}(\|h-h^\star_{\omega,P}\|_{\LL}^2+\|v-j^\star_{k,\omega,P}\|_{\LL}^2).
\end{equation*}
By the definitions of $\alpha_{1,k,P}$ and $\alpha_{2,P}$, the first-order terms equal $\langle\alpha_{1,k,P},h-h^\star_{\omega,P}\rangle_{\LL}$ and $\langle\alpha_{2,P},v-j^\star_{k,\omega,P}\rangle_{\LL}$ respectively, so
\begin{equation}
\label{eq:vme_step1}
\begin{aligned}
    &\psi_{k,P}(h,v)-\psi_{k,P}(h^\star_{\omega,P},j^\star_{k,\omega,P})\\
    &=\langle\alpha_{1,k,P},h-h^\star_{\omega,P}\rangle_{\LL}+\langle\alpha_{2,P},v-j^\star_{k,\omega,P}\rangle_{\LL}+\mathrm{Rem}_{k}(h-h^\star_{\omega,P},v-j^\star_{k,\omega,P}).
\end{aligned}
\end{equation}

\paragraph{Step 2: Expanding the correction terms.}
We now express each correction term on the left-hand side of \eqref{eq:vme} in terms of $\langle\cdot,\cdot\rangle_{\LL}$.

\textit{First correction: $P\,\partial_h\ell^{\mathrm{in}}_\omega(h)(\alpha_{1,k})$.}
By Lemma~\ref{lem:exact_parh_lin},
\begin{equation*}
    P\,\partial_h\ell^{\mathrm{in}}_\omega(h)(\alpha_{1,k})=P\,\partial_h\ell^{\mathrm{in}}_\omega(h^\star_{\omega,P})(\alpha_{1,k})+\langle h-h^\star_{\omega,P},\alpha_{1,k}\rangle_{\LL}.
\end{equation*}
Since $h^\star_{\omega,P}$ satisfies the first-order condition, the first term vanishes, giving
\begin{equation}
\label{eq:vme_corr1}
    P\,\partial_h\ell^{\mathrm{in}}_\omega(h)(\alpha_{1,k})=\langle h-h^\star_{\omega,P},\alpha_{1,k}\rangle_{\LL}.
\end{equation}

\textit{Second and third corrections: $P\,\partial^2_{\omega_k,h}\ell^{\mathrm{in}}_\omega(h)(\alpha_2)$ and $P\,\partial_h^2\ell^{\mathrm{in}}_\omega(h)(v,\alpha_2)$.}
By Lemma~\ref{lem:exact_cross_quad}, the mixed derivative does not depend on $h$, so
\begin{equation*}
    P\,\partial^2_{\omega_k,h}\ell^{\mathrm{in}}_\omega(h)(\alpha_2)=P\,\partial^2_{\omega_k,h}\ell^{\mathrm{in}}_\omega(h^\star_{\omega,P})(\alpha_2).
\end{equation*}
Since $\partial_h^2\ell^{\mathrm{in}}_\omega$ is constant in $h$ under the quadratic loss,
\begin{equation*}
    P\,\partial_h^2\ell^{\mathrm{in}}_\omega(h)(v,\alpha_2)=\langle v,\alpha_2\rangle_{\LL}.
\end{equation*}
Differentiating the first-order condition with respect to $\omega_k$ and using the chain rule gives
\begin{equation*}
    P\,\partial^2_{\omega_k,h}\ell^{\mathrm{in}}_\omega(h^\star_{\omega,P})(u)+\langle j^\star_{k,\omega,P},u\rangle_{\LL}=0,\qquad\forall u\in\LL.
\end{equation*}
Taking $u=\alpha_2$ and combining the two corrections,
\begin{equation}
\label{eq:vme_corr23}
    P\,\partial^2_{\omega_k,h}\ell^{\mathrm{in}}_\omega(h)(\alpha_2)+P\,\partial_h^2\ell^{\mathrm{in}}_\omega(h)(v,\alpha_2)=\langle v-j^\star_{k,\omega,P},\alpha_2\rangle_{\LL}.
\end{equation}

\paragraph{Step 3: Assembling the expansion.}
Subtracting \eqref{eq:vme_corr1} and \eqref{eq:vme_corr23} from \eqref{eq:vme_step1}, and using symmetry of $\langle\cdot,\cdot\rangle_{\LL}$,
\begin{equation*}
\begin{aligned}
    &\psi_{k,P}(h,v)-\psi_{k,P}(h^\star_{\omega,P},j^\star_{k,\omega,P})-P\,\partial_h\ell^{\mathrm{in}}_\omega(h)(\alpha_{1,k})-P\,\partial^2_{\omega_k,h}\ell^{\mathrm{in}}_\omega(h)(\alpha_2)-P\,\partial_h^2\ell^{\mathrm{in}}_\omega(h)(v,\alpha_2)\\
    &=\langle\alpha_{1,k,P}-\alpha_{1,k},h-h^\star_{\omega,P}\rangle_{\LL}+\langle\alpha_{2,P}-\alpha_2,v-j^\star_{k,\omega,P}\rangle_{\LL}+\mathrm{Rem}_{k}(h-h^\star_{\omega,P},v-j^\star_{k,\omega,P}).
\end{aligned}
\end{equation*}
\end{proof}

\section{Asymptotic Normality}
\label{sec:appendix_CLT}
\begin{proof}[Proof of Theorem~\ref{thm:CLT}]
For $r=1,2$, write $\hat\eta_r\coloneqq\hat\eta_\omega^{(-r)}$ for brevity. By definition of the two-fold estimator and since $P\,\varphi_\omega(\cdot;\eta^\star_\omega)=\Psi_\omega(P)$,
\begin{equation*}
    \widehat{\Psi}^{\,DR}_\omega-\Psi_\omega(P)=\frac{1}{2}\sum_{r=1}^2\left\{P_{n,r}\,\varphi_\omega(\cdot;\hat\eta_r)-P\,\varphi_\omega(\cdot;\eta^\star_\omega)\right\}.
\end{equation*}
Adding and subtracting $P\,\varphi_\omega(\cdot;\hat\eta_r)$ and $P_{n,r}\,\varphi_\omega(\cdot;\eta^\star_\omega)$ for each fold $r$ yields the decomposition
\begin{equation*}
    \widehat{\Psi}^{\,DR}_\omega-\Psi_\omega(P)=A_N+B_N+R_N,
\end{equation*}
where
\begin{align*}
    A_N&\coloneqq\frac{1}{2}\sum_{r=1}^2(P_{n,r}-P)\chi_{P,\omega},\\
    B_N&\coloneqq\frac{1}{2}\sum_{r=1}^2(P_{n,r}-P)\left(\chi_{\hat\eta_r,\omega}-\chi_{P,\omega}\right),\\
    R_N&\coloneqq\frac{1}{2}\sum_{r=1}^2\left\{P\,\varphi_\omega(\cdot;\hat\eta_r)-P\,\varphi_\omega(\cdot;\eta^\star_\omega)\right\}.
\end{align*}
We show that $\sqrt{N}\,A_N\xrightarrow{d}\mathcal{N}(0,\Sigma_{P,\omega})$ and $B_N,R_N=o_p(N^{-1/2})$.

\paragraph{Term $A_N$.}
Since the two folds are independent and each has size $n=N/2$,
\begin{equation*}
    A_N=\frac{1}{N}\sum_{r=1}^2\sum_{O_i\in S_r}\chi_{P,\omega}(O_i).
\end{equation*}
Under \Cref{assum:eif-integrability}, each term of the efficient influence function \eqref{eq:eif} is square-integrable. Indeed, the almost sure bounds $\|Y\|\leq A$, $\sup_\omega\|g_\omega(Z)\|\leq B$, and $\sup_{\omega\in\Omega}\sum_{k=1}^{d}\|\partial_{\omega_k}g_\omega(Z)\|^2 \leq D$ imply that $h^\star_{\omega,P}$, $j^\star_{k,\omega,P}$, and $\alpha_{1,k,P}=-j^\star_{k,\omega,P}$ are all bounded in $\LL$, while $\alpha_{2,P}=-a^\star_{\omega,P}\in\LL$ follows from the adjoint equation \eqref{eq:adjoint-equation}. Together these imply $P\|\chi_{P,\omega}\|_2^2<\infty$. Therefore the multivariate central limit theorem gives
\begin{equation*}
    \sqrt{N}\,A_N=\frac{1}{\sqrt{N}}\sum_{r=1}^2\sum_{O_i\in S_r}\chi_{P,\omega}(O_i)\xrightarrow{d}\mathcal{N}(0,\Sigma_{P,\omega}).
\end{equation*}

\paragraph{Term $B_N$.}
Fix $r\in\{1,2\}$ and condition on the opposite fold $S_{3-r}$, so that $\hat\eta_r$ is fixed and $S_r$ is independent of $S_{3-r}$. Since both scores are mean-zero under $P$,
\begin{equation*}
    P\left[\chi_{\hat\eta_r,\omega}-\chi_{P,\omega}\right]=0.
\end{equation*}
By the conditional variance bound,
\begin{equation*}
    \mathbb{E}\left[\left\|\sqrt{n}(P_{n,r}-P)\left(\chi_{\hat\eta_r,\omega}-\chi_{P,\omega}\right)\right\|^2\,\middle|\,S_{3-r}\right]\leq\left\|\chi_{\hat\eta_r,\omega}-\chi_{P,\omega}\right\|_{L^2(P)}^2=o_p(1),
\end{equation*}
where the last equality follows from Remark~\ref{assum:score-stability}. Therefore $(P_{n,r}-P)(\chi_{\hat\eta_r,\omega}-\chi_{P,\omega})=o_p(n^{-1/2})$, and summing over both folds gives $B_N=o_p(N^{-1/2})$.

\paragraph{Term $R_N$.}
We decompose the population remainder coordinatewise. For each $r=1,2$ and $k=1,\dots,d$, Theorem~\ref{thm:VME} gives
\begin{equation*}
    P\,\varphi_\omega(\cdot;\hat\eta_r)-P\,\varphi_\omega(\cdot;\eta^\star_\omega)=C_r+D_r+E_r,
\end{equation*}
where $C_r=(C_{r,1},\dots,C_{r,d})^\top$, $D_r=(D_{r,1},\dots,D_{r,d})^\top$, $E_r=(E_{r,1},\dots,E_{r,d})^\top$, and
\begin{align*}
    C_{r,k}&\coloneqq\left\langle\alpha_{1,k,P}-\hat\alpha_{1,k,\omega}^{(-r)},\hat h_\omega^{(-r)}-h^\star_{\omega,P}\right\rangle_{\LL},\\
    D_{r,k}&\coloneqq\left\langle\alpha_{2,P}-\hat\alpha_{2,\omega}^{(-r)},\hat j_{k,\omega}^{(-r)}-j^\star_{k,\omega,P}\right\rangle_{\LL},\\
    E_{r,k}&\coloneqq\mathrm{Rem}_{k}\left(\hat h_\omega^{(-r)}-h^\star_{\omega,P},\hat j_{k,\omega}^{(-r)}-j^\star_{k,\omega,P}\right).
\end{align*}
By the Cauchy-Schwarz inequality,
\begin{align*}
    \|C_r\|&\leq\left\|\hat\alpha_{1,\omega}^{(-r)}-\alpha_{1,P}\right\|_{\LL^d}\left\|\hat h_\omega^{(-r)}-h^\star_{\omega,P}\right\|_{\LL}=o_p(N^{-1/2}),\\
    \|D_r\|&\leq\left\|\hat\alpha_{2,\omega}^{(-r)}-\alpha_{2,P}\right\|_{\LL}\left\|\hat j_\omega^{(-r)}-j^\star_{\omega,P}\right\|_{\LL^d}=o_p(N^{-1/2}),
\end{align*}
by the first two lines of Assumption~\ref{assum:second-order-rates}. For the remainder, Lemma~\ref{lem:quadratic_expansion} gives
\begin{equation*}
    \|E_r\|\leq\frac{1}{2}\left(\left\|\hat h_\omega^{(-r)}-h^\star_{\omega,P}\right\|_{\LL}^2+\left\|\hat j_\omega^{(-r)}-j^\star_{\omega,P}\right\|_{\LL^d}^2\right)=o_p(N^{-1/2}),
\end{equation*}
by the third line of Assumption~\ref{assum:second-order-rates}. Summing over both folds gives $R_N=o_p(N^{-1/2})$.

\paragraph{Conclusion.}
Combining the three terms,
\begin{equation*}
    \sqrt{N}\left(\widehat{\Psi}^{\,DR}_\omega-\Psi_\omega(P)\right)=\frac{1}{\sqrt{N}}\sum_{r=1}^2\sum_{O_i\in S_r}\chi_{P,\omega}(O_i)+o_p(1).
\end{equation*}
The central limit theorem applied to $A_N$ and Slutsky's theorem give
\begin{equation*}
    \sqrt{N}\left(\widehat{\Psi}^{\,DR}_\omega-\Psi_\omega(P)\right)\xrightarrow{d}\mathcal{N}(0,\Sigma_{P,\omega}).
\end{equation*}
\end{proof}

\section{Uniform Control and Optimization}
\label{sec:appendix_uniform}

This appendix contains the empirical-process details behind Section~\ref{sec:maximal-inequality}. We first collect the auxiliary entropy and localized-complexity lemmas, then prove the uniform maximal inequality, its polynomial-entropy corollary, and the optimization consequence.

\subsection{Auxiliary empirical-process lemmas}
\label{sec:appendix_auxiliary_lemmas}
Throughout this subsection, write $\|f\|_{Q,2}\coloneqq(\int\|f\|^2\,dQ)^{1/2}$ for the $L^2(Q)$ norm of a function $f$, and $N(\delta,\mathcal{C},L^2(Q))$ for the $\delta$-covering number of $\mathcal{C}$ with respect to the $L^2(Q)$ metric, \emph{i.e.}, the minimum number of $L^2(Q)$-balls of radius $\delta$ needed to cover $\mathcal{C}$. Its logarithm $\log N(\delta,\mathcal{C},L^2(Q))$ is called the metric entropy of $\mathcal{C}$. For a class $\mathcal{C}$ with envelope $C$ satisfying $\sup_{f\in\mathcal{C}}\|f\|\leq C$ pointwise, define
\begin{equation*}
    \bar H(\varepsilon;\mathcal{C},C)\coloneqq\sup_Q\log N(\varepsilon\|C\|_{Q,2},\mathcal{C},L^2(Q)),\qquad 0<\varepsilon\leq1,
\end{equation*}
where the supremum is over all finitely supported probability measures $Q$.

\begin{lemma}[Entropy of sums and differences]
\label{lem:entropy-sums}
Let $\mathcal C_1$ and $\mathcal C_2$ be classes of measurable functions taking values in the same finite-dimensional normed space, with nonnegative envelopes $C_1$ and $C_2$. Then $C_1+C_2$ is an envelope for $\mathcal C_1\pm\mathcal C_2=\{f_1\pm f_2:f_1\in\mathcal C_1,\ f_2\in\mathcal C_2\}$, and
\begin{equation*}
    \bar H(\varepsilon;\mathcal C_1\pm\mathcal C_2,C_1+C_2)\le\bar H(\varepsilon/2;\mathcal C_1,C_1)+\bar H(\varepsilon/2;\mathcal C_2,C_2).
\end{equation*}
\end{lemma}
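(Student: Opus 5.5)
The plan is a direct covering argument. Fix a finitely supported probability measure $Q$ and $0<\varepsilon\le1$. First, the envelope claim is pointwise: for $f_1\in\mathcal C_1$, $f_2\in\mathcal C_2$, the triangle inequality in the normed space gives $\|f_1\pm f_2\|\le\|f_1\|+\|f_2\|\le C_1+C_2$.

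For the entropy bound, take a minimal $\tfrac{\varepsilon}{2}\|C_1\|_{Q,2}$-cover $\{f_1^{(a)}\}$ of $\mathcal C_1$ and a minimal $\tfrac{\varepsilon}{2}\|C_2\|_{Q,2}$-cover $\{f_2^{(b)}\}$ of $\mathcal C_2$ in $L^2(Q)$. Their cardinalities are bounded by $\exp\bar H(\varepsilon/2;\mathcal C_1,C_1)$ and $\exp\bar H(\varepsilon/2;\mathcal C_2,C_2)$, since $Q$ is among the measures in each supremum. Given $f_1\pm f_2$, pick the nearest centers; Minkowski's inequality in $L^2(Q)$ then gives
\begin{equation*}
\|(f_1\pm f_2)-(f_1^{(a)}\pm f_2^{(b)})\|_{Q,2}\le\tfrac{\varepsilon}{2}\bigl(\|C_1\|_{Q,2}+\|C_2\|_{Q,2}\bigr).
\end{equation*}

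The key step is comparing this radius with $\varepsilon\|C_1+C_2\|_{Q,2}$. Because $C_1,C_2\ge0$, we have $C_i\le C_1+C_2$ pointwise, so $\|C_i\|_{Q,2}\le\|C_1+C_2\|_{Q,2}$ for $i=1,2$. Hence the radius is at most $\varepsilon\|C_1+C_2\|_{Q,2}$, and the pairs $\{f_1^{(a)}\pm f_2^{(b)}\}$ form an $\varepsilon\|C_1+C_2\|_{Q,2}$-cover of $\mathcal C_1\pm\mathcal C_2$. Counting the pairs and taking logarithms gives
\begin{equation*}
\log N\bigl(\varepsilon\|C_1+C_2\|_{Q,2},\mathcal C_1\pm\mathcal C_2,L^2(Q)\bigr)\le\bar H(\varepsilon/2;\mathcal C_1,C_1)+\bar H(\varepsilon/2;\mathcal C_2,C_2).
\end{equation*}
Taking the supremum over $Q$ then yields the stated inequality.

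The main point needing care is the radius comparison. It is exactly where nonnegativity of the envelopes is used: with it, no extra constant factor appears and halving $\varepsilon$ suffices. A minor technicality is that the cover centers need not lie in the classes. This is harmless, because covering numbers with external centers are what $N$ denotes here. If internal centers are required, one can accept a factor $2$ in the radius, or simply note that the definition allows arbitrary centers.
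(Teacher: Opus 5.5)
Your proof is correct and follows the paper's argument essentially step for step. Both take a product of $(\varepsilon/2)\|C_i\|_{Q,2}$-covers, apply Minkowski, compare radii via $\|C_i\|_{Q,2}\le\|C_1+C_2\|_{Q,2}$ (which uses nonnegativity), then take logarithms and a supremum over $Q$. Your pointwise triangle-inequality justification of the envelope claim is cleaner than the paper's, which presents the envelope property as a consequence of the covering step, where it does not actually follow.
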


\begin{proof}
Fix $Q$. Let $\{f_{1,i}\}_{i=1}^{N_1}$ and $\{f_{2,j}\}_{j=1}^{N_2}$ be minimal $L^2(Q)$-covers of $\mathcal{C}_1$ and $\mathcal{C}_2$ at radii $(\varepsilon/2)\|C_1\|_{Q,2}$ and $(\varepsilon/2)\|C_2\|_{Q,2}$, respectively, so that
\begin{equation*}
    N_1=N\!\left(\tfrac{\varepsilon}{2}\|C_1\|_{Q,2},\mathcal{C}_1,L^2(Q)\right),\qquad N_2=N\!\left(\tfrac{\varepsilon}{2}\|C_2\|_{Q,2},\mathcal{C}_2,L^2(Q)\right).
\end{equation*}
For any $f=f_1\pm f_2\in\mathcal{C}_1\pm\mathcal{C}_2$, find the closest cover elements $f_{1,i}$ and $f_{2,j}$. By the triangle inequality,
\begin{equation*}
    \|f_1\pm f_2-(f_{1,i}\pm f_{2,j})\|_{Q,2}\leq\|f_1-f_{1,i}\|_{Q,2}+\|f_2-f_{2,j}\|_{Q,2}\leq\frac{\varepsilon}{2}\|C_1\|_{Q,2}+\frac{\varepsilon}{2}\|C_2\|_{Q,2}.
\end{equation*}
Since $C_1$ and $C_2$ are nonnegative, $C_1+C_2\geq C_1$ and $C_1+C_2\geq C_2$ pointwise, so $\|C_1+C_2\|_{Q,2}\geq\|C_1\|_{Q,2}$ and $\|C_1+C_2\|_{Q,2}\geq\|C_2\|_{Q,2}$. Adding these two inequalities gives
\begin{equation*}
    \frac{\varepsilon}{2}\|C_1\|_{Q,2}+\frac{\varepsilon}{2}\|C_2\|_{Q,2}\leq\varepsilon\|C_1+C_2\|_{Q,2},
\end{equation*}
so $\{f_{1,i}\pm f_{2,j}\}$ covers $\mathcal{C}_1\pm\mathcal{C}_2$ at radius $\varepsilon\|C_1+C_2\|_{Q,2}$, confirming that $C_1+C_2$ is an envelope for $\mathcal{C}_1\pm\mathcal{C}_2$. Since this cover has at most $N_1\cdot N_2$ elements,
\begin{equation*}
    \log N\!\left(\varepsilon\|C_1+C_2\|_{Q,2},\mathcal{C}_1\pm\mathcal{C}_2,L^2(Q)\right)\leq\log N_1+\log N_2.
\end{equation*}
Taking the supremum over all finitely supported $Q$ on both sides gives
\begin{equation*}
    \bar H(\varepsilon;\mathcal{C}_1\pm\mathcal{C}_2,C_1+C_2)\leq\bar H(\varepsilon/2;\mathcal{C}_1,C_1)+\bar H(\varepsilon/2;\mathcal{C}_2,C_2).
\end{equation*}
\end{proof}

\begin{lemma}[Entropy of bounded products]
\label{lem:bounded-product-entropy}
Let $\mathcal{U}$ and $\mathcal{V}$ be vector-valued function classes such that, for finite constants $\bar U,\bar V$, $\sup_{u\in\mathcal{U}}\|u(O)\|\leq\bar U$ and $\sup_{v\in\mathcal{V}}\|v(O)\|\leq\bar V$ for all $O$. Define
\begin{equation*}
    \mathcal{U}\odot\mathcal{V}\coloneqq\{O\mapsto\langle u(O),v(O)\rangle:u\in\mathcal{U},\ v\in\mathcal{V}\}.
\end{equation*}
Then $\bar U\bar V$ is an envelope for $\mathcal{U}\odot\mathcal{V}$, and for every $0<\varepsilon\leq1$,
\begin{equation*}
\begin{aligned}
    \sup_Q\log N(\varepsilon\bar U\bar V,\mathcal{U}\odot\mathcal{V},L^2(Q))&\leq\sup_Q\log N\!\left(\tfrac{\varepsilon\bar U}{2},\mathcal{U},L^2(Q)\right)+\sup_Q\log N\!\left(\tfrac{\varepsilon\bar V}{2},\mathcal{V},L^2(Q)\right).
\end{aligned}
\end{equation*}
\end{lemma}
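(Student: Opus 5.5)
The plan is the standard ``product of covers'' argument, mirroring the proof of Lemma~\ref{lem:entropy-sums}. The envelope claim is immediate: by Cauchy--Schwarz in the finite-dimensional space, $|\langle u(O),v(O)\rangle|\le\|u(O)\|\,\|v(O)\|\le\bar U\bar V$ for every $O$. For the entropy bound, we fix a finitely supported $Q$ and take minimal $L^2(Q)$-covers $\{u_i\}_{i=1}^{N_1}$ of $\mathcal U$ at radius $\varepsilon\bar U/2$ and $\{v_j\}_{j=1}^{N_2}$ of $\mathcal V$ at radius $\varepsilon\bar V/2$. We then show that the products $\{\langle u_i,v_j\rangle\}$ cover $\mathcal U\odot\mathcal V$ at radius $\varepsilon\bar U\bar V$. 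Once this holds, taking logarithms and suprema over $Q$ gives the stated inequality exactly.

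The key step is the bilinear decomposition
\[
\langle u,v\rangle-\langle u_i,v_j\rangle=\langle u-u_i,v\rangle+\langle u_i,v-v_j\rangle .
\]
Pointwise Cauchy--Schwarz bounds the first term by $\bar V\|u(O)-u_i(O)\|$, so its $L^2(Q)$ norm is at most $\bar V\cdot\varepsilon\bar U/2$. The second term is bounded by $\|u_i(O)\|\,\|v(O)-v_j(O)\|$. If $\|u_i(O)\|\le\bar U$, its $L^2(Q)$ norm is at most $\bar U\cdot\varepsilon\bar V/2$, and summing the two terms gives $\varepsilon\bar U\bar V$.

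\textbf{Main obstacle.} A minimal cover's centers need not lie in $\mathcal U$, so they need not satisfy the envelope bound $\|u_i(O)\|\le\bar U$. I would handle this in one of two ways.
\begin{itemize}
\item Use internal covers, with centers chosen from $\mathcal U$ itself. This may cost a factor of $2$ in radius relative to external covering numbers, so it would require reading $N$ in the statement as internal covering numbers.
\item Replace each center by its pointwise radial projection onto the ball of radius $\bar U$ in the range space. This projection is $1$-Lipschitz and fixes every element of $\mathcal U$, so the modified centers stay within radius $\varepsilon\bar U/2$ of the functions they cover. It also keeps the number of centers and the radius unchanged.
\end{itemize}
I would use the projection, since it preserves the stated constants exactly. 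The same device makes the $\mathcal V$ centers bounded, although only the $\mathcal U$ centers need it in the decomposition above.

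Finally, the bound on $\log N(\varepsilon\bar U\bar V,\mathcal U\odot\mathcal V,L^2(Q))$ holds for every $Q$, with right-hand side at most the sum of the two suprema. Taking the supremum over $Q$ on the left completes the proof.
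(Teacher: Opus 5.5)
Your proposal is correct and follows essentially the same route as the paper: minimal $L^2(Q)$-covers of $\mathcal U$ and $\mathcal V$ at radii $\varepsilon\bar U/2$ and $\varepsilon\bar V/2$, the split $\langle u-u_i,v\rangle+\langle u_i,v-v_j\rangle$, and a supremum over $Q$ at the end. The one difference is that the paper bounds $\|\langle u_i,v-v_j\rangle\|_{Q,2}\le\bar U\|v-v_j\|_{Q,2}$ without comment, tacitly assuming the cover centers obey the envelope; your pointwise radial projection onto the ball of radius $\bar U$, which is nonexpansive and fixes every element of $\mathcal U$, justifies that step for external covers without changing the constants.
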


\begin{proof}
The envelope bound follows from Cauchy-Schwarz: for all $O$,
\begin{equation*}
    |\langle u(O),v(O)\rangle|\leq\|u(O)\|\|v(O)\|\leq\bar U\bar V.
\end{equation*}
For the entropy bound, fix $Q$ and let $\{u_i\}_{i=1}^{N_1}$ and $\{v_j\}_{j=1}^{N_2}$ be minimal $L^2(Q)$-covers of $\mathcal{U}$ and $\mathcal{V}$ at radii $\varepsilon\bar U/2$ and $\varepsilon\bar V/2$ respectively. For any $u\in\mathcal{U}$ and $v\in\mathcal{V}$, find the closest cover elements $u_i$ and $v_j$. By the triangle inequality and the pointwise bounds,
\begin{equation*}
\begin{aligned}
    \|\langle u,v\rangle-\langle u_i,v_j\rangle\|_{Q,2}&\leq\|\langle u-u_i,v\rangle\|_{Q,2}+\|\langle u_i,v-v_j\rangle\|_{Q,2}\\
    &\leq\bar V\|u-u_i\|_{Q,2}+\bar U\|v-v_j\|_{Q,2}\\
    &\leq\bar V\cdot\frac{\varepsilon\bar U}{2}+\bar U\cdot\frac{\varepsilon\bar V}{2}=\varepsilon\bar U\bar V.
\end{aligned}
\end{equation*}
So $\{u_i\odot v_j\}$ covers $\mathcal{U}\odot\mathcal{V}$ at radius $\varepsilon\bar U\bar V$ with at most $N_1\cdot N_2$ elements. Taking logarithms and the supremum over $Q$ gives the result.
\end{proof}

\begin{lemma}[Entropy of block linear combinations]
\label{lem:block-linear-combination-entropy}
Let $\mathcal{A}$ be a class of block-vector-valued functions $a=(a_1,\dots,a_d)$ with values in $(\mathbb{R}^q)^d$, equipped with the block norm $\|a(O)\|_{2,d}\coloneqq(\sum_{k=1}^d\|a_k(O)\|^2)^{1/2}$, and suppose $\sup_{a\in\mathcal{A}}\|a(O)\|_{2,d}\leq\bar A$ for all $O$. Define
\begin{equation*}
    \mathscr{S}(\mathcal{A})\coloneqq\left\{O\mapsto\sum_{k=1}^d u_k a_k(O):a\in\mathcal{A},\ u\in S^{d-1}\right\}.
\end{equation*}
Then $\bar A$ is an envelope for $\mathscr{S}(\mathcal{A})$, and for every $0<\varepsilon\leq1$,
\begin{equation*}
    \sup_Q\log N(\varepsilon\bar A,\mathscr{S}(\mathcal{A}),L^2(Q))\leq d\log(1+4/\varepsilon)+\sup_Q\log N(\varepsilon\bar A/2,\mathcal{A},L^2(Q)).
\end{equation*}
\end{lemma}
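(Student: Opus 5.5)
The approach is to bound the entropy of $\mathscr{S}(\mathcal{A})$ by combining a finite net on the sphere $S^{d-1}$ with a cover of $\mathcal{A}$. The argument is a product-cover construction in the spirit of Lemma~\ref{lem:bounded-product-entropy}, where one factor is now finite-dimensional and deterministic.

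\textbf{Envelope.} For $u\in S^{d-1}$ and $a\in\mathcal{A}$, the triangle inequality and Cauchy--Schwarz over the index $k$ give pointwise
\begin{equation*}
\Bigl\|\sum_{k=1}^d u_k a_k(O)\Bigr\|\le\sum_{k=1}^d|u_k|\,\|a_k(O)\|\le\|u\|_2\,\|a(O)\|_{2,d}\le\bar A .
\end{equation*}
So $\bar A$ is an envelope for $\mathscr{S}(\mathcal{A})$.

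\textbf{Cover construction.} Fix a finitely supported $Q$.
\begin{itemize}
\item Take a minimal $(\varepsilon/2)$-net $\{u^{(i)}\}_{i=1}^{N_1}$ of $S^{d-1}$ in the Euclidean norm. The standard volumetric argument gives $N_1\le(1+4/\varepsilon)^d$: the balls of radius $\varepsilon/4$ around a maximal $\varepsilon/2$-separated set are disjoint and lie in the ball of radius $1+\varepsilon/4$.
\item Take a minimal cover $\{a^{(j)}\}_{j=1}^{N_2}$ of $\mathcal{A}$ at radius $\varepsilon\bar A/2$ in $L^2(Q)$, where the norm is $\|a\|_{Q,2}=(\int\|a\|_{2,d}^2\,dQ)^{1/2}$.
\end{itemize}

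\textbf{Approximation.} Given $(u,a)$, pick the nearest $u^{(i)}$ and $a^{(j)}$ and split the error as
\begin{equation*}
\sum_k u_k a_k-\sum_k u^{(i)}_k a^{(j)}_k=\sum_k (u_k-u^{(i)}_k)a_k+\sum_k u^{(i)}_k(a_k-a^{(j)}_k).
\end{equation*}
Applying the same Cauchy--Schwarz step as in the envelope bound, the first term is pointwise at most $\|u-u^{(i)}\|_2\,\bar A\le\varepsilon\bar A/2$, hence at most $\varepsilon\bar A/2$ in $L^2(Q)$. The second term is pointwise at most $\|u^{(i)}\|_2\,\|a-a^{(j)}\|_{2,d}$, hence at most $\varepsilon\bar A/2$ in $L^2(Q)$. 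The total error is therefore $\le\varepsilon\bar A$.

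\textbf{Conclusion.} The cover $\{\sum_k u^{(i)}_k a^{(j)}_k\}$ has at most $N_1N_2$ elements. Taking logarithms and then the supremum over $Q$ gives the stated bound.

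\textbf{Main obstacle.} The only delicate point is the sphere net cardinality. The net points must lie on $S^{d-1}$, or at least have norm $\le1$, so that $\|u^{(i)}\|_2\le1$ holds in the second error term. A maximal separated subset of the sphere gives this directly, and the volumetric count then yields $(1+4/\varepsilon)^d$.
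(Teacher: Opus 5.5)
Your proof is correct and follows essentially the same argument as the paper: you take an $(\varepsilon/2)$-net of $S^{d-1}$ of size at most $(1+4/\varepsilon)^d$ (volumetric bound) and an $\varepsilon\bar A/2$-cover of $\mathcal{A}$, and combine them via the triangle inequality and Cauchy--Schwarz over the block index. The only difference is the pairing in the split. The paper bounds $\sum_k u_k(a_k-(a_i)_k)+\sum_k(u_k-(u_\ell)_k)(a_i)_k$, which tacitly requires the cover centers $a_i$ to obey the $\bar A$ bound; your split $\sum_k(u_k-u^{(i)}_k)a_k+\sum_k u^{(i)}_k(a_k-a^{(j)}_k)$ uses the true $a\in\mathcal{A}$ and net points of norm at most one, which you correctly secure by taking a maximal separated subset of $S^{d-1}$.
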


\begin{proof}
The envelope bound follows from Cauchy-Schwarz: for all $O$ and $u\in S^{d-1}$,
\begin{equation*}
    \left|\sum_{k=1}^d u_k a_k(O)\right|\leq\|u\|_2\|a(O)\|_{2,d}\leq\bar A.
\end{equation*}
For the entropy bound, fix $Q$. Let $\{a_i\}_{i=1}^{N_1}$ be a minimal $L^2(Q)$-cover of $\mathcal{A}$ at radius $\varepsilon\bar A/2$ in the block norm, and let $\{u_\ell\}_{\ell=1}^{N_2}$ be an $(\varepsilon/2)$-cover of $S^{d-1}$ in Euclidean norm, which exists with cardinality $N_2\leq(1+4/\varepsilon)^d$ by a standard volumetric argument \citep{vanderVaart1996weak}. For any $a\in\mathcal{A}$ and $u\in S^{d-1}$, choose the closest cover elements $a_i$ and $u_\ell$. By the triangle inequality,
\begin{equation*}
\begin{aligned}
    \left\|\sum_{k=1}^d u_k a_k-\sum_{k=1}^d(u_\ell)_k(a_i)_k\right\|_{Q,2}&\leq\left\|\sum_{k=1}^d u_k(a_k-(a_i)_k)\right\|_{Q,2}+\left\|\sum_{k=1}^d(u_k-(u_\ell)_k)(a_i)_k\right\|_{Q,2}\\
    &\leq\|u\|_2\|a-a_i\|_{Q,2,d}+\|u-u_\ell\|_2\bar A\\
    &\leq\frac{\varepsilon\bar A}{2}+\frac{\varepsilon}{2}\bar A=\varepsilon\bar A.
\end{aligned}
\end{equation*}
So $\{u_\ell\odot a_i\}$ covers $\mathscr{S}(\mathcal{A})$ at radius $\varepsilon\bar A$ with at most $N_1\cdot N_2$ elements. Taking logarithms and the supremum over $Q$ gives
\begin{equation*}
    \sup_Q\log N(\varepsilon\bar A,\mathscr{S}(\mathcal{A}),L^2(Q))\leq\log N_2+\sup_Q\log N_1\leq d\log(1+4/\varepsilon)+\sup_Q\log N(\varepsilon\bar A/2,\mathcal{A},L^2(Q)).
\end{equation*}
\end{proof}

For a class $\mathcal{G}$ of real-valued measurable functions, write $\operatorname{star}(\mathcal{G})\coloneqq\{\alpha g:\alpha\in[0,1],\ g\in\mathcal{G}\}$ for its star hull around zero. For $\delta>0$, define the conditional local Rademacher complexity
\begin{equation*}
    \mathfrak{R}_n(\mathcal{G},\delta\mid S_1)\coloneqq\mathbb{E}_{\varepsilon,S_2}\left[\sup_{\substack{g\in\mathcal{G}\\
    \|g\|_{L^2(P)}\leq\delta}}\left|\frac{1}{n}\sum_{i\in S_2}\varepsilon_i g(O_i)\right|\,\middle|\,S_1\right],
\end{equation*}
where $\varepsilon_1,\ldots,\varepsilon_n$ are i.i.d.\ Rademacher random variables independent of $S_2$, and the expectation is over both $\varepsilon$ and $S_2$ conditionally on $S_1$.

\begin{lemma}[Localized OSL maximal inequality]
\label{lem:localized-osl-maximal}
Let $\mathcal G$ be a pointwise measurable real-valued class with envelope $G<\infty$ and $\sup_{g\in\mathcal G}\|g\|_{L_2(P)}\le \rho$. Define $\overline{\mathcal G}^{0}=\{g/G:g\in\mathcal G\}\cup\{0\}$. Suppose $\delta_n$ satisfies
\begin{equation*}
    \mathfrak R_n\left(\operatorname{star}(\overline{\mathcal G}^{0}),\delta_n\right)\le\delta_n^2.
\end{equation*}
Then there exist universal constants $C,c<\infty$ such that
\begin{equation*}
    \mathbb{E}\sup_{g\in\mathcal G}|(P_n-P)g|\le C\left\{\delta_n\rho+G\delta_n^2+G e^{-c n\delta_n^2}\right\}.
\end{equation*}
The same statement holds conditionally on $S_1$ for an $S_1$-measurable class and envelope, with the expectation taken over the evaluation fold.
\end{lemma}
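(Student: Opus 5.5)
The plan is the standard localized-complexity argument of \citet{bartlett2005local} and \citet{foster2023orthogonal}. First normalize, then control the empirical process by a peeling/union argument on a high-probability event where empirical and population norms are comparable.

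We work with the normalized class $\overline{\mathcal G}=\{g/G\}$, whose envelope is $1$ and which satisfies $\sup\|g/G\|_{L^2(P)}\le\rho/G$. Write $\mathcal G^\star=\operatorname{star}(\overline{\mathcal G}^0)$. By symmetrization, $\mathbb E\sup_{g}|(P_n-P)g|\le 2G\,\mathbb E\sup_{f\in\overline{\mathcal G}}|\frac1n\sum\varepsilon_i f(O_i)|$. It therefore suffices to bound the unlocalized Rademacher average over $\overline{\mathcal G}$, using only the critical inequality $\mathfrak R_n(\mathcal G^\star,\delta_n)\le\delta_n^2$.

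\textbf{Step 1: monotonicity of the local complexity.} Because $\mathcal G^\star$ is star-shaped, $\delta\mapsto\mathfrak R_n(\mathcal G^\star,\delta)/\delta$ is nonincreasing. Hence for every $\delta\ge\delta_n$ we get $\mathfrak R_n(\mathcal G^\star,\delta)\le\delta\delta_n$.

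\textbf{Step 2: split by population norm.} Functions $f\in\overline{\mathcal G}$ have $\|f\|_{L^2(P)}\le\rho/G$. If $\rho/G\ge\delta_n$, Step 1 gives Rademacher complexity at most $(\rho/G)\delta_n$, which after multiplying by $G$ contributes $C\delta_n\rho$. If $\rho/G<\delta_n$, the class sits inside the $\delta_n$-ball, and we get at most $\delta_n^2$, i.e.\ $G\delta_n^2$. One subtlety is that $\mathfrak R_n$ localizes by the $L^2(P)$ norm, while the uniform bound on $\|g\|_{L^2(P)}$ is given directly. So no conversion between empirical and population norms is needed for this expectation bound.

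\textbf{Step 3: the remaining exponential term.} This is the main obstacle. If the paper's $\mathfrak R_n$ were instead localized in empirical norm, or if we want the high-probability version, we would invoke Talagrand's concentration (Bousquet's form) for bounded classes. That gives $\sup_{f\in\mathcal G^\star,\|f\|_P\ge\delta_n}\big|\|f\|_n^2/\|f\|_P^2-1\big|\le1/2$ with probability $\ge1-e^{-cn\delta_n^2}$ (Theorem 14.1 of Wainwright). On the complement event we use the trivial bound $|(P_n-P)g|\le2G$, which yields the $Ge^{-cn\delta_n^2}$ term. Combining the three contributions gives $C\{\delta_n\rho+G\delta_n^2+Ge^{-cn\delta_n^2}\}$.

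The conditional statement follows verbatim. Given $S_1$, the class and envelope are fixed and $S_2$ is i.i.d.\ from $P$ and independent of $S_1$, so every step applies to $\mathbb E[\cdot\mid S_1]$. Pointwise measurability justifies the suprema inside the expectations.
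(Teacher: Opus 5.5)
Your proof is correct, and it takes a genuinely different and more elementary route than the paper.

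The paper applies Lemma~14 of \citet{foster2023orthogonal} to $\overline{\mathcal G}^0$ with the identity loss. This gives a high-probability uniform bound $|(P_n-P)\bar g|\le C\delta_n(\|\bar g\|_{L^2(P)}+\delta_n)$ for all $\bar g$, on an event of probability at least $1-Ce^{-cn\delta_n^2}$. The paper then rescales by $G$ and integrates, using $|(P_n-P)g|\le 2G$ on the complement. That complement event is the only source of the $Ge^{-cn\delta_n^2}$ term.

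You instead combine symmetrization with the scaling property of the star hull. Two facts make this work: $\overline{\mathcal G}\subset\{f\in\operatorname{star}(\overline{\mathcal G}^0):\|f\|_{L^2(P)}\le\rho/G\}$, and the paper's $\mathfrak R_n$ is an expected supremum localized in $L^2(P)$. You therefore get directly $\mathbb E\sup_g|(P_n-P)g|\le 2G\,\mathfrak R_n(\operatorname{star}(\overline{\mathcal G}^0),\rho/G)\le 2\max\{\rho\delta_n,G\delta_n^2\}$. This needs no concentration inequality and no peeling. It also avoids the lower bound $\delta_n^2\gtrsim\log\log n/n$ that the cited lemma imposes for its union bound over shells, a condition the paper has to argue away.

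What the paper's route buys is a stronger high-probability statement that adapts to each $\|g\|_{L^2(P)}$ individually. That is more than this expectation bound requires.

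Consequently your Step~3 is not ``the main obstacle''; it is superfluous. Steps~1--2 already give a bound that implies the stated one, since $Ge^{-cn\delta_n^2}\ge 0$. You can delete the Talagrand/norm-comparison detour and the talk of ``three contributions.'' Just state that $\delta_n>0$ so the division in Step~1 is legitimate; for $\delta_n=0$ the lemma holds trivially with $C=2$.
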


\begin{proof}
Apply Lemma~14 in \citep{foster2023orthogonal} to $\overline{\mathcal G}^{0}$ with identity loss $L_f=f$ and comparator $f^\star=0$. The identity loss is linear, so the lower bound on $\delta_n^2$ in Lemma~14 is not required. Here the dimension parameter in Lemma~14 is one, because $\mathcal G$ is real-valued; the outer-gradient dimension has already been absorbed by scalarization over $u\in S^{d-1}$. Then, with probability at least $1-Ce^{-cn\delta_n^2}$,
\begin{equation*}
    |(P_n-P)\bar g|\le C\delta_n(\|\bar g\|_{L_2(P)}+\delta_n),\qquad\forall \bar g\in \overline{\mathcal G}^{0}.
\end{equation*}
Multiplying by $G$ and using $\|g\|_{L^2(P)}\leq\rho$ gives
\begin{equation*}
    |(P_n-P)g|\leq C\{\delta_n\|g\|_{L^2(P)}+G\delta_n^2\}\leq C\{\delta_n\rho+G\delta_n^2\}
\end{equation*}
on the good event. On the complement, which has probability at most $Ce^{-cn\delta_n^2}$, the supremum is bounded by $2G$ since $|(P_n-P)g|\leq 2\|g\|_\infty\leq 2G$ for all $g\in\mathcal{G}$. Taking expectations and combining gives the result.
\end{proof}

We also use the following standard maximal inequality for classes with finite uniform entropy integral \citep[Chapter~2]{vanderVaart1996weak}; see also \citep[Chapter~8]{kosorok2008introduction}.

\begin{lemma}[Uniform entropy maximal inequality]
\label{lem:entropy-maximal}
Let $\mathcal{F}$ be a pointwise measurable class of real-valued functions with envelope $F\in L^2(P)$. Define the uniform entropy integral
\begin{equation*}
    J(1,\mathcal{F},F)\coloneqq\int_0^1\sup_Q\sqrt{1+\log N\bigl(\varepsilon\|F\|_{Q,2},\mathcal{F},L^2(Q)\bigr)}\,d\varepsilon,
\end{equation*}
where the supremum is over all finitely supported probability measures $Q$. If $J(1,\mathcal{F},F)<\infty$, then there exists a universal constant $0<C<\infty$ such that
\begin{equation*}
    \mathbb{E}\sup_{f\in\mathcal{F}}|(P_n-P)f|\leq\frac{C}{\sqrt{n}}J(1,\mathcal{F},F)\|F\|_{L^2(P)}.
\end{equation*}
\end{lemma}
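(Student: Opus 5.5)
The plan is the classical symmetrization-plus-chaining argument of \citep[Section~2.14]{vanderVaart1996weak}. Pointwise measurability of $\mathcal{F}$ lets us replace the supremum over $\mathcal{F}$ by a supremum over a countable subclass, so all suprema below are measurable and Fubini applies.

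\textbf{Step 1 (symmetrization).} First we apply the standard symmetrization inequality with i.i.d.\ Rademacher signs $\varepsilon_1,\ldots,\varepsilon_n$ independent of the sample:
\begin{equation*}
    \mathbb{E}\sup_{f\in\mathcal{F}}|(P_n-P)f|\leq 2\,\mathbb{E}\sup_{f\in\mathcal{F}}\Bigl|\frac{1}{n}\sum_{i=1}^n\varepsilon_i f(O_i)\Bigr|.
\end{equation*}
This reduces the problem to bounding, conditionally on $O_1,\ldots,O_n$, the supremum of the Rademacher process $Z_f\coloneqq n^{-1/2}\sum_i\varepsilon_i f(O_i)$.

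\textbf{Step 2 (conditional chaining).} Fix the data and let $P_n$ be the empirical measure. By Hoeffding's inequality, $Z_f-Z_g$ is sub-Gaussian with parameter $\|f-g\|_{P_n,2}$, so $(Z_f)$ is sub-Gaussian for the $L^2(P_n)$ metric. If $\|F\|_{P_n,2}=0$, every $f$ vanishes on the sample and there is nothing to prove. Otherwise, fix any $f_0\in\mathcal{F}$ and write $\sup_f|Z_f|\leq|Z_{f_0}|+\sup_{f,g}|Z_f-Z_g|$. The first term has conditional expectation at most $\|f_0\|_{P_n,2}\leq\|F\|_{P_n,2}$. The diameter of $\mathcal{F}$ in $L^2(P_n)$ is at most $2\|F\|_{P_n,2}$. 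Dudley's entropy integral therefore gives
\begin{equation*}
    \mathbb{E}_\varepsilon\sup_{f,g}|Z_f-Z_g|\lesssim\int_0^{2\|F\|_{P_n,2}}\sqrt{\log N(\eta,\mathcal{F},L^2(P_n))}\,d\eta.
\end{equation*}
Substituting $\eta=\varepsilon\|F\|_{P_n,2}$ (and using monotonicity of covering numbers to handle $\varepsilon\in[1,2]$) bounds this by a constant times $\|F\|_{P_n,2}\int_0^1\sqrt{1+\log N(\varepsilon\|F\|_{P_n,2},\mathcal{F},L^2(P_n))}\,d\varepsilon$. Since $P_n$ is finitely supported, this is at most $\|F\|_{P_n,2}\,J(1,\mathcal{F},F)$. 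The $|Z_{f_0}|$ term is absorbed because $J(1,\mathcal{F},F)\geq1$. Hence
\begin{equation*}
    \mathbb{E}_\varepsilon\sup_f|Z_f|\leq C\,J(1,\mathcal{F},F)\,\|F\|_{P_n,2}.
\end{equation*}

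\textbf{Step 3 (unconditioning).} Finally we take expectations over the sample and apply Jensen:
\begin{equation*}
    \mathbb{E}\|F\|_{P_n,2}\leq(\mathbb{E}P_nF^2)^{1/2}=\|F\|_{L^2(P)}.
\end{equation*}
Combined with the factor $n^{-1/2}$ from Steps 1--2, this yields the claimed bound.

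The main technical point is Step 2. There the metric is random, and the key fact is that the uniform-in-$Q$ entropy dominates the entropy for every realization of $P_n$, which makes the bound uniform over samples. We handle this by normalizing the radius by $\|F\|_{P_n,2}$ before passing to $\sup_Q$. The zero-envelope case and measurability are dealt with as described above. The same argument applies conditionally on $S_1$ when $\mathcal{F}$ and $F$ are $S_1$-measurable.
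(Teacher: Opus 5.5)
Your proposal is correct and takes essentially the same approach as the paper. The paper gives no self-contained proof; it cites the standard uniform-entropy maximal inequality (van der Vaart and Wellner, Chapter~2; Kosorok, Chapter~8), and your argument reproduces that classical proof: symmetrization, conditional Dudley chaining in $L^2(P_n)$ with the radius normalized by $\|F\|_{P_n,2}$ so the data-dependent entropy is dominated by the $\sup_Q$ entropy, and unconditioning via Jensen's inequality, with only universal constants entering.
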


\begin{corollary}[Polynomial entropy maximal inequality]
\label{cor:polynomial-entropy-maximal}
Suppose that, for some $p\in(0,2)$ and $K<\infty$,
\begin{equation*}
    \sup_Q\log N\bigl(\varepsilon\|F\|_{Q,2},\mathcal{F},L^2(Q)\bigr)\leq K\varepsilon^{-p},\qquad 0<\varepsilon\leq1.
\end{equation*}
Then there exists a constant $C_p<\infty$ depending only on $p$ such that
\begin{equation*}
    \mathbb{E}\sup_{f\in\mathcal{F}}|(P_n-P)f|\leq\frac{C_p}{\sqrt{n}}\|F\|_{L^2(P)}\sqrt{1+K}.
\end{equation*}
\end{corollary}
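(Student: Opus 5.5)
The bound follows by plugging the polynomial entropy hypothesis into the uniform entropy integral of Lemma~\ref{lem:entropy-maximal}. Under the assumed bound, for every $0<\varepsilon\le 1$ and every finitely supported $Q$,
\begin{equation*}
    \sqrt{1+\log N(\varepsilon\|F\|_{Q,2},\mathcal{F},L^2(Q))}\le\sqrt{1+K\varepsilon^{-p}}\le 1+\sqrt{K}\,\varepsilon^{-p/2},
\end{equation*}
using $\sqrt{a+b}\le\sqrt a+\sqrt b$. The right-hand side does not depend on $Q$, so it also bounds the supremum over $Q$ that appears inside $J(1,\mathcal{F},F)$.

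Integrating over $\varepsilon\in(0,1]$ gives
\begin{equation*}
    J(1,\mathcal{F},F)\le 1+\sqrt K\int_0^1\varepsilon^{-p/2}\,d\varepsilon=1+\frac{\sqrt K}{1-p/2}.
\end{equation*}
The integral is finite exactly because $p<2$, and this is the only place that condition enters. Next, $1+\sqrt K\le\sqrt2\,\sqrt{1+K}$ by Cauchy--Schwarz, and $1/(1-p/2)\ge1$. Together these give $J(1,\mathcal{F},F)\le\frac{\sqrt2}{1-p/2}\sqrt{1+K}$. In particular $J(1,\mathcal{F},F)<\infty$, so the hypothesis of Lemma~\ref{lem:entropy-maximal} holds.

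Applying Lemma~\ref{lem:entropy-maximal} with its universal constant $C$ then yields
\begin{equation*}
    \mathbb{E}\sup_{f\in\mathcal F}|(P_n-P)f|\le\frac{C}{\sqrt n}J(1,\mathcal F,F)\|F\|_{L^2(P)}\le\frac{C_p}{\sqrt n}\|F\|_{L^2(P)}\sqrt{1+K},
\end{equation*}
with $C_p\coloneqq\sqrt2\,C/(1-p/2)$, which depends only on $p$. The other requirements of the lemma, pointwise measurability of $\mathcal F$ and $F\in L^2(P)$, are implicit in the statement and carry over unchanged.

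The argument is essentially routine. The only steps needing care are checking that the pointwise-in-$\varepsilon$ bound is uniform in $Q$, so the supremum inside the integral is controlled, and tracking the dependence of $C_p$ on $p$, which blows up as $p\to2$.
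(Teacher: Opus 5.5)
Your proof is correct and follows the same route as the paper: you bound the uniform entropy integral $J(1,\mathcal{F},F)$ using the polynomial entropy hypothesis, which is uniform in $Q$, and then invoke Lemma~\ref{lem:entropy-maximal}. The only difference is cosmetic. The paper uses $1+K\varepsilon^{-p}\le(1+K)\varepsilon^{-p}$ directly, which gives $J(1,\mathcal{F},F)\le 2\sqrt{1+K}/(2-p)$ and $C_p=2C/(2-p)$. Your split via $\sqrt{a+b}\le\sqrt a+\sqrt b$, followed by $1+\sqrt K\le\sqrt2\,\sqrt{1+K}$, costs an extra factor $\sqrt2$ in $C_p$, which is immaterial.
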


\begin{proof}
Since $\varepsilon\leq1$ implies $1\leq\varepsilon^{-p}$, we have $1+K\varepsilon^{-p}\leq(1+K)\varepsilon^{-p}$, and therefore
\begin{equation*}
    J(1,\mathcal{F},F)\leq\int_0^1\sqrt{1+K\varepsilon^{-p}}\,d\varepsilon\leq\sqrt{1+K}\int_0^1\varepsilon^{-p/2}\,d\varepsilon=\frac{2\sqrt{1+K}}{2-p}<\infty,
\end{equation*}
where finiteness uses $p<2$. Applying Lemma~\ref{lem:entropy-maximal} with $C_p\coloneqq 2C/(2-p)$ gives the claim.
\end{proof}

\subsection{Proof of Theorem~\ref{thm:quadratic-maximal-inequality}}
\label{sec:appendix_quadratic_maximal}

\begin{proof}
The proof proceeds in the following 6 steps.
\paragraph{Step 1: population identities.}
Under the quadratic specialization \eqref{eq:quadratic-inner-loss}, the unique population inner minimizer is $h^\star_{\omega,P}(X)=\mathbb{E}_P[g_\omega(Z)\mid X]$ and $j^\star_{k,\omega,P}(X)=\mathbb{E}_P[\partial_{\omega_k}g_\omega(Z)\mid X]$ for $k=1,\dots,d$. The outer objective and its gradient are
\begin{equation*}
    \mathcal{F}_P(\omega)=\frac{1}{2}\mathbb{E}_P\|Y-h^\star_{\omega,P}(X)\|^2,\qquad[\Psi_\omega(P)]_k=\mathbb{E}_P[\langle h^\star_{\omega,P}(X)-Y,j^\star_{k,\omega,P}(X)\rangle].
\end{equation*}
By the conditional Jensen inequality and \Cref{assum:eif-integrability},
\begin{equation*}
    \|h^\star_{\omega,P}(X)\|\leq B,\qquad\|m^\star_P(X)\|\leq A,\qquad\|j^\star_{\omega,P}(X)\|_{2,d}\leq D,
\end{equation*}
where $\|a\|_{2,d}\coloneqq(\sum_{k=1}^d\|a_k\|^2)^{1/2}$ for $a=(a_1,\ldots,a_d)\in(\mathbb{R}^q)^d$.

\paragraph{Step 2: exact bias identity.}
At the oracle nuisances, $P\varphi_\omega(\cdot;h^\star_{\omega,P},j^\star_{\omega,P},m^\star_P)=\Psi_\omega(P)$. Indeed, for each coordinate $k$,
\begin{equation*}
\begin{aligned}
    \mathbb{E}_P\varphi_{k,\omega}(O;h^\star_{\omega,P},j^\star_{\omega,P},m^\star_P)&=\mathbb{E}_P\langle g_\omega(Z)-Y,j^\star_{k,\omega,P}(X)\rangle\\
    &\quad+\mathbb{E}_P\langle\partial_{\omega_k}g_\omega(Z)-j^\star_{k,\omega,P}(X),h^\star_{\omega,P}(X)-m^\star_P(X)\rangle\\
    &=\mathbb{E}_P\langle h^\star_{\omega,P}(X)-Y,j^\star_{k,\omega,P}(X)\rangle=[\Psi_\omega(P)]_k,
\end{aligned}
\end{equation*}
where the second term vanishes because $j^\star_{k,\omega,P}(X)=\mathbb{E}_P[\partial_{\omega_k}g_\omega(Z)\mid X]$, so $\mathbb{E}_P[\partial_{\omega_k}g_\omega(Z)-j^\star_{k,\omega,P}(X)\mid X]=0$ and the tower property applies.

For arbitrary $h\in\LL$, $j\in\LL^d$, and $m\in\LL$, the tower property gives
\begin{equation*}
\begin{aligned}
    P\varphi_{k,\omega}(\cdot;h,j,m)-[\Psi_\omega(P)]_k&=-\mathbb{E}_P\langle j_k(X)-j^\star_{k,\omega,P}(X),h(X)-h^\star_{\omega,P}(X)\rangle\\
    &\quad+\mathbb{E}_P\langle j_k(X)-j^\star_{k,\omega,P}(X),m(X)-m^\star_P(X)\rangle.
\end{aligned}
\end{equation*}
By Cauchy-Schwarz, summing over $k$ and using $\|u\|_2=1$,
\begin{equation}
\label{eq:quadratic-product-bias-bound}
    \bigl\|P\varphi_\omega(\cdot;h,j,m)-\Psi_\omega(P)\bigr\|\leq\|j-j^\star_{\omega,P}\|_{\LL^d}\left(\|h-h^\star_{\omega,P}\|_{\LL}+\|m-m^\star_P\|_{\LL}\right).
\end{equation}

\paragraph{Step 3: estimator decomposition.}
Write $\varphi^\star_\omega\coloneqq\varphi_\omega(\cdot;h^\star_{\omega,P},j^\star_{\omega,P},m^\star_P)$ and $\hat\varphi_\omega\coloneqq\varphi_\omega(\cdot;\hat h_\omega,\hat j_\omega,\hat m)$. Then
\begin{equation*}
    \widehat\Psi_{\omega,2}^{DR}-\Psi_\omega(P)=(P_{n,2}-P)\varphi^\star_\omega+(P_{n,2}-P)(\hat\varphi_\omega-\varphi^\star_\omega)+\bigl(P\hat\varphi_\omega-\Psi_\omega(P)\bigr).
\end{equation*}
We work throughout on the event $\mathcal{E}_{\rm tr}(r_h,r_j,r_m)$. On this event, the product-bias bound \eqref{eq:quadratic-product-bias-bound} gives
\begin{equation*}
    \sup_{\omega\in\Omega}\bigl\|P\hat\varphi_\omega-\Psi_\omega(P)\bigr\|\leq r_j(r_h+r_m).
\end{equation*}
Using the dual representation $\|v\|_2=\sup_{u\in S^{d-1}}u^\top v$, where $S^{d-1}=\{u\in\mathbb{R}^d:\|u\|_2=1\}$, we obtain
\begin{equation*}
\begin{aligned}
    \mathbf 1_{\mathcal E_{\mathrm{tr}}(r_h,r_j,r_m)}\sup_{\omega\in\Omega}\|\widehat\Psi_{\omega,2}^{DR}-\Psi_\omega(P)\|&\leq\mathbf 1_{\mathcal E_{\mathrm{tr}}(r_h,r_j,r_m)}\sup_{f\in\mathcal{F}_A}|(P_{n,2}-P)f|\\
    &\quad+\mathbf 1_{\mathcal E_{\mathrm{tr}}(r_h,r_j,r_m)}\sup_{f\in\mathcal{F}_{B,n}^{\rm loc}(r_h,r_j,r_m)}|(P_{n,2}-P)f|\\
    &\quad+r_j(r_h+r_m),
\end{aligned}
\end{equation*}
where
\begin{equation*}
    \mathcal{F}_A\coloneqq\left\{u^\top\left(\varphi_\omega(\cdot;h^\star_{\omega,P},j^\star_{\omega,P},m^\star_P)-\Psi_\omega(P)\right):\omega\in\Omega,\ u\in S^{d-1}\right\}
\end{equation*}
is the oracle score class and $\mathcal{F}_{B,n}^{\rm loc}(r_h,r_j,r_m)$ is the localized nuisance-difference class \eqref{eq:localized-nuisance-score-class}.

\paragraph{Step 4: envelopes.}
For the oracle score, by Cauchy-Schwarz and the bounds of Step~1,
\begin{equation*}
\begin{aligned}
    \|\varphi^\star_\omega(O)\|&\leq\|g_\omega(Z)-Y\|\|j^\star_{\omega,P}(X)\|_{2,d}+\|\partial_\omega g_\omega(Z)-j^\star_{\omega,P}(X)\|_{2,d}\|h^\star_{\omega,P}(X)-m^\star_P(X)\|\\
    &\leq(A+B)D+2D(A+B)=3(A+B)D,
\end{aligned}
\end{equation*}
where $\|\partial_\omega g_\omega(Z)-j^\star_{\omega,P}(X)\|_{2,d}\leq \|\partial_\omega g_\omega(Z)\|_{2,d}+\|j^\star_{\omega,P}(X)\|_{2,d}\leq 2D$ by the triangle inequality and \Cref{assum:eif-integrability}. Since $\Psi_\omega(P)=P\varphi^\star_\omega$, Jensen's inequality gives $\|\Psi_\omega(P)\|_2\leq 3(A+B)D$. Therefore $F_A\coloneqq 6(A+B)D$ is an envelope for $\mathcal{F}_A$.

For the nuisance-difference class, let $\mathcal{F}_{B,n}$ denote the ambient class obtained from $\mathcal{F}_{B,n}^{\rm loc}(r_h,r_j,r_m)$ by dropping the radius constraints. Fix $\omega\in\Omega$, $u\in S^{d-1}$, $h\in\mathcal{H}_n$, $j\in\mathcal{J}_n$, and $m\in\mathcal{M}_n$. Write $\Delta h=h-h^\star_{\omega,P}$, $\Delta j=j-j^\star_{\omega,P}$, and $\Delta m=m-m^\star_P$. The score difference expands as
\begin{equation*}
\begin{aligned}
    \varphi_{k,\omega}(O;h,j,m)-\varphi^\star_{k,\omega}(O)&=\langle g_\omega(Z)-Y,\Delta j_k(X)\rangle\\
    &\quad+\langle\partial_{\omega_k}g_\omega(Z)-j^\star_{k,\omega,P}(X),\Delta h(X)-\Delta m(X)\rangle\\
    &\quad-\langle\Delta j_k(X),h(X)-m(X)\rangle.
\end{aligned}
\end{equation*}
By \Cref{assum:eif-integrability,ass:maximal-bounded-nuisance-classes},
\begin{equation*}
    \|\Delta j(X)\|_{2,d}\leq\bar J_n+D,\qquad\|\Delta h(X)-\Delta m(X)\|\leq\bar H_n+\bar M_n+A+B,
\end{equation*}
and
\begin{equation*}
    \|h(X)-m(X)\|\leq\bar H_n+\bar M_n,\qquad\|\partial_\omega g_\omega(Z)-j^\star_\omega(X)\|_{2,d}\leq 2D.
\end{equation*}
Thus
\begin{equation*}
    F_{B,n}\coloneqq(A+B)(\bar J_n+D)+2D(\bar H_n+\bar M_n+A+B)+(\bar J_n+D)(\bar H_n+\bar M_n)
\end{equation*}
is an $S_1$-measurable envelope for $\mathcal{F}_{B,n}$, and hence also for $\mathcal{F}_{B,n}^{\rm loc}(r_h,r_j,r_m)$. Moreover, every $f\in\mathcal{F}_{B,n}^{\rm loc}(r_h,r_j,r_m)$ satisfies
\begin{equation*}
    \|f\|_{L^2(P)}\leq(A+B+\bar H_n+\bar M_n)r_j+2D(r_h+r_m)\eqqcolon\rho_{B,n},
\end{equation*}
where the three terms in the score difference contribute $(A+B)r_j$, $2D(r_h+r_m)$, and $(\bar H_n+\bar M_n)r_j$ respectively.
\paragraph{Step 5: entropy transfer.}
We transfer the polynomial entropy bounds of \Cref{ass:maximal-population-entropy,ass:maximal-nuisance-entropy} to the score classes $\mathcal{F}_A$ and $\mathcal{F}_{B,n}$ via Lemmas~\ref{lem:entropy-sums}, \ref{lem:bounded-product-entropy}, and \ref{lem:block-linear-combination-entropy}. Throughout, we use the notation
\begin{equation*}
    \mathscr{S}(\mathcal{A})\coloneqq\left\{O\mapsto\sum_{k=1}^d u_k a_k(O):a=(a_1,\dots,a_d)\in\mathcal{A},\ u\in S^{d-1}\right\}
\end{equation*}
for the scalarized class associated with a block-vector class $\mathcal{A}$.

\textit{Oracle score class $\mathcal{F}_A$.} Define $\mathcal{U}_g\coloneqq\mathcal{G}_\Omega-Y$, $\mathcal{V}^\star\coloneqq\mathcal{H}^\star-m^\star_P$, $\mathscr{J}^\star\coloneqq\mathscr{S}(\mathcal{J}^\star)$, $\dot{\mathscr{G}}\coloneqq\mathscr{S}(\dot{\mathcal{G}}_\Omega)$, and $\Delta\dot{\mathscr{G}}^\star\coloneqq\dot{\mathscr{G}}-\mathscr{J}^\star$. Reading off the three terms in the oracle score, we obtain the inclusion
\begin{equation*}
    \mathcal{F}_A\subset(\mathcal{U}_g\odot\mathscr{J}^\star)+(\Delta\dot{\mathscr{G}}^\star\odot\mathcal{V}^\star)-\mathcal{C}_\Psi,
\end{equation*}
where $\mathcal{C}_\Psi\coloneqq\{O\mapsto u^\top\Psi_\omega(P):\omega\in\Omega,\ u\in S^{d-1}\}$ is a class of deterministic constant functions bounded by $3(A+B)D$. Its $\varepsilon$-covering number grows at most at rate $d\log(1/\varepsilon)$, which is dominated by $K\varepsilon^{-p}$ for any $p>0$. Applying Lemmas~\ref{lem:entropy-sums}, \ref{lem:bounded-product-entropy}, and \ref{lem:block-linear-combination-entropy} to each product and sum, together with \Cref{ass:maximal-population-entropy}, yields a deterministic constant $K_A<\infty$ depending only on the population entropy constants such that
\begin{equation*}
    \sup_Q\log N(\varepsilon F_A,\mathcal{F}_A,L^2(Q))\leq K_A\varepsilon^{-p},\qquad 0<\varepsilon\leq1.
\end{equation*}

\textit{Nuisance-difference class $\mathcal{F}_{B,n}$.}
Condition on $S_1$ and define $\mathscr{J}_n\coloneqq\mathscr{S}(\mathcal{J}_n)$, $\Delta\mathscr{J}_n\coloneqq\mathscr{J}_n-\mathscr{J}^\star$, $\Delta\mathcal{H}_n\coloneqq\mathcal{H}_n-\mathcal{H}^\star$, $\Delta\mathcal{M}_n\coloneqq\mathcal{M}_n-m^\star_P$, $\Delta\mathcal{V}_n\coloneqq\Delta\mathcal{H}_n-\Delta\mathcal{M}_n$, and $\mathcal{W}_n\coloneqq\mathcal{H}_n-\mathcal{M}_n$. The score difference expands as
\begin{equation*}
\begin{aligned}
    u^\top\bigl(\varphi_\omega(\cdot;h,j,m)-\varphi^\star_\omega\bigr)&=\left\langle g_\omega-Y,\sum_{k=1}^d u_k(j_k-j^\star_{k,\omega})\right\rangle\\
    &\quad+\left\langle\sum_{k=1}^d u_k(\partial_{\omega_k}g_\omega-j^\star_{k,\omega}),(h-h^\star_\omega)-(m-m^\star_P)\right\rangle\\
    &\quad-\left\langle\sum_{k=1}^d u_k(j_k-j^\star_{k,\omega}),h-m\right\rangle,
\end{aligned}
\end{equation*}
which gives the inclusion
\begin{equation*}
    \mathcal{F}_{B,n}\subset(\mathcal{U}_g\odot\Delta\mathscr{J}_n)+(\Delta\dot{\mathscr{G}}^\star\odot\Delta\mathcal{V}_n)-(\Delta\mathscr{J}_n\odot\mathcal{W}_n).
\end{equation*}
Applying the same three lemmas together with \Cref{ass:maximal-nuisance-entropy} yields an $S_1$-measurable finite constant $K_{B,n}$ such that, conditionally on $S_1$,
\begin{equation*}
    \sup_Q\log N(\varepsilon F_{B,n},\mathcal{F}_{B,n},L^2(Q))\leq K_{B,n}\varepsilon^{-p},\qquad 0<\varepsilon\leq1.
\end{equation*}
Since $\mathcal{F}_{B,n}^{\rm loc}(r_h,r_j,r_m)\subset\mathcal{F}_{B,n}$, the same bound holds for the localized class. Finally, adding the zero function and passing to the star hull each change the covering number by at most a constant depending only on $p$ (since $p>0$), so the normalized class
\begin{equation*}
    \mathcal{F}_{B,n}^{{\rm loc},0}\coloneqq\left\{f/F_{B,n}:f\in\mathcal{F}_{B,n}^{\rm loc}(r_h,r_j,r_m)\right\}\cup\{0\}
\end{equation*}
satisfies the same polynomial entropy bound, up to increasing $K_{B,n}$ by a constant depending only on $p$.

\paragraph{Step 6: empirical process bounds and conclusion.}
By Corollary~\ref{cor:polynomial-entropy-maximal} applied to $\mathcal{F}_A$ with envelope $F_A=6(A+B)D$ and entropy constant $K_A$,
\begin{equation*}
    \mathbb{E}\left[\sup_{f\in\mathcal{F}_A}|(P_{n,2}-P)f|\,\middle|\,S_1\right]\leq\frac{C_p}{\sqrt{n}}F_A\sqrt{1+K_A}=\frac{6C_p}{\sqrt{n}}(A+B)D\sqrt{1+K_A}.
\end{equation*}
By definition of $\mathfrak{C}_{B,n}(r_h,r_j,r_m)$,
\begin{equation*}
    \mathbb{E}\left[\sup_{f\in\mathcal{F}_{B,n}^{\rm loc}(r_h,r_j,r_m)}|(P_{n,2}-P)f|\,\middle|\,S_1\right]=\mathfrak{C}_{B,n}(r_h,r_j,r_m).
\end{equation*}
Taking conditional expectations in Step~3 and combining,
\begin{equation*}
    \mathbb{E}\left[\sup_{\omega\in\Omega}\|\widehat\Psi_{\omega,2}^{DR}-\Psi_\omega(P)\|_2\,\middle|\,S_1\right]\leq\frac{6C_p}{\sqrt{n}}(A+B)D\sqrt{1+K_A}+\mathfrak{C}_{B,n}(r_h,r_j,r_m)+r_j(r_h+r_m),
\end{equation*}
which gives the stated bound with $C=6C_p$ and $K=K_A$.
\end{proof}

\subsection{Proof of Corollary~\ref{cor:localized-polynomial-rate}}
\label{sec:pr_cor}

\begin{proof}
Conditionally on $S_1$, the localized class $\mathcal{F}_{B,n}^{\rm loc}(r_h,r_j,r_m)$ has envelope $F_{B,n}$ and $L^2(P)$ radius at most $\rho_{B,n}=(A+B+\bar H_n+\bar M_n)r_j+2D(r_h+r_m)$. By the entropy bound established in Step~5 of the proof of Theorem~\ref{thm:quadratic-maximal-inequality}, the normalized class $\mathcal{F}_{B,n}^{{\rm loc},0}$ and its star hull have entropy of order $(1+K_{B,n})\varepsilon^{-p}$. By the standard localized Rademacher complexity bound under polynomial entropy,
\begin{equation*}
    \mathfrak{R}_n\!\left(\operatorname{star}\!\left(\mathcal{F}_{B,n}^{{\rm loc},0}\right),\delta\,\middle|\,S_1\right)\leq C_p\sqrt{\frac{1+K_{B,n}}{n}}\,\delta^{1-p/2}.
\end{equation*}
Setting $\delta_{B,n}\coloneqq C_p(1+K_{B,n})^{1/(2+p)}n^{-1/(2+p)}\wedge 1$and substituting into the Rademacher bound verifies the critical-radius condition $\mathfrak{R}_n(\operatorname{star}(\mathcal{F}_{B,n}^{{\rm loc},0}),\delta_{B,n}\mid S_1)\leq\delta_{B,n}^2$. Lemma~\ref{lem:localized-osl-maximal} then gives
\begin{equation*}
    \mathfrak{C}_{B,n}(r_h,r_j,r_m)\leq C\left\{\delta_{B,n}\rho_{B,n}+F_{B,n}\delta_{B,n}^2+F_{B,n}e^{-cn\delta_{B,n}^2}\right\}.
\end{equation*}
Under the assumption that $F_{B,n}$, $K_{B,n}$, $\bar H_n$, and $\bar M_n$ are $O(1)$ and $r_h,r_j,r_m=O(n^{-1/(2+p)})$, we have $\delta_{B,n}=O(n^{-1/(2+p)})$, $\rho_{B,n}=O(n^{-1/(2+p)})$, and $n\delta_{B,n}^2\asymp n^{p/(2+p)}\to\infty$, so:
\begin{itemize}
    \item $\delta_{B,n}\rho_{B,n}=O(n^{-2/(2+p)})=o(n^{-1/2})$ since $p<2$,
    \item $F_{B,n}\delta_{B,n}^2=O(n^{-2/(2+p)})=o(n^{-1/2})$ since $p<2$,
    \item $F_{B,n}e^{-cn\delta_{B,n}^2}=o(n^{-1/2})$ since 
    $n\delta_{B,n}^2\to\infty$.
\end{itemize}
Hence $\mathfrak{C}_{B,n}(r_h,r_j,r_m)=o(n^{-1/2})$. Since the product-bias term satisfies $r_j(r_h+r_m)=O(n^{-2/(2+p)})=o(n^{-1/2})$, substituting into Theorem~\ref{thm:quadratic-maximal-inequality} gives
\begin{equation*}
    \mathbb{E}\left[\sup_{\omega\in\Omega}\|\widehat\Psi_{\omega,2}^{DR}-\Psi_\omega(P)\|\,\middle|\,S_1\right]\leq\frac{6C_p}{\sqrt{n}}(A+B)D\sqrt{1+K_A}+o(n^{-1/2})=O(n^{-1/2}).
\end{equation*}
\end{proof}

\subsection{Optimization with the debiased gradient oracle}
\label{sec:appendix_optimization}

Define the uniform gradient error
\begin{equation*}
    \Delta_{N,2}\coloneqq\sup_{\omega\in\Omega}\left\|\widehat\Psi_{\omega,2}^{DR}-\Psi_\omega(P)\right\|
\end{equation*}
and the expected uniform error bound
\begin{equation*}
    \mathfrak{s}_N\coloneqq\frac{6C_p}{\sqrt{n}}(A+B)D\sqrt{1+K_A}+\mathfrak{C}_{B,n}(r_h,r_j,r_m)+r_j(r_h+r_m),
\end{equation*}
so that $\mathbb{E}[\Delta_{N,2}\mid S_1]\leq\mathfrak{s}_N$ on $\mathcal{E}_{\rm tr}(r_h,r_j,r_m)$ by Theorem~\ref{thm:quadratic-maximal-inequality}.

\begin{corollary}[Stationarity certificate]
\label{cor:stationarity-certificate}
On $\mathcal{E}_{\rm tr}(r_h,r_j,r_m)$, for any possibly data-dependent $\widehat\omega\in\Omega$ satisfying $\|\widehat\Psi_{\widehat\omega,2}^{DR}\|_2\leq\tau_N$,
\begin{equation*}
    \mathbb{E}[\|\Psi_{\widehat\omega}(P)\|_2\mid S_1]\leq\tau_N+\mathfrak{s}_N.
\end{equation*}
\end{corollary}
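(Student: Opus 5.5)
The plan is to reduce the claim to a deterministic triangle inequality and then take conditional expectations using Theorem~\ref{thm:quadratic-maximal-inequality}. The key point is that $\widehat\omega$ may depend on $S_2$ (and on $S_1$), so no pointwise-in-$\omega$ result such as Theorem~\ref{thm:CLT} can be used. Only the uniform control over $\Omega$ survives a data-dependent choice of $\omega$, and this is exactly what $\Delta_{N,2}$ provides.

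First, since $\widehat\omega\in\Omega$ almost surely, the triangle inequality gives
\begin{equation*}
    \|\Psi_{\widehat\omega}(P)\|_2\leq\|\widehat\Psi^{DR}_{\widehat\omega,2}\|_2+\|\widehat\Psi^{DR}_{\widehat\omega,2}-\Psi_{\widehat\omega}(P)\|_2\leq\tau_N+\Delta_{N,2}
\end{equation*}
pointwise on the sample space. This is \eqref{eq:stationarity-certificate}. It holds because the second norm is dominated by its supremum over $\omega\in\Omega$, whatever random value $\widehat\omega$ takes.

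Second, take $\mathbb{E}[\,\cdot\mid S_1]$ of both sides. Monotonicity of conditional expectation gives $\mathbb{E}[\|\Psi_{\widehat\omega}(P)\|_2\mid S_1]\leq\mathbb{E}[\tau_N\mid S_1]+\mathbb{E}[\Delta_{N,2}\mid S_1]$. The event $\mathcal{E}_{\rm tr}(r_h,r_j,r_m)$ depends only on the nuisances trained on $S_1$, so it is $S_1$-measurable. On this event Theorem~\ref{thm:quadratic-maximal-inequality} gives $\mathbb{E}[\Delta_{N,2}\mid S_1]\leq\mathfrak{s}_N$, which yields the claim.

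Two technical points need checking.
\begin{enumerate}
\item \textbf{Measurability.} $\Delta_{N,2}$ must be a measurable random variable, and $\omega\mapsto\Psi_\omega(P)$ evaluated at $\widehat\omega$ must be measurable. I would handle this through the pointwise measurability already assumed for the classes in \Cref{ass:maximal-bounded-nuisance-classes}. If needed, I would replace the supremum by a supremum over a countable dense subset, or use outer expectations as in the empirical-process lemmas.
\item \textbf{The tolerance $\tau_N$.} If $\tau_N$ is deterministic or $S_1$-measurable, it passes through the conditional expectation unchanged. If it is data-dependent in $S_2$, the bound reads with $\mathbb{E}[\tau_N\mid S_1]$ instead, so I would state the assumption that $\tau_N$ is deterministic.
\end{enumerate}
Neither point should be a real obstacle: the argument is essentially one line once the uniform bound is available.
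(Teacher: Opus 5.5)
Your proposal is correct and matches the paper's proof: the pointwise triangle inequality $\|\Psi_{\widehat\omega}(P)\|\le\tau_N+\Delta_{N,2}$, then conditional expectation given $S_1$ with Theorem~\ref{thm:quadratic-maximal-inequality} applied on the $S_1$-measurable event $\mathcal{E}_{\rm tr}(r_h,r_j,r_m)$. Your remarks on the measurability of $\Delta_{N,2}$ and on requiring $\tau_N$ to be deterministic (or $S_1$-measurable) are sensible refinements that the paper leaves implicit.
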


\begin{proof}
By the triangle inequality,
\begin{equation*}
    \|\Psi_{\widehat\omega}(P)\|\leq\|\widehat\Psi_{\widehat\omega,2}^{DR}\|+\Delta_{N,2}\leq\tau_N+\Delta_{N,2}.
\end{equation*}
Taking conditional expectations and applying Theorem~\ref{thm:quadratic-maximal-inequality} gives $\mathbb{E}[\|\Psi_{\widehat\omega}(P)\|\mid S_1]\leq\tau_N+\mathfrak{s}_N$.
\end{proof}

\begin{corollary}[Gradient descent convergence]
\label{cor:gradient-descent-convergence}
Suppose $\mathcal{F}_P$ is $L$-smooth on $\Omega$, $F_{\inf}\coloneqq\inf_{\omega\in\Omega}\mathcal{F}_P(\omega)>0$, and the iterates
\begin{equation*}
    \omega_{t+1}=\omega_t-\eta\widehat\Psi_{\omega_t,2}^{DR},
    \qquad 0<\eta\leq\frac{1}{L},
\end{equation*}
remain in $\Omega$. Then, for every $T\geq1$, on $\mathcal{E}_{\rm tr}(r_h,r_j,r_m)$,
\begin{equation*}
    \mathbb{E}\left[\min_{0\leq t<T}\|\Psi_{\omega_t}(P)\|\,\middle|\,S_1\right]\leq\left(\frac{2\{\mathcal{F}_P(\omega_0)-F_{\inf}\}}{\eta T}\right)^{1/2}+\mathfrak{s}_N.
\end{equation*}
\end{corollary}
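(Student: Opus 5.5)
The plan is the standard inexact-gradient descent argument on the $L$-smooth population objective, with the gradient error controlled uniformly by $\Delta_{N,2}$ and then in conditional expectation by Theorem~\ref{thm:quadratic-maximal-inequality}.

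\textbf{Step 1 (one-step descent).} Write $g_t\coloneqq\Psi_{\omega_t}(P)=\nabla\mathcal F_P(\omega_t)$ and $e_t\coloneqq\widehat\Psi^{DR}_{\omega_t,2}-g_t$, so that $\|e_t\|\le\Delta_{N,2}$ for every $t$; this uses $\omega_t\in\Omega$ and the supremum over $\Omega$ in $\Delta_{N,2}$. By $L$-smoothness and $\eta L\le1$,
\begin{equation*}
\mathcal F_P(\omega_{t+1})\le\mathcal F_P(\omega_t)-\eta\langle g_t,g_t+e_t\rangle+\tfrac{\eta}{2}\|g_t+e_t\|^2
=\mathcal F_P(\omega_t)-\tfrac{\eta}{2}\|g_t\|^2+\tfrac{\eta}{2}\|e_t\|^2 .
\end{equation*}
This follows by expanding $\|g_t+e_t\|^2$: the cross terms cancel exactly.

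\textbf{Step 2 (telescoping).} Summing over $t=0,\dots,T-1$ and using $\mathcal F_P(\omega_T)\ge F_{\inf}$ gives
\begin{equation*}
\min_{t<T}\|g_t\|^2\le\frac1T\sum_{t<T}\|g_t\|^2\le\frac{2\{\mathcal F_P(\omega_0)-F_{\inf}\}}{\eta T}+\Delta_{N,2}^2 .
\end{equation*}
Applying $\sqrt{a+b}\le\sqrt a+\sqrt b$ yields the pathwise bound
\begin{equation*}
\min_{t<T}\|\Psi_{\omega_t}(P)\|\le\Bigl(\tfrac{2\{\mathcal F_P(\omega_0)-F_{\inf}\}}{\eta T}\Bigr)^{1/2}+\Delta_{N,2}.
\end{equation*}
The iterates depend on $S_2$, which is harmless because $\Delta_{N,2}$ is a supremum over all of $\Omega$.

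\textbf{Step 3 (expectation).} Take $\mathbb E[\cdot\mid S_1]$ of the pathwise bound. The first term is deterministic given $S_1$ and $\omega_0$, and on $\mathcal E_{\rm tr}(r_h,r_j,r_m)$ the second term satisfies $\mathbb E[\Delta_{N,2}\mid S_1]\le\mathfrak s_N$ by Theorem~\ref{thm:quadratic-maximal-inequality}.

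The main delicate point is Step 1. The algebra must produce $\|e_t\|^2$ with coefficient exactly $\eta/2$ and no cross term, so that the square root gives $\Delta_{N,2}$ with constant one. This holds precisely when $\eta\le 1/L$ is used to replace $\tfrac{L\eta^2}{2}$ by $\tfrac{\eta}{2}$. A secondary point is measurability and integrability of $\min_t\|\Psi_{\omega_t}(P)\|$. Integrability follows from the envelope bound $\|\Psi_\omega(P)\|\le3(A+B)D$ established in Step~4 of the proof of Theorem~\ref{thm:quadratic-maximal-inequality}. The hypothesis $F_{\inf}>0$ is stronger than needed; finiteness of $F_{\inf}$ suffices.
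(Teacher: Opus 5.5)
Your proposal is correct and follows the paper's route: a one-step descent inequality from $L$-smoothness, telescoping against $F_{\inf}$, then $\sqrt{a+b}\le\sqrt a+\sqrt b$ to get a pathwise bound in $\Delta_{N,2}$, and finally the conditional expectation given $S_1$ via Theorem~\ref{thm:quadratic-maximal-inequality}. Your Step~1 justification (replace $\tfrac{L\eta^2}{2}$ by $\tfrac{\eta}{2}$, then expand $\|g_t+e_t\|^2$ so the cross terms cancel) is the correct one, whereas the paper's cited combination of Young's inequality with $\|a+b\|^2\le 2\|a\|^2+2\|b\|^2$ would leave coefficients $+\tfrac{\eta}{2}$ on $\|g_t\|^2$ and $\tfrac{3\eta}{2}$ on $\|e_t\|^2$, so it does not literally give the stated one-step bound.
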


\begin{proof}
Write $e_t\coloneqq\widehat\Psi_{\omega_t,2}^{DR}-\Psi_{\omega_t}(P)$, so that $\|e_t\|_2\leq\Delta_{N,2}$. By $L$-smoothness and $\omega_{t+1}-\omega_t=-\eta(\Psi_{\omega_t}(P)+e_t)$,
\begin{equation*}
    \mathcal{F}_P(\omega_{t+1})\leq \mathcal{F}_P(\omega_t)-\eta\|\Psi_{\omega_t}(P)\|^2-\eta\langle\Psi_{\omega_t}(P),e_t\rangle+\frac{L\eta^2}{2}\|\Psi_{\omega_t}(P)+e_t\|^2.
\end{equation*}
Using $\eta\leq 1/L$, Young's inequality $-\langle a,b\rangle\leq\frac{1}{2}\|a\|^2+\frac{1}{2}\|b\|^2$, and $\|a+b\|^2\leq 2\|a\|^2+2\|b\|^2$, we obtain
\begin{equation*}
    \mathcal{F}_P(\omega_{t+1})\leq \mathcal{F}_P(\omega_t)-\frac{\eta}{2}\|\Psi_{\omega_t}(P)\|^2+\frac{\eta}{2}\|e_t\|^2.
\end{equation*}
Summing over $t=0,\dots,T-1$ and using $\|e_t\|\leq\Delta_{N,2}$,
\begin{equation*}
    \frac{\eta}{2}\sum_{t=0}^{T-1}\|\Psi_{\omega_t}(P)\|^2\leq \mathcal{F}_P(\omega_0)-F_{\inf}+\frac{\eta T}{2}\Delta_{N,2}^2.
\end{equation*}
Dividing by $\eta T/2$ and taking the minimum over $t$,
\begin{equation*}
    \min_{0\leq t<T}\|\Psi_{\omega_t}(P)\|^2\leq\frac{2\{\mathcal{F}_P(\omega_0)-F_{\inf}\}}{\eta T}+\Delta_{N,2}^2.
\end{equation*}
Taking square roots and using $\sqrt{a+b}\leq\sqrt{a}+\sqrt{b}$,
\begin{equation*}
    \min_{0\leq t<T}\|\Psi_{\omega_t}(P)\|\leq\left(\frac{2\{\mathcal{F}_P(\omega_0)-F_{\inf}\}}{\eta T}\right)^{1/2}+\Delta_{N,2}.
\end{equation*}
Taking conditional expectations and applying Theorem~\ref{thm:quadratic-maximal-inequality} gives the result.
\end{proof}

\section{Nuisance Learning and Plug-in Hypergradients}
\label{sec:appendix_nuisance_plugin}

\subsection{Nuisance learning}
\label{sec:nuisance-learning-discussion}

This appendix records standard sufficient conditions under which the nuisance assumptions used in Sections~\ref{sec:debiased-clt} and~\ref{sec:maximal-inequality} can be verified. In the quadratic specialization, the nuisance targets are conditional expectations:
\begin{equation*}
    h_{\omega,P}^\star(X)=\mathbb{E}_P[g_\omega(Z)\mid X],\qquad j_{k,\omega,P}^\star(X)=\mathbb{E}_P[\partial_{\omega_k}g_\omega(Z)\mid X],\qquad m^\star_P(X)=\mathbb{E}_P[Y\mid X].
\end{equation*}
Thus nuisance learning reduces to standard regression. For example, define
\begin{equation*}
    R_\omega^h(h)=\frac12\mathbb{E}_P\|h(X)-g_\omega(Z)\|^2,\qquad R_\omega^j(j)=\frac12\mathbb{E}_P\|j(X)-\partial_\omega g_\omega(Z)\|_{2,d}^2,
\end{equation*}
and $R^m(m)=\frac12\mathbb{E}_P\|m(X)-Y\|^2$. Their minimizers are respectively $h_{\omega,P}^\star$, $j_{\omega,P}^\star$, and $m^\star_P$.

A generic verification route is the following. Let $\mathcal H_n,\mathcal J_n,\mathcal M_n$ be the classes used to estimate $h_{\omega,P}^\star$, $j_{\omega,P}^\star$, and $m^\star_P$. Let $h_{n,\omega}^\circ$, $j_{n,\omega}^\circ$, and $m_n^\circ$ denote the corresponding population risk minimizers over these classes. For squared loss,
\begin{equation*}
    R_\omega^h(h)-R_\omega^h(h_\omega^\star)=\frac12\|h-h_\omega^\star\|_{\LL}^2,
\end{equation*}
and similarly for \(j\) and \(m\). Hence an oracle inequality such as $R_\omega^h(\hat h_\omega)-R_\omega^h(h_{n,\omega}^\circ)=O_p(\delta_{h,n}^2)$ implies
\begin{equation*}
    \|\hat h_\omega-h_\omega^\star\|_{\LL}=O_p(a_{h,n}+\delta_{h,n}),\qquad a_{h,n}:=\|h_{n,\omega}^\circ-h_\omega^\star\|_{\LL}.
\end{equation*}
Uniformly over \(\omega\), the same argument gives
\begin{equation*}
    \sup_{\omega\in\Omega}\|\hat h_\omega-h_\omega^\star\|_{\LL}=O_p(a_{h,n}+\delta_{h,n})
\end{equation*}
whenever the oracle inequality is uniform in \(\omega\). Analogous bounds hold for $\hat j_\omega$ and $\hat m$. Such oracle inequalities are standard consequences of empirical-process bounds for least-squares estimators over entropy- or Rademacher-controlled classes \citep{vanderVaart1996weak,vanDeGeer2000empirical,kosorok2008introduction,wainwright2019high}.

If an empirical nuisance risk has multiple minimizers, the estimator is understood as a measurable selection satisfying the stated \(L^2(P_X)\) rate. The theory requires convergence to the selected population nuisance; arbitrary empirical selections that oscillate among distinct minimizers are not covered.

The rate $\delta_n$ depends on the size of the regression class. A common formulation uses a localized critical radius. If $\delta_n$ satisfies a local Rademacher complexity inequality of the form
\begin{equation*}
    \mathfrak R_n\bigl(\mathcal F_n\cap\{f:\|f-f_n^\circ\|_{\LL}\le \delta_n\}\bigr)\lesssim \delta_n^2,
\end{equation*}
then least-squares ERM over \(\mathcal F_n\) typically satisfies $\|\hat f-f^\star\|_{\LL}=O_p(a_n+\delta_n)$, up to standard boundedness and measurability conditions \citep{bartlett2005local,wainwright2019high,vanDerLaan2026researchers}. For classes with metric entropy $\log N(\varepsilon,\mathcal F_n,L_2)\lesssim \varepsilon^{-p}$, $p\in(0,2)$, the corresponding nonparametric norm rate is typically $n^{-1/(2+p)}$ modulo approximation error. Since $p<2$ gives $1/(2+p)>1/4$, such rates are compatible with the second-order conditions in Assumption~\ref{assum:second-order-rates}, provided the approximation terms are negligible.

For fixed-$\omega$ inference, the required conditions are the score stability and product rates in Assumptions~\ref{assum:score-stability}--\ref{assum:second-order-rates}. In the quadratic score,
\begin{equation*}
    \bigl\|P\varphi_\omega(\cdot;\hat h_\omega,\hat j_\omega,\hat m)-\Psi_\omega(P)\bigr\|_2\le\|\hat j_\omega-j_{\omega,P}^\star\|_{\LL^d}\bigl(\|\hat h_\omega-h_{\omega,P}^\star\|_{\LL}+\|\hat m-m^\star_P\|_{\LL}\bigr).
\end{equation*}
Thus, if
\begin{equation*}
    \|\hat j_\omega-j_{\omega,P}^\star\|_{\LL^d}\bigl(\|\hat h_\omega-h_{\omega,P}^\star\|_{\LL}+\|\hat m-m^\star_P\|_{\LL}\bigr)=o_p(N^{-1/2}),
\end{equation*}
and the quadratic remainder condition
\begin{equation*}
    \|\hat h_\omega-h_{\omega,P}^\star\|_{\LL}+\|\hat j_\omega-j_{\omega,P}^\star\|_{\LL^d}=o_p(N^{-1/4})
\end{equation*}
holds, then the CLT in Theorem~\ref{thm:CLT} applies. If all nuisance errors have the same order, this requires $o_p(N^{-1/4})$ rates, not merely $O_p(N^{-1/4})$. This is the usual rate requirement in orthogonal and doubly robust estimation: orthogonality removes first-order nuisance bias, while second-order remainders must still be negligible \citep{chernozhukov2018double,foster2023orthogonal,vanDerLaan2025automatic}.

For the uniform result in Section~\ref{sec:maximal-inequality}, one also needs complexity control of the score classes generated by the learned nuisance paths. Assumptions~\ref{ass:maximal-bounded-nuisance-classes} and~\ref{ass:maximal-nuisance-entropy} give a sufficient condition: conditionally on the training fold, the learned nuisances lie in bounded classes with polynomial entropy. For kernel ridge regression with bounded kernel $K(x,x)\le\kappa^2$ and ridge parameter $\lambda_n>0$, comparison with the zero function gives $\lambda_n\|\hat f\|_{\mathcal H_K}^2\le C^2$ when the regression target is bounded by $C$. Hence $\|\hat f\|_{\mathcal H_K}\le C/\sqrt{\lambda_n}$ and $\|\hat f\|_\infty\le \kappa C/\sqrt{\lambda_n}$. Entropy or Rademacher bounds for the induced RKHS ball then yield the required class-complexity control \citep{scholkopf2001generalized,steinwart2008support,caponnetto2007optimal}. For neural-network classes with bounded depth, width, activation Lipschitz constants, and weight norms, standard covering-number bounds give entropy of order $s_n\log(C_n/\varepsilon)$, where $s_n$ is the number of effective parameters and $C_n$ depends on the norm constraints. If the architecture grows with $n$, these constants grow accordingly and must be small enough for the localized critical-radius terms in Theorem~\ref{thm:quadratic-maximal-inequality} to vanish.

For nonquadratic inner problems, additional Riesz or adjoint correction nuisances may appear. The same verification principle applies: one needs oracle inequalities for the additional nuisance learners and product-rate conditions matching the corresponding von Mises remainder \citep{vanDerLaan2025automatic}.

\subsection{Plug-in hypergradient estimation}
\label{sec:plugin-appendix}

This section records what the direct plug-in hypergradient estimates in the quadratic specialization, and why it should not be expected to deliver semiparametric-efficient inference. Functional implicit differentiation and KBO both lead to plug-in hypergradients in this broad sense: one estimates the inner solution, or an adjoint/sensitivity representation of it, and substitutes the estimate into a gradient formula \citep{petrulionyte2024functional,elkhoury2025learning}. This is natural for optimization. It is not, by itself, an orthogonal statistical estimator.

Throughout this subsection we use the quadratic inner loss and squared outer loss, $\ell^{in}_\omega(h)(O)=\frac12\|h(X)-g_\omega(Z)\|^2$ and $\ell^{out}_\omega(h)(O)=\frac12\|Y-h(X)\|^2$. Thus $h^\star_{\omega,P}(X)=\mathbb{E}_P[g_\omega(Z)\mid X]$, $j^\star_{k,\omega,P}(X)=\mathbb{E}_P[\partial_{\omega_k}g_\omega(Z)\mid X]$, and $m^\star_P(X)=\mathbb{E}_P[Y\mid X]$.

For clarity, we analyze the sample-split plug-in estimator. Let $\hat h_\omega$ and $\hat j_\omega=(\hat j_{1,\omega},\dots,\hat j_{d,\omega})$ be trained on $S_1$ and evaluated on the independent fold $S_2$:
\begin{equation}
\label{eq:plugin-estimator}
    \widehat{\Psi}^{\,\mathrm{PI}}_{\omega,2,k}\coloneqq P_{n,2}\big[\langle \hat h_\omega(X)-Y,\hat j_{k,\omega}(X)\rangle\big],\qquad k=1,\dots,d.
\end{equation}
The full-sample version replaces $P_{n,2}$ by $P_n$, but then the empirical-process term involves a function estimated on the same sample on which it is averaged. The sample-split version isolates the statistical issue cleanly.

\paragraph{Exact plug-in bias.}
For arbitrary $h,v\in\LL$, define
\begin{equation*}
    \varphi^{\mathrm{PI}}_{k,\omega}(O;h,v)\coloneqq\langle h(X)-Y,v(X)\rangle.
\end{equation*}
Since $h(X)$ and $v(X)$ are functions of $X$,
\begin{equation*}
    P\,\varphi^{\mathrm{PI}}_{k,\omega}(\cdot;h,v)=\langle h-m^\star_P,v\rangle_{\LL}.
\end{equation*}
Also,
\begin{equation*}
    [\Psi_\omega(P)]_k=\langle h^\star_{\omega,P}-m^\star_P,j^\star_{k,\omega,P}\rangle_{\LL}.
\end{equation*}
Therefore
\begin{equation}
\label{eq:plugin-exact-bias}
\begin{aligned}
    &P\,\varphi^{\mathrm{PI}}_{k,\omega}(\cdot;h,v)-[\Psi_\omega(P)]_k\\
    &\qquad =\langle h-h^\star_{\omega,P},j^\star_{k,\omega,P}\rangle_{\LL}+\langle h^\star_{\omega,P}-m^\star_P,v-j^\star_{k,\omega,P}\rangle_{\LL}+\langle h-h^\star_{\omega,P},v-j^\star_{k,\omega,P}\rangle_{\LL}.
\end{aligned}
\end{equation}
The first two terms are first-order nuisance errors. Thus the plain plug-in hypergradient is not Neyman-orthogonal with respect to either $h^\star_\omega$ or $j^\star_{k,\omega}$.

In particular, if $\|j^\star_{k,\omega,P}\|_{\LL}<\infty$ and $\|h^\star_{\omega,P}-m^\star_P\|_{\LL}<\infty$, then
\begin{equation}
\label{eq:plugin-bias-bound}
\begin{aligned}
    \big|P\,\varphi^{\mathrm{PI}}_{k,\omega}(\cdot;h,v)-[\Psi_\omega(P)]_k\big|&\le\|j^\star_{k,\omega,P}\|_{\LL}\|h-h^\star_{\omega,P}\|_{\LL}\\
    &\quad+\|h^\star_{\omega,P}-m^\star_P\|_{\LL}\|v-j^\star_{k,\omega,P}\|_{\LL}\\
    &\quad+\|h-h^\star_{\omega,P}\|_{\LL}\|v-j^\star_{k,\omega,P}\|_{\LL}.
\end{aligned}
\end{equation}

\begin{proposition}[Rate of the sample-split plug-in estimator]
\label{prop:plugin-rate}
Fix $\omega\in\Omega$ and $k\in\{1,\dots,d\}$. Suppose $\hat h_\omega$ and $\hat j_{k,\omega}$ are trained on $S_1$, and suppose
\begin{equation*}
    P\big[\varphi^{\mathrm{PI}}_{k,\omega}(\cdot;\hat h_\omega,\hat j_{k,\omega})^2\big]=O_p(1).
\end{equation*}
If $\|\hat h_\omega-h^\star_{\omega,P}\|_{\LL}=O_p(n^{-\alpha})$ and $\|\hat j_{k,\omega}-j^\star_{k,\omega,P}\|_{\LL}=O_p(n^{-\beta})$, then
\begin{equation*}
    \widehat{\Psi}^{\,\mathrm{PI}}_{\omega,2,k}-[\Psi_\omega(P)]_k=O_p\left(n^{-1/2}+n^{-\alpha}+n^{-\beta}+n^{-(\alpha+\beta)}\right).
\end{equation*}
\end{proposition}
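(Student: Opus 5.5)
The plan is to split the error into a conditional empirical-process term and a population bias term, then bound the bias with the exact identity \eqref{eq:plugin-exact-bias}. Write $\hat f\coloneqq\varphi^{\mathrm{PI}}_{k,\omega}(\cdot;\hat h_\omega,\hat j_{k,\omega})$. The decomposition is
\begin{equation*}
    \widehat{\Psi}^{\,\mathrm{PI}}_{\omega,2,k}-[\Psi_\omega(P)]_k=(P_{n,2}-P)\hat f+\bigl(P\hat f-[\Psi_\omega(P)]_k\bigr).
\end{equation*}

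The first term is handled by conditioning on $S_1$. Then $\hat f$ is a fixed function, and $S_2$ is an i.i.d.\ sample of size $n$ independent of $S_1$. Chebyshev's inequality gives
\begin{equation*}
    \mathbb{E}\bigl[\{(P_{n,2}-P)\hat f\}^2\mid S_1\bigr]\le n^{-1}P\hat f^2 .
\end{equation*}
By the assumption $P\hat f^2=O_p(1)$, for any $\varepsilon>0$ we can pick $M$ such that $P\hat f^2\le M$ with probability at least $1-\varepsilon$. On that event, the conditional Chebyshev bound yields $\Pr\bigl(|(P_{n,2}-P)\hat f|>t n^{-1/2}\mid S_1\bigr)\le M/t^2$. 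Integrating over $S_1$ gives $(P_{n,2}-P)\hat f=O_p(n^{-1/2})$. This conditioning step is the only place that needs care, since $\hat f$ is random; once it is done, the rest is deterministic.

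For the bias term, apply \eqref{eq:plugin-exact-bias} with $h=\hat h_\omega$ and $v=\hat j_{k,\omega}$, and then the Cauchy--Schwarz bound \eqref{eq:plugin-bias-bound}. The constants $\|j^\star_{k,\omega,P}\|_{\LL}\le D^{1/2}$ and $\|h^\star_{\omega,P}-m^\star_P\|_{\LL}\le A+B$ follow from conditional Jensen and \Cref{assum:eif-integrability}. The three resulting terms are $O_p(n^{-\alpha})$, $O_p(n^{-\beta})$ and $O_p(n^{-\alpha})O_p(n^{-\beta})=O_p(n^{-(\alpha+\beta)})$. Summing the empirical-process term and the bias term gives the stated rate. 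If one prefers not to invoke \Cref{assum:eif-integrability}, the finiteness conditions stated just before \eqref{eq:plugin-bias-bound} are enough, because the constants are deterministic and finite.
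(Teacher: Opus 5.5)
Your proposal is correct and follows the same route as the paper: you split the error into a conditional empirical-process term, which is $O_p(n^{-1/2})$ because $\hat f$ is fixed given $S_1$ and $S_2$ is independent, and a population bias term bounded via \eqref{eq:plugin-bias-bound}. You spell out the conditional Chebyshev argument and the explicit constants $\|j^\star_{k,\omega,P}\|_{\LL}\le D^{1/2}$ and $\|h^\star_{\omega,P}-m^\star_P\|_{\LL}\le A+B$, both of which the paper leaves implicit.
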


\begin{proof}
Conditionally on $S_1$, the function $\varphi^{\mathrm{PI}}_{k,\omega}(\cdot;\hat h_\omega,\hat j_{k,\omega})$ is fixed, and the evaluation observations in $S_2$ are independent. Hence
\begin{equation*}
    (P_{n,2}-P)\varphi^{\mathrm{PI}}_{k,\omega}(\cdot;\hat h_\omega,\hat j_{k,\omega})=O_p(n^{-1/2}).
\end{equation*}
The population bias is bounded by \eqref{eq:plugin-bias-bound}. Substituting the assumed nuisance rates gives the result.
\end{proof}

This proposition should be read literally. For the direct plug-in estimator, the generic bias is not a product of nuisance errors. Root-$n$ consistency requires the first-order terms in \eqref{eq:plugin-exact-bias} to be $O_p(n^{-1/2})$ or smaller; asymptotic linearity centered at $\Psi_\omega(P)$ requires them to be $o_p(n^{-1/2})$. Under generic nuisance rates, this means that both $\hat h_\omega$ and $\hat j_{k,\omega}$ must be estimated at essentially parametric accuracy. A product-rate condition such as $\alpha+\beta>1/2$ is not enough for the plain plug-in estimator unless the two linear terms in \eqref{eq:plugin-exact-bias} vanish for additional, problem-specific reasons.

\paragraph{Asymptotic distribution and efficiency.}
If the plug-in bias in \eqref{eq:plugin-exact-bias} is $o_p(n^{-1/2})$ and
\begin{equation*}
    \bigl\|\varphi^{\mathrm{PI}}_{k,\omega}(\cdot;\hat h_\omega,\hat j_{k,\omega})-\varphi^{\mathrm{PI}}_{k,\omega}(\cdot;h^\star_{\omega,P},j^\star_{k,\omega,P})\bigr\|_{L^2(P)}=o_p(1),
\end{equation*}
then the sample-split plug-in estimator is asymptotically linear with influence function
\begin{equation*}
    \varphi^{\mathrm{PI}}_{k,\omega}(\cdot;h^\star_{\omega,P},j^\star_{k,\omega,P})-[\Psi_\omega(P)]_k.
\end{equation*}
Consequently,
\begin{equation*}
    \sqrt n\left(\widehat{\Psi}^{\,\mathrm{PI}}_{\omega,2,k}-[\Psi_\omega(P)]_k\right)\xrightarrow{d}\mathcal N\left(0,\operatorname{Var}_{P}\left[\langle h^\star_{\omega,P}(X)-Y,j^\star_{k,\omega,P}(X)\rangle\right]\right).
\end{equation*}
This is not the semiparametric efficiency bound in general. The efficient influence function from Theorem~\ref{thm:EIF} subtracts additional correction terms. Unless those correction terms are identically zero, the plug-in influence function is not the canonical gradient and its variance is not $\Sigma_{P,\omega,kk}$.

\paragraph{Contrast with the debiased estimator.}
In the same quadratic specialization, the debiased score used in Section~\ref{sec:debiased-clt} is
\begin{equation*}
    \varphi_{k,\omega}(O;h,j,m)=\langle g_\omega(Z)-Y,j_k(X)\rangle+\langle \partial_{\omega_k}g_\omega(Z)-j_k(X),h(X)-m(X)\rangle.
\end{equation*}
For arbitrary $h,j,m$, its population bias satisfies
\begin{equation}
\label{eq:dr-product-bias}
    P\,\varphi_{k,\omega}(\cdot;h,j,m)-[\Psi_\omega(P)]_k=-\langle j_k-j^\star_{k,\omega,P},h-h^\star_{\omega,P}\rangle_{\LL}+\langle j_k-j^\star_{k,\omega,P},m-m^\star_P\rangle_{\LL}.
\end{equation}
Thus
\begin{equation*}
    \left|P\,\varphi_{k,\omega}(\cdot;h,j,m)-[\Psi_\omega(P)]_k\right|\le\|j_k-j^\star_{k,\omega,P}\|_{\LL}\left(\|h-h^\star_{\omega,P}\|_{\LL}+\|m-m^\star_P\|_{\LL}\right).
\end{equation*}
The first-order terms present in the plug-in bias \eqref{eq:plugin-exact-bias} are absent. This is the practical content of orthogonality: the estimator can be root-$n$ and asymptotically normal under product-rate nuisance conditions, such as
\begin{equation*}
    \|\hat j_{k,\omega}-j^\star_{k,\omega}\|_{\LL}\left(\|\hat h_\omega-h^\star_{\omega,P}\|_{\LL}+\|\hat m-m^\star_P\|_{\LL}\right)=o_p(n^{-1/2}),
\end{equation*}
rather than requiring each nuisance to be estimated at a parametric rate. Rates at the $n^{-1/4}$ boundary are not enough as big-$O$ statements; the product must be little-$o(n^{-1/2})$ for the centered CLT in Theorem~\ref{thm:CLT}.

This is the core distinction. The plug-in hypergradient estimates the correct population gradient only if the learned lower-level solution and sensitivity are accurate enough to make the first-order bias negligible. The debiased estimator changes the score so that those first-order terms cancel, leaving only second-order nuisance products and the empirical average of the efficient influence function.

\section{Additional Experiment Details}
\label{app:experiment-details}

This appendix gives the data-generating processes (DGPs), implementation details, and full numerical outputs behind Section~\ref{sec:experiments}. The main text reports only compressed tables and the four main diagnostics; here we report the complete Monte Carlo tables, nuisance diagnostics, KBO decompositions, and root-estimation results. Throughout, ``Plug-in'' (PI) in generated tables refers to the plug-in hypergradient baseline, and OBiGrad denotes the proposed orthogonal estimator.

\paragraph{Computation.}
All experiments were run locally on a CPU-only Apple MacBook Pro with an Apple M4 chip, 10 CPU cores, 24GB memory, macOS 15.0, and arm64 Darwin kernel. No external GPU, cluster, or remote compute was used. All scripts use fixed random seeds and write configuration files alongside the results.

\paragraph{Reporting conventions.}
Gradient errors are Euclidean RMSEs of $\widehat\Psi^{DR}_\omega$ relative to the unregularized population target $\Psi_\omega(P)$. Wald coverage is coordinate-wise and averaged across gradient coordinates, using the confidence intervals \eqref{eq:wald-ci}. KBO results are reported relative to $\Psi_\omega(P)$, even though fixed-$\lambda$ KBO targets a regularized gradient $\Psi_{\omega,\lambda}(P)$. In KBO tables, ``KBO total'' is $\|\widehat\Psi^{\rm KBO}_{\omega,\lambda}-\Psi_\omega(P)\|$, ``KBO estimation'' is $\|\widehat\Psi^{\rm KBO}_{\omega,\lambda}-\Psi_{\omega,\lambda}(P)\|$, and ``Reg.\ bias'' is $\|\Psi_{\omega,\lambda}(P)-\Psi_\omega(P)\|$, so that by the triangle inequality
\begin{equation*}
    \|\widehat\Psi^{\rm KBO}_{\omega,\lambda}-\Psi_\omega(P)\|\leq\|\widehat\Psi^{\rm KBO}_{\omega,\lambda}-\Psi_{\omega,\lambda}(P)\|+\|\Psi_{\omega,\lambda}(P)-\Psi_\omega(P)\|.
\end{equation*}
Monte Carlo error bars are $95\%$ intervals across replications.

\subsection{Instrumental-variable experiments}
\label{app:iv-experiments}

\paragraph{IV DGP.}
The IV experiments use a two-stage conditional-projection design, following the IV example of Section~\ref{sec:problem}. Here $X$ is the treatment, $Z$ is the instrument, and the lower-level problem projects $g_\omega(Z)$ onto functions of $X$. We draw $X\sim\mathcal{N}(0,I_p)$ with $p=3$, set
\begin{equation*}
    Z=2\sum_{j=1}^p X_j+\eta,\qquad\eta\sim\mathcal{N}(0,\sigma_Z^2),\qquad\sigma_Z^2=0.1,
\end{equation*}
and define $\phi_\ell(z)=\sin(z+\ell)$, $\ell=1,\ldots,d$, with $d=4$ and $g_\omega(Z)=\omega^\top\phi(Z)$. The structural parameter is $\omega^\star=(1,2,3,4)^\top/\sqrt{30}$. For the gradient estimation and inference experiments, $Y=\omega^{\star\top}\phi(Z)+\varepsilon_Y$ with $\varepsilon_Y\sim\mathcal{N}(0,0.25^2)$, which isolates gradient estimation and calibration. For the KBO regularization experiment, we use $Y=\omega^{\star\top}\phi(Z)+0.5\eta$, which preserves $\mathbb{E}_P[Y\mid X]$ and hence $\Psi_\omega(P)$, but introduces correlation between the outcome noise and $Z$. We evaluate the population gradient at the fixed non-stationary point
\begin{equation*}
    \omega_0=\omega^\star+0.35\frac{(1,1/3,-1/3,-1)^\top}{\|(1,1/3,-1/3,-1)\|},
\end{equation*}
chosen away from $\omega^\star$ to measure gradient-estimation error at a non-stationary point.

The following analytic expressions are used only to compute the ground-truth population target $\Psi_\omega(P)$ for evaluation; they are not used by the estimator:
\begin{equation*}
    j^\star_\ell(X)=\mathbb{E}_P[\phi_\ell(Z)\mid X]=\exp(-\sigma_Z^2/2)\sin\!\left(2\sum_{j=1}^p X_j+\ell\right),
\end{equation*}
with $h^\star_\omega(X)=j^\star(X)^\top\omega$ and $m^\star_P(X)=j^\star(X)^\top\omega^\star$. Hence
\begin{equation*}
    \Psi_\omega(P)=A(\omega-\omega^\star),\qquad A_{k\ell}=\frac{\exp(-\sigma_Z^2)}{2}\left\{\cos(k-\ell)-\exp(-8p)\cos(k+\ell)\right\},
\end{equation*}
where the term $\exp(-8p)$ is negligible for $p=3$. Feasible IV nuisance learners use ridge regression on Fourier features of $\sum_j X_j$, with frequencies $1,\ldots,8$, intercept included, and ridge parameter $10^{-6}$. Two-fold cross-fitting is used throughout.

\paragraph{IV Experiment 1: fixed-gradient estimation.}
This experiment estimates $\Psi_{\omega_0}(P)$ for $N\in\{200,400,800,1600,3200\}$ over $300$ replications, comparing the plug-in hypergradient, OBiGrad, and the oracle DR benchmark. The nuisance product proxy
\begin{equation*}
    \|\hat j_\omega-j^\star_{\omega,P}\|_{\LL^d}\left(\|\hat h_\omega-h^\star_{\omega,P}\|_{\LL}+\|\hat m-m^\star_P\|_{\LL}\right)
\end{equation*}
is reported to track the second-order remainder in \eqref{eq:vme}. Tables~\ref{tab:generated-iv-gradient}--\ref{tab:generated-iv-nuisance} and Figure~\ref{fig:app-iv-gradient-rmse} show that OBiGrad improves over the plug-in hypergradient at small $N$ and approaches the oracle DR benchmark as the nuisance product decreases with $N$.

\begin{table}[ht]
\centering
\small
\setlength{\tabcolsep}{5pt}
\caption{Fixed-$\omega_0$ IV gradient estimation. Parentheses report Monte Carlo $95\%$ error bars for RMSE.}
\label{tab:generated-iv-gradient}
\begin{tabular}{lccccc}
\toprule
$N$ & PI & \textbf{OBiGrad} & Oracle DR & \textbf{OBiGrad coverage} & Product bias \\
\midrule
200 & 0.0441 (0.0028) & \textbf{0.0388 (0.0026)} & 0.0359 (0.0021) & \textbf{0.958 (0.014)} & 0.026 \\
400 & 0.0251 (0.0014) & \textbf{0.0249 (0.0014)} & 0.0242 (0.0013) & \textbf{0.961 (0.015)} & 0.011 \\
800 & 0.0195 (0.0011) & \textbf{0.0193 (0.0011)} & 0.0190 (0.0011) & \textbf{0.958 (0.015)} & 0.005 \\
1600 & 0.0134 (7.36e-4) & \textbf{0.0132 (7.50e-4)} & 0.0131 (7.28e-4) & \textbf{0.958 (0.015)} & 0.002 \\
3200 & 0.0098 (5.03e-4) & \textbf{0.0098 (5.06e-4)} & 0.0097 (5.04e-4) & \textbf{0.939 (0.018)} & 0.001 \\
\bottomrule
\end{tabular}
\end{table}

\begin{table}[ht]
\centering
\small
\setlength{\tabcolsep}{5pt}
\caption{Nuisance-learning diagnostics for the fixed-$\omega_0$ IV experiment.}
\label{tab:generated-iv-nuisance}
\begin{tabular}{lcccc}
\toprule
$N$ & $\|\hat h_\omega-h^\star_{\omega,P}\|_{\LL}$ & $\|\hat j_\omega-j^\star_{\omega,P}\|_{\LL^d}$ & $\|\hat m-m^\star_P\|_{\LL}$ & Product bias \\
\midrule
200 & 0.0892 & 0.1039 & 0.1577 & 0.0262 \\
400 & 0.0580 & 0.0663 & 0.1051 & 0.0109 \\
800 & 0.0403 & 0.0464 & 0.0724 & 0.0053 \\
1600 & 0.0274 & 0.0318 & 0.0492 & 0.0025 \\
3200 & 0.0196 & 0.0227 & 0.0354 & 0.0013 \\
\bottomrule
\end{tabular}
\end{table}

\begin{figure}[ht]
    \centering
    \includegraphics[width=.58\linewidth]{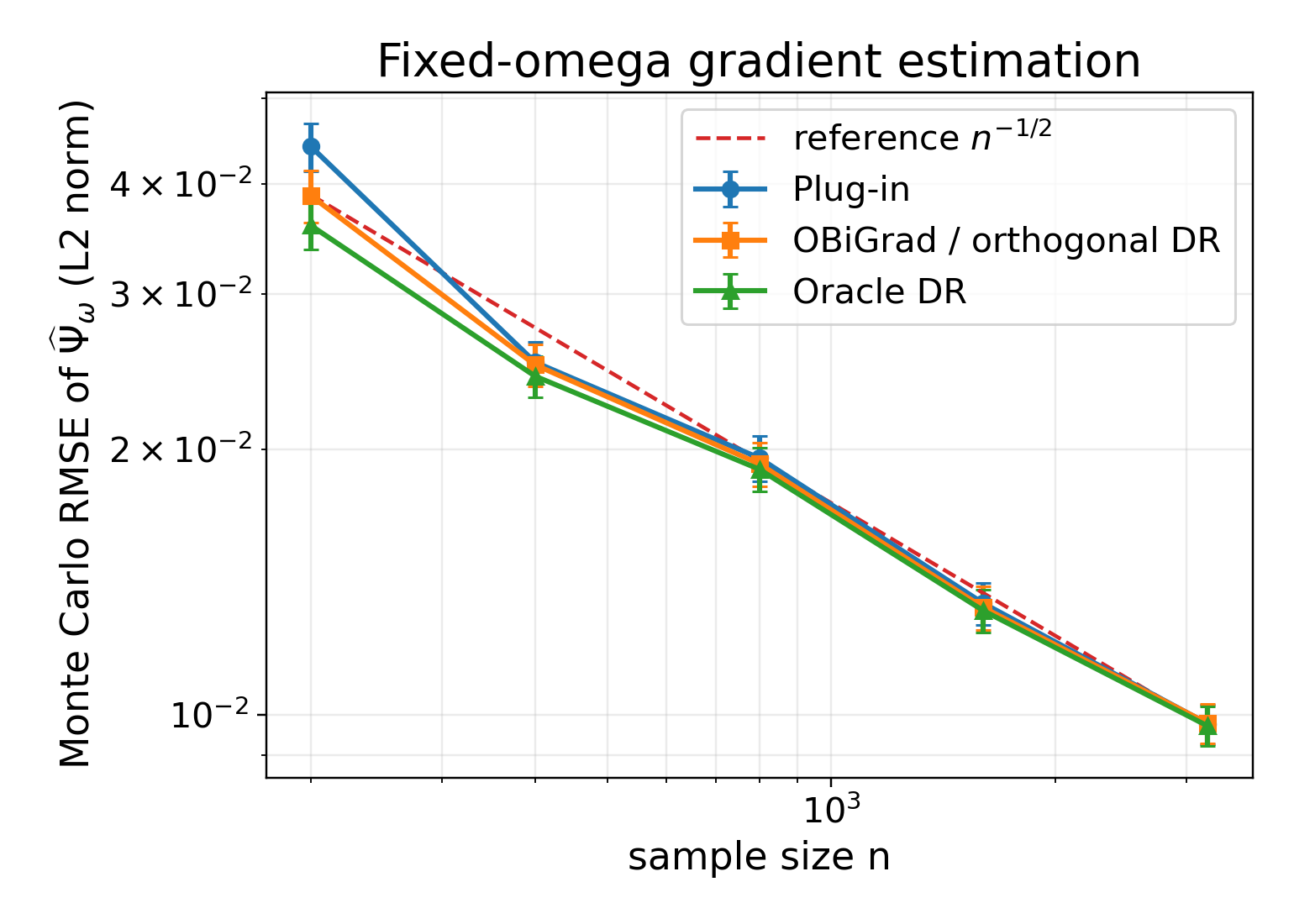}
    \caption{IV fixed-gradient estimation. OBiGrad tracks the oracle DR benchmark and improves over PI at smaller sample sizes.}
    \label{fig:app-iv-gradient-rmse}
\end{figure}

\paragraph{IV Experiment 2: Wald calibration.}
This experiment uses the same vector sine-IV design and forms coordinate-wise $95\%$ Wald confidence intervals \eqref{eq:wald-ci} using the empirical covariance estimator. Results are averaged over $500$ replications. Tables~\ref{tab:generated-iv-wald}--\ref{tab:generated-iv-studentized} and Figures~\ref{fig:app-iv-wald-curves}--\ref{fig:app-iv-studentized} report coverage, interval length, RMSE, oracle coverage (using the true nuisances $\eta^\star_\omega$), and studentized QQ diagnostics. OBiGrad coverage remains close to nominal across sample sizes, while the QQ diagnostic at $N=3200$ supports the Gaussian approximation of Theorem~\ref{thm:CLT}.

\begin{table}[ht]
\centering
\small
\setlength{\tabcolsep}{5pt}
\caption{OBiGrad Wald calibration on the IV design.}
\label{tab:generated-iv-wald}
\begin{tabular}{lccccc}
\toprule
$N$ & Coverage & Length & RMSE & Product bias & Oracle coverage \\
\midrule
200 & 0.949 (0.010) & 0.0765 (3.04e-4) & 0.0202 (7.34e-4) & 0.026 & 0.951 (0.009) \\
400 & 0.964 (0.008) & 0.0532 (7.80e-5) & 0.0127 (3.70e-4) & 0.011 & 0.971 (0.007) \\
800 & 0.955 (0.009) & 0.0373 (4.00e-5) & 0.0094 (2.77e-4) & 0.005 & 0.954 (0.009) \\
1600 & 0.950 (0.010) & 0.0263 (1.98e-5) & 0.0068 (2.02e-4) & 0.002 & 0.951 (0.009) \\
3200 & 0.951 (0.009) & 0.0186 (9.82e-6) & 0.0047 (1.56e-4) & 0.001 & 0.953 (0.009) \\
\bottomrule
\end{tabular}
\end{table}

\begin{table}[ht]
\centering
\small
\setlength{\tabcolsep}{5pt}
\caption{Studentized-error diagnostics for the vector sine-IV inference experiment.}
\label{tab:generated-iv-studentized}
\begin{tabular}{lcccccc}
\toprule
$N$ & Mean & SD & 2.5\% & Median & 97.5\% & PI coverage \\
\midrule
200 & 0.008 & 1.025 & -1.960 & -6.65e-4 & 1.985 & 0.939 \\
400 & 0.012 & 0.942 & -1.753 & 0.010 & 1.875 & 0.973 \\
800 & -0.019 & 0.986 & -1.895 & -0.052 & 1.881 & 0.965 \\
1600 & 0.041 & 1.011 & -1.950 & 0.052 & 1.972 & 0.966 \\
3200 & -0.011 & 0.997 & -1.934 & -0.035 & 1.942 & 0.964 \\
\bottomrule
\end{tabular}
\end{table}

\begin{figure}[ht]
    \centering
    \includegraphics[width=.48\linewidth]{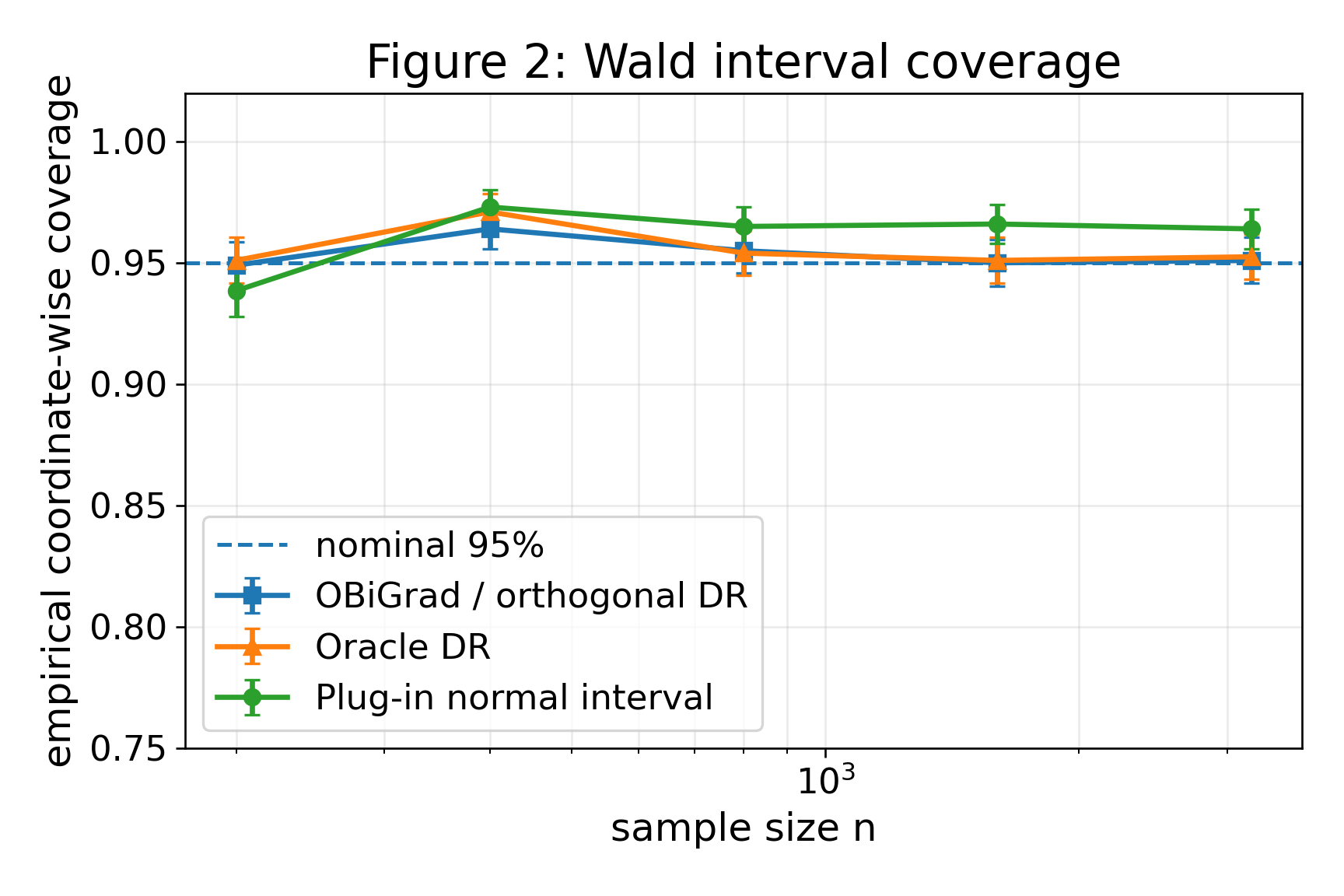}
    \hfill
    \includegraphics[width=.48\linewidth]{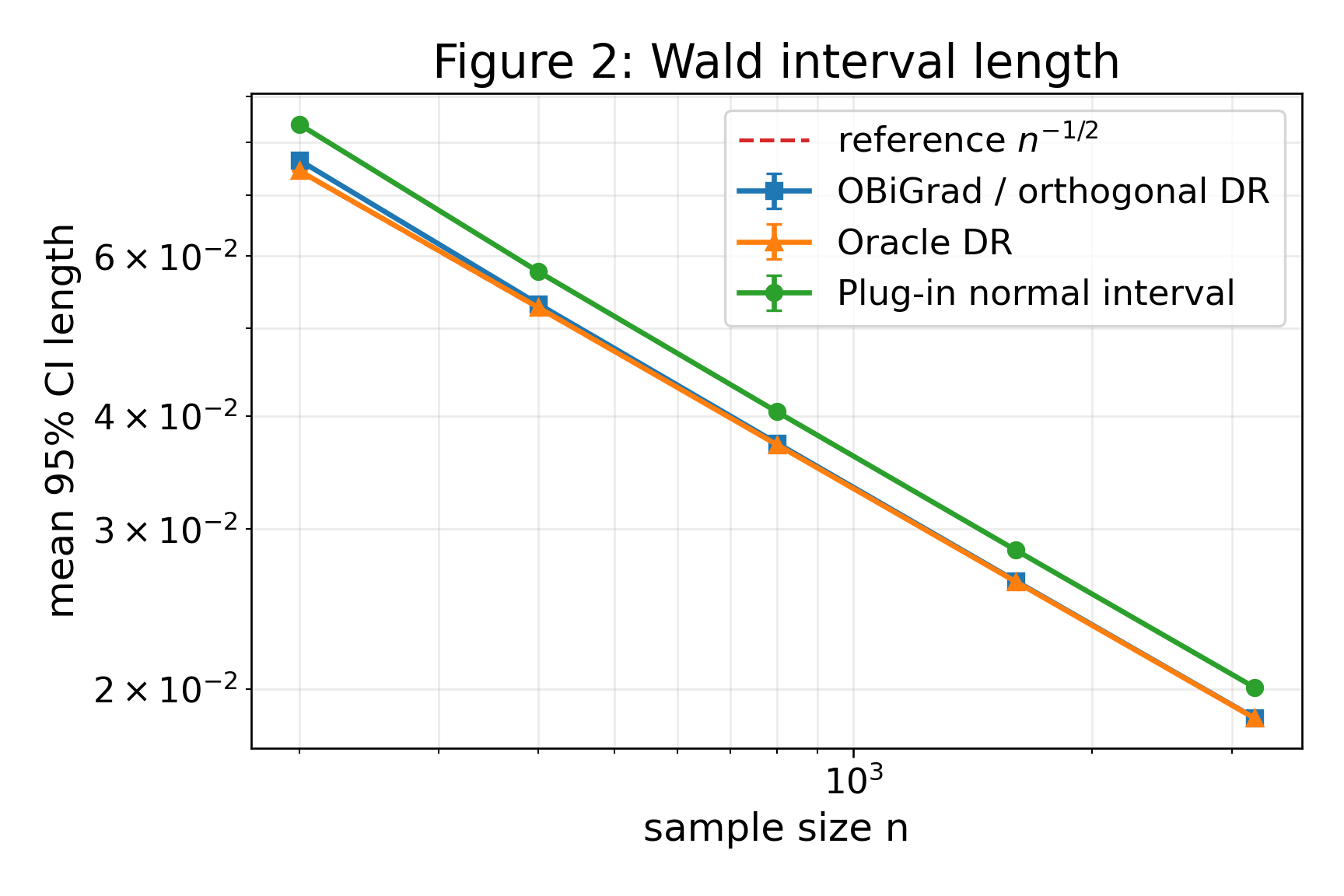}
    \caption{IV Wald diagnostics. Left: coordinate-wise coverage of nominal $95\%$ intervals. Right: average interval length.}
    \label{fig:app-iv-wald-curves}
\end{figure}
\newpage
\begin{figure}[ht]
    \centering
    \includegraphics[width=.48\linewidth]{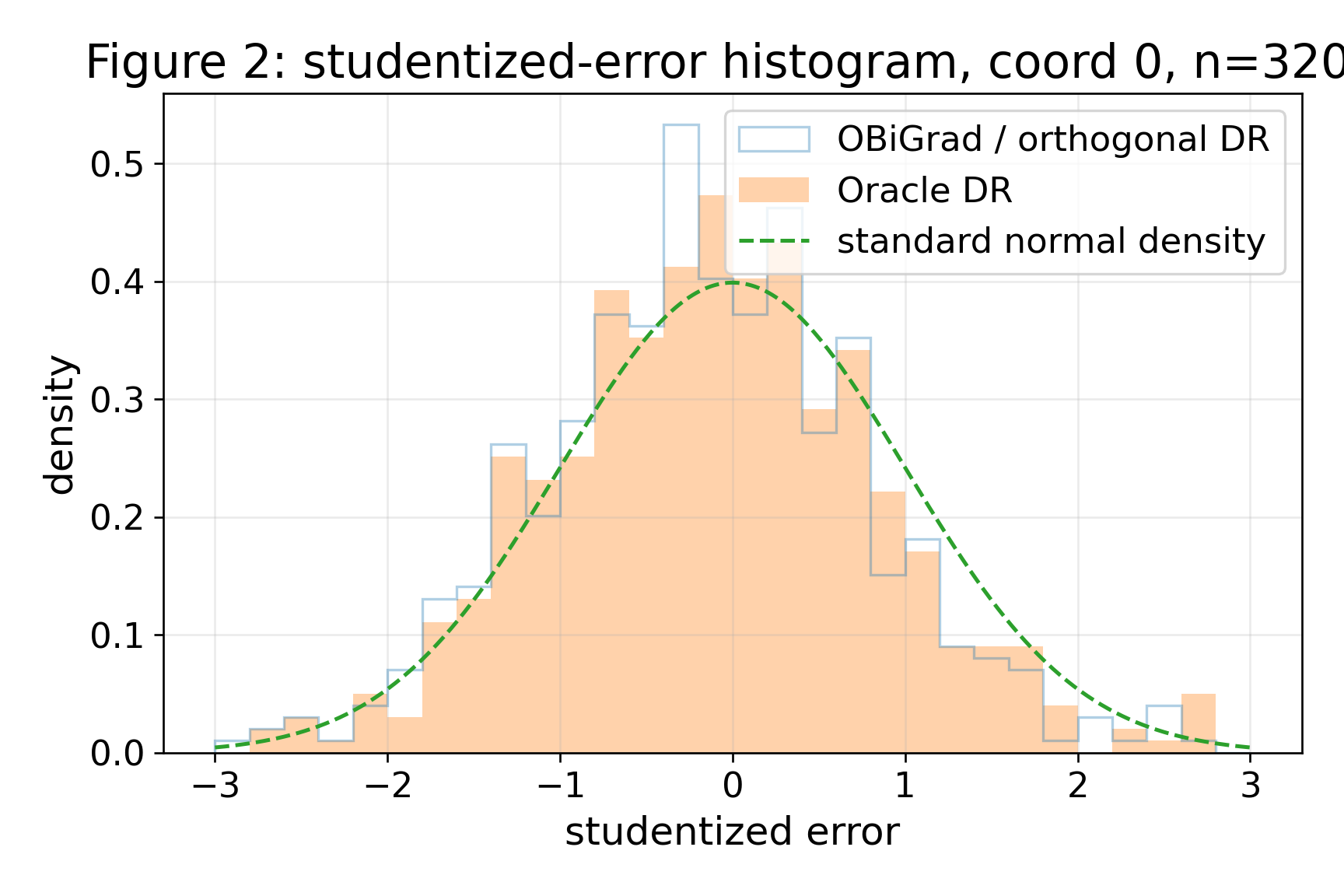}
    \hfill
    \includegraphics[width=.48\linewidth]{results/IV/figure2/figure2_studentized_qq_coord0_n3200.png}
    \caption{IV studentized OBiGrad errors for coordinate $0$ at $N=3200$.}
    \label{fig:app-iv-studentized}
\end{figure}

\paragraph{IV Experiment 3: KBO regularization bias.}
This experiment compares OBiGrad to fixed-$\lambda$ KBO at $N=600$, with $300$ replications and a Gaussian kernel bandwidth of $0.5$. KBO is evaluated on the grid $\lambda\in\{10^{-5},3\cdot10^{-5},10^{-4},3\cdot10^{-4},10^{-3},3\cdot10^{-3},10^{-2},3\cdot10^{-2},10^{-1}\}$. The regularized population target $\Psi_{\omega,\lambda}(P)$ is approximated by a Monte Carlo average using $3000$ samples and the analytic conditional nuisance functions derived in the IV DGP above. Tables~\ref{tab:generated-iv-kbo-gradient}--\ref{tab:generated-iv-kbo-gradient-appendix} and Figure~\ref{fig:app-iv-kbo} show the usual bias-variance tradeoff: small $\lambda$ reduces regularization bias $\|\Psi_{\omega,\lambda}(P)-\Psi_\omega(P)\|$ but increases estimation error; large $\lambda$ stabilizes estimation but shifts the target away from $\Psi_\omega(P)$, increasing regularization bias.

\begin{table}[ht]
\centering
\small
\setlength{\tabcolsep}{5pt}
\caption{KBO gradient-error decomposition. Parentheses report Monte Carlo $95\%$ error bars for RMSE.}
\label{tab:generated-iv-kbo-gradient}
\begin{tabular}{lcccc}
\toprule
$\lambda$ & KBO total & Reg. bias & KBO estimation & \textbf{OBiGrad} \\
\midrule
$10^{-5}$ & 0.0801 (0.0023) & 0.0272 & 0.0799 (0.0024) & \textbf{0.0191 (0.0011)} \\
$3\times 10^{-5}$ & 0.0567 (0.0022) & 0.0310 & 0.0523 (0.0020) & \textbf{0.0191 (0.0011)} \\
$10^{-4}$ & 0.0510 (0.0024) & 0.0409 & 0.0374 (0.0019) & \textbf{0.0191 (0.0011)} \\
$3\times 10^{-4}$ & 0.0624 (0.0024) & 0.0614 & 0.0308 (0.0018) & \textbf{0.0191 (0.0011)} \\
$10^{-3}$ & 0.0995 (0.0021) & 0.1056 & 0.0267 (0.0016) & \textbf{0.0191 (0.0011)} \\
$3\times 10^{-3}$ & 0.1584 (0.0018) & 0.1682 & 0.0237 (0.0014) & \textbf{0.0191 (0.0011)} \\
$10^{-2}$ & 0.2373 (0.0013) & 0.2465 & 0.0188 (0.0011) & \textbf{0.0191 (0.0011)} \\
$3\times 10^{-2}$ & 0.3021 (8.59e-4) & 0.3074 & 0.0123 (6.67e-4) & \textbf{0.0191 (0.0011)} \\
$10^{-1}$ & 0.3498 (4.27e-4) & 0.3517 & 0.0060 (3.07e-4) & \textbf{0.0191 (0.0011)} \\
\bottomrule
\end{tabular}
\end{table}

\begin{table}[ht]
\centering
\small
\setlength{\tabcolsep}{5pt}
\caption{Additional KBO and OBiGrad diagnostics for the IV regularization experiment.}
\label{tab:generated-iv-kbo-gradient-appendix}
\begin{tabular}{lcccc}
\toprule
$\lambda$ & KBO mean total & KBO mean estimation & \textbf{OBiGrad mean} & Oracle DR \\
\midrule
$10^{-5}$ & 0.0776 & 0.0772 & \textbf{0.0168} & 0.0191 (0.0011) \\
$3\times 10^{-5}$ & 0.0538 & 0.0494 & \textbf{0.0168} & 0.0191 (0.0011) \\
$10^{-4}$ & 0.0473 & 0.0342 & \textbf{0.0168} & 0.0191 (0.0011) \\
$3\times 10^{-4}$ & 0.0590 & 0.0274 & \textbf{0.0168} & 0.0191 (0.0011) \\
$10^{-3}$ & 0.0977 & 0.0236 & \textbf{0.0168} & 0.0191 (0.0011) \\
$3\times 10^{-3}$ & 0.1576 & 0.0210 & \textbf{0.0168} & 0.0191 (0.0011) \\
$10^{-2}$ & 0.2370 & 0.0168 & \textbf{0.0168} & 0.0191 (0.0011) \\
$3\times 10^{-2}$ & 0.3020 & 0.0110 & \textbf{0.0168} & 0.0191 (0.0011) \\
$10^{-1}$ & 0.3498 & 0.0054 & \textbf{0.0168} & 0.0191 (0.0011) \\
\bottomrule
\end{tabular}
\end{table}

\begin{figure}[ht]
    \centering
\includegraphics[width=.8\linewidth]{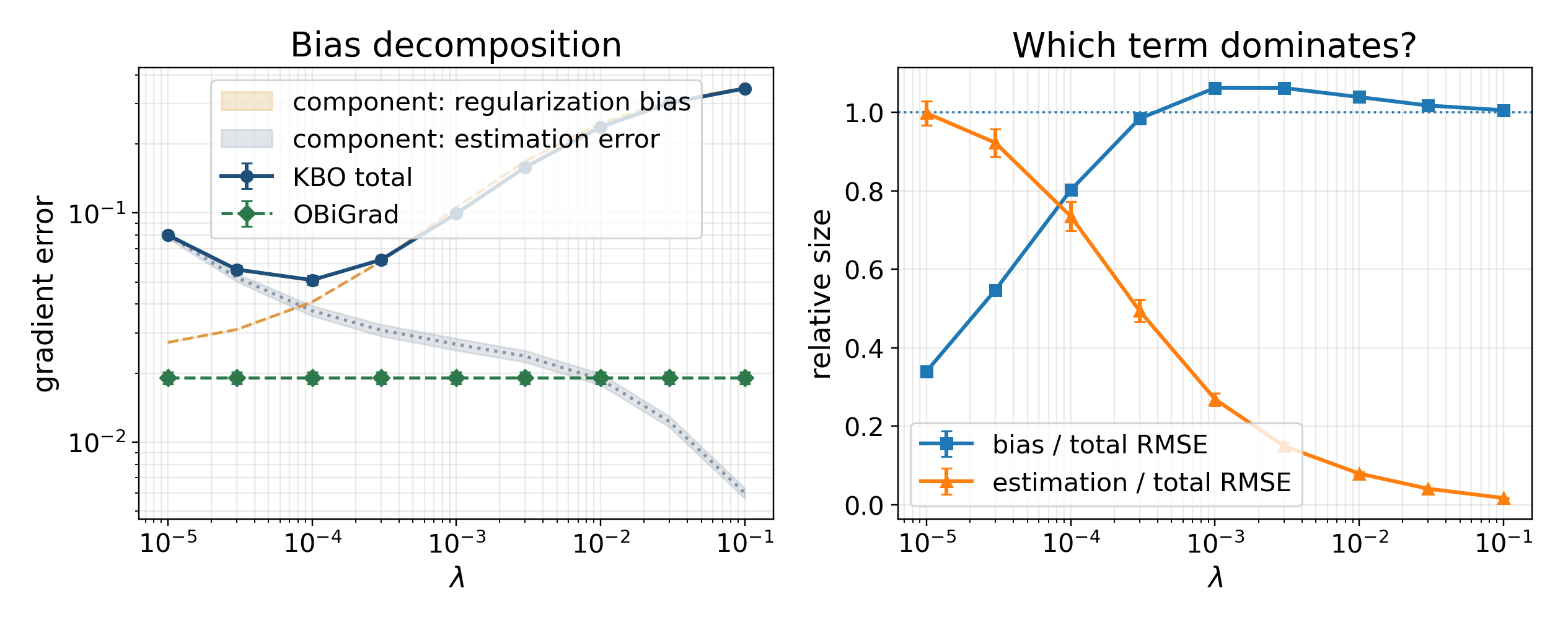}
    \caption{IV KBO diagnostics. KBO estimation--regularization decomposition.}
    \label{fig:app-iv-kbo}
\end{figure}

\paragraph{IV Experiment 4: root estimation.}
The root experiment uses a scalar IV design. We draw $X\sim\mathcal{N}(0,I_p)$ with $p=3$, set $Z=2\sum_{j=1}^p X_j+\eta$ with $\eta\sim\mathcal{N}(0,0.1)$, and generate
\begin{equation*}
    Y=\omega^\star Z+0.5\eta+\varepsilon_Y,\qquad\varepsilon_Y\sim\mathcal{N}(0,0.1^2),\qquad\omega^\star=2.
\end{equation*}
Here $j^\star(X)=\mathbb{E}_P[Z\mid X]=2\sum_j X_j$, $m^\star_P(X)=\mathbb{E}_P[Y\mid X]=\omega^\star j^\star(X)$, and $\Psi_\omega(P)=4p(\omega-\omega^\star)$, so the population stationary point is exactly $\omega^\star=2$. We compare the plug-in hypergradient, OBiGrad, the oracle DR benchmark, fixed-$\lambda$ KBO with $\lambda=10^{-2}$, and KBO with $\lambda_N=0.05N^{-0.6}$. The regularized population stationary points for KBO are approximated by a Monte Carlo average using $2500$ samples. Tables~\ref{tab:generated-iv-root-rmse}--\ref{tab:generated-iv-root-kbo-population} and Figures~\ref{fig:app-iv-root}--\ref{fig:app-iv-root-kbo} show that OBiGrad tracks the oracle DR benchmark, while fixed-$\lambda$ KBO remains biased toward its regularized stationary point.

\begin{table}[ht]
\centering
\small
\setlength{\tabcolsep}{5pt}
\caption{Root-estimation RMSE. Parentheses report Monte Carlo $95\%$ error bars.}
\label{tab:generated-iv-root-rmse}
\begin{tabular}{lccccc}
\toprule
$n$ & PI & \textbf{OBiGrad} & Oracle DR & KBO fixed $\lambda$ & KBO decaying $\lambda$ \\
\midrule
100 & 0.0107 (0.0012) & \textbf{0.0056 (5.16e-4)} & 0.0056 (5.06e-4) & 0.2920 (0.0121) & 0.1660 (0.0093) \\
200 & 0.0056 (4.27e-4) & \textbf{0.0038 (3.06e-4)} & 0.0038 (3.03e-4) & 0.2610 (0.0081) & 0.1121 (0.0055) \\
400 & 0.0035 (2.97e-4) & \textbf{0.0029 (2.46e-4)} & 0.0029 (2.43e-4) & 0.2514 (0.0059) & 0.0792 (0.0035) \\
800 & 0.0023 (1.74e-4) & \textbf{0.0019 (1.48e-4)} & 0.0019 (1.47e-4) & 0.2440 (0.0043) & 0.0567 (0.0023) \\
1600 & 0.0015 (1.33e-4) & \textbf{0.0014 (1.25e-4)} & 0.0014 (1.25e-4) & 0.2386 (0.0026) & 0.0396 (0.0013) \\
\bottomrule
\end{tabular}
\end{table}

\begin{table}[ht]
\centering
\small
\setlength{\tabcolsep}{5pt}
\caption{KBO population roots in the scalar root experiment.}
\label{tab:generated-iv-root-kbo-population}
\begin{tabular}{lcc}
\toprule
$\lambda$ & Population root & Bias \\
\midrule
5.98e-4 & 2.0553 & 0.0553 \\
9.06e-4 & 2.0691 & 0.0691 \\
0.00137 & 2.0858 & 0.0858 \\
0.00208 & 2.1058 & 0.1058 \\
0.00315 & 2.1308 & 0.1308 \\
0.01000 & 2.2680 & 0.2680 \\
\bottomrule
\end{tabular}
\end{table}
\newpage
\begin{figure}[ht]
    \centering
    \includegraphics[width=.8\linewidth]{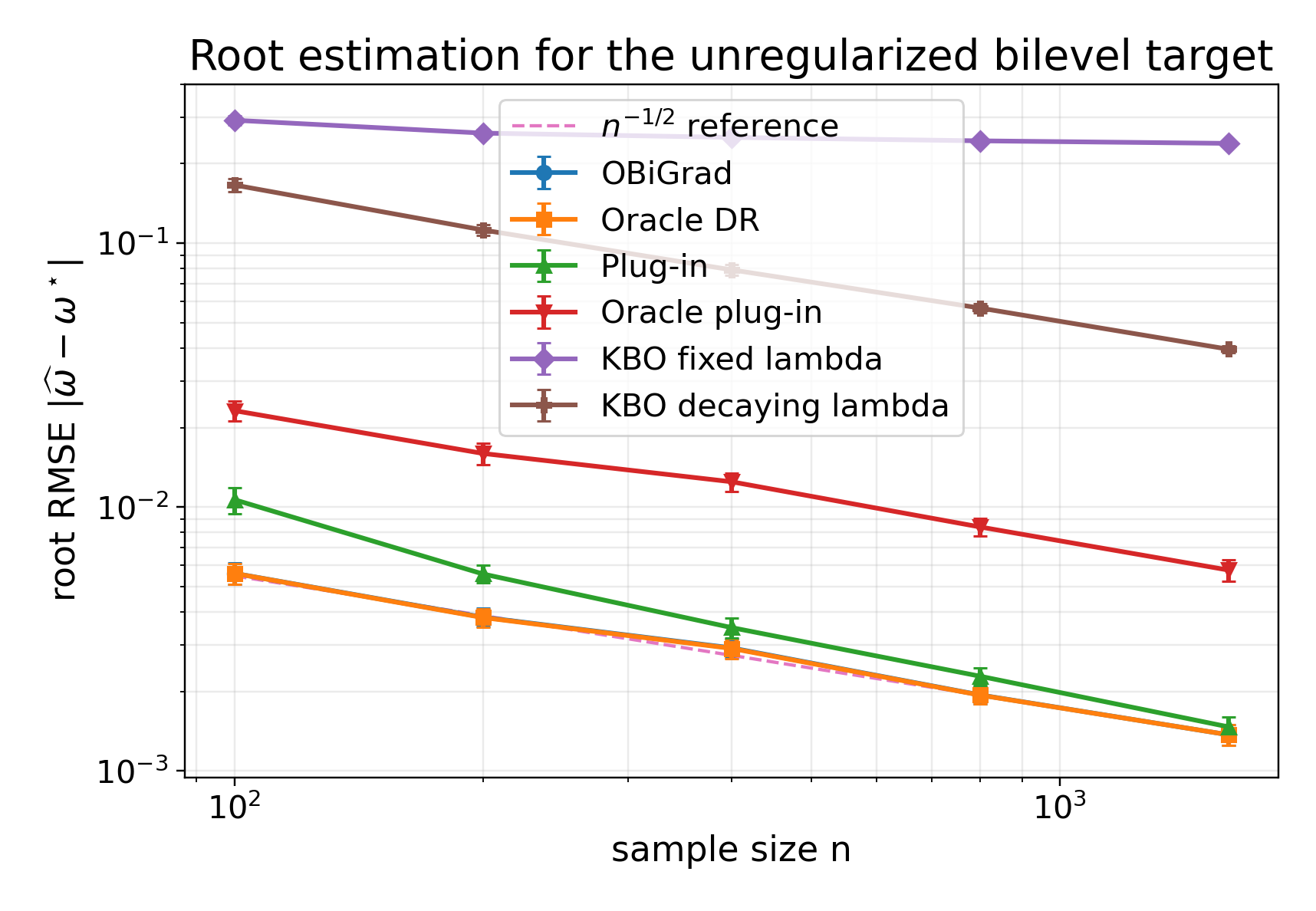}
    \hfill
    \caption{IV root estimation RMSE.}
    \label{fig:app-iv-root}
\end{figure}

\begin{figure}[ht]
    \centering
    \includegraphics[width=.8\linewidth]{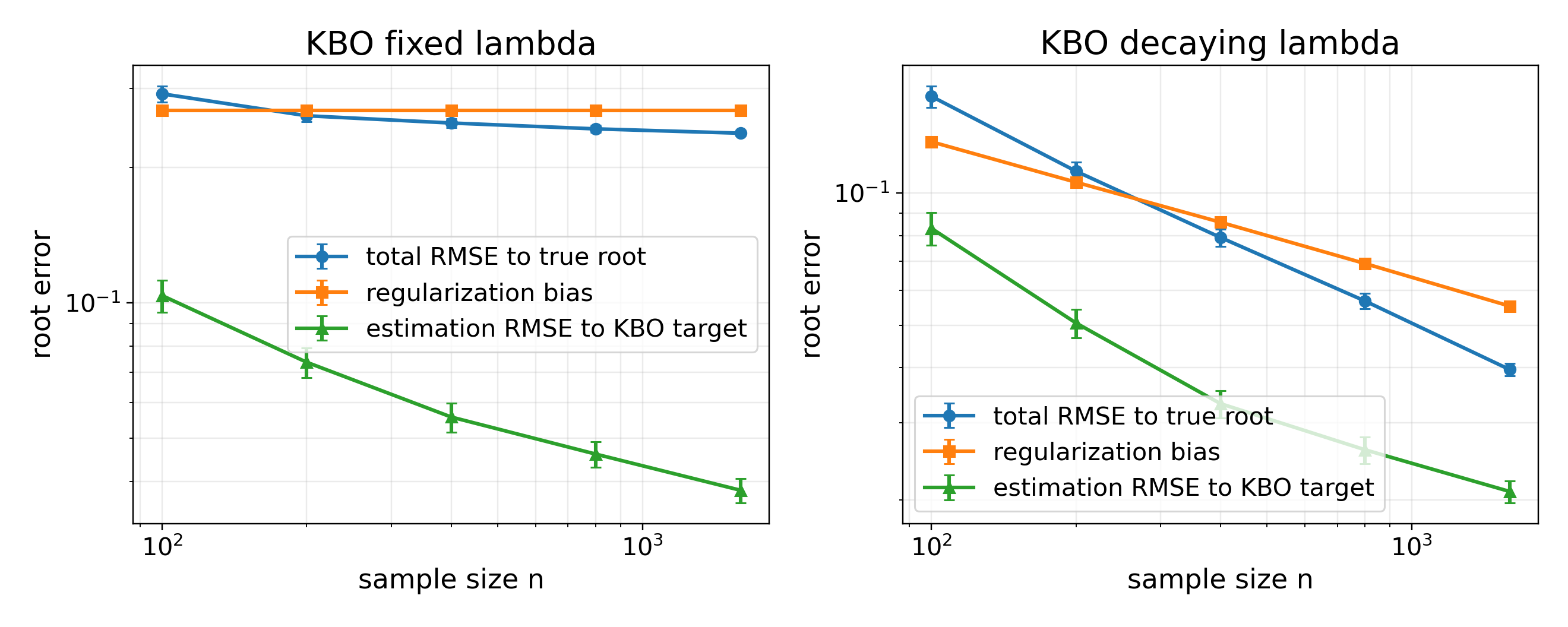}
    \caption{IV KBO root decomposition. Fixed-$\lambda$ KBO remains biased toward its regularized population root, while decreasing $\lambda_n$ reduces the bias.}
    \label{fig:app-iv-root-kbo}
\end{figure}

\subsection{\texorpdfstring{Fitted $Q$-regression experiments}{Fitted Q-regression experiments}}
\label{app:fitted-q-experiments}

\paragraph{Fitted $Q$-regression DGP.}
The fitted $Q$-regression experiments use a projected Bellman-backup design. Let $X=(S,A)$ and $Z=(S,A,R,S')$. The state is $S\sim\mathcal{N}(0,1)$, the action is binary $A\sim\mathrm{Bernoulli}(0.5)$, and the next state follows
\begin{equation*}
    S'=\rho S+\tau A+\xi,\qquad\xi\sim\mathcal{N}(0,\sigma_S^2),
\end{equation*}
with $\rho=0.7$, $\tau=0.5$, and $\sigma_S=0.2$. The reward is
\begin{equation*}
    R=\sin(S)+0.5A+0.25SA+\varepsilon_R,\qquad\varepsilon_R\sim\mathcal{N}(0,0.1^2).
\end{equation*}
For a continuation-value model $V_\omega(s')=\omega^\top\phi(s')$, the Bellman target is $g_\omega(Z)=R+\gamma V_\omega(S')$, and the lower-level loss is the quadratic Bellman-regression loss
\begin{equation*}
    \frac{1}{2}\mathbb{E}_P\bigl[(h(S,A)-R-\gamma V_\omega(S'))^2\bigr],
\end{equation*}
whose population solution is the projected Bellman backup
\begin{equation*}
    h^\star_{\omega,P}(S,A)=\mathbb{E}_P[R+\gamma V_\omega(S')\mid S,A],
\end{equation*}
as in the FQE example of Section~\ref{sec:problem}.

For the gradient-estimation and KBO experiments, we use
\begin{equation*}
    \phi(s')=(\sin s',\cos s',s',(s')^2)^\top,\qquad\omega^\star=(0.55,-0.35,0.25,0.15)^\top,
\end{equation*}
$\gamma=0.8$, and
\begin{equation*}
    Y=R+\gamma V_{\omega^\star}(S')+\varepsilon_Y,\qquad\varepsilon_Y\sim\mathcal{N}(0,0.25^2).
\end{equation*}
The evaluation point is
\begin{equation*}
    \omega_0=\omega^\star+0.35\frac{(1,-0.5,0.35,-0.25)^\top}{\|(1,-0.5,0.35,-0.25)\|}.
\end{equation*}
For the Wald experiment, we use the same structural design with a state-dependent behavior policy $P(A=1\mid S)=\sigma(0.5S)$ and $\gamma=0.9$. For the root experiment, we use the two-dimensional feature map $\phi(s')=(\sin s',\cos s')^\top$ with $\omega^\star=(0.65,-0.45)^\top$, which gives a well-conditioned population stationary point.

The analytic nuisances are available because $S'\mid S,A$ is Gaussian. Writing $\mu(S,A)=\rho S+\tau A$,
\begin{equation*}
    \mathbb{E}_P[\sin(S')\mid S,A]=e^{-\sigma_S^2/2}\sin\{\mu(S,A)\},\qquad\mathbb{E}_P[\cos(S')\mid S,A]=e^{-\sigma_S^2/2}\cos\{\mu(S,A)\},
\end{equation*}
and $\mathbb{E}_P[S'\mid S,A]=\mu(S,A)$, $\mathbb{E}_P[(S')^2\mid S,A]=\mu(S,A)^2+\sigma_S^2$. Hence
\begin{equation*}
    j^\star_{\omega,P}(S,A)=\mathbb{E}_P[\gamma\phi(S')\mid S,A],
\end{equation*}
\begin{equation*}
    h^\star_{\omega,P}(S,A)=\mathbb{E}_P[R\mid S,A]+j^\star_{\omega,P}(S,A)^\top\omega,\qquad m^\star_P(S,A)=\mathbb{E}_P[R\mid S,A]+j^\star_{\omega,P}(S,A)^\top\omega^\star.
\end{equation*}
Feasible nuisance learners use only the observed covariates $(S,A)$, with ridge regression on an observable nonlinear basis in $(S,A)$ as the default. Oracle conditional expectations and transition parameters are used only for evaluation and for the oracle DR benchmark.

\paragraph{Fitted $Q$-regression Experiment 1: fixed-gradient estimation.}
This experiment estimates $\Psi_{\omega_0}(P)$ for $N\in\{200,400,800,1600,3200\}$ over $200$ replications, comparing the plug-in hypergradient, OBiGrad, and the oracle DR benchmark. The nuisance product proxy $\|\hat j_\omega-j^\star_{\omega,P}\|_{\LL^d}(\|\hat h_\omega-h^\star_{\omega,P}\|_{\LL}+\|\hat m-m^\star_P\|_{\LL})$ tracks the second-order remainder in \eqref{eq:vme}. Tables~\ref{tab:generated-bellman-b1-gradient}--\ref{tab:generated-bellman-b1-nuisance} and Figure~\ref{fig:app-bellman-gradient-rmse} show that OBiGrad substantially improves over the plug-in hypergradient at small $N$ and approaches the oracle DR benchmark as $N$ grows.

\begin{table}[ht]
\centering
\small
\setlength{\tabcolsep}{5pt}
\caption{Fitted $Q$-regression gradient estimation. Parentheses report Monte Carlo $95\%$ error bars.}
\label{tab:generated-bellman-b1-gradient}
\begin{tabular}{lccccc}
\toprule
$N$ & PI & \textbf{OBiGrad} & Oracle DR & \textbf{OBiGrad coverage} & Product bias \\
\midrule
200 & 0.0862 (0.0188) & \textbf{0.0486 (0.0074)} & 0.0314 (0.0018) & \textbf{0.949 (0.016)} & 0.043 \\
400 & 0.0511 (0.0161) & \textbf{0.0262 (0.0030)} & 0.0216 (0.0014) & \textbf{0.951 (0.019)} & 0.020 \\
800 & 0.0208 (0.0025) & \textbf{0.0167 (9.87e-4)} & 0.0157 (8.33e-4) & \textbf{0.956 (0.017)} & 0.007 \\
1600 & 0.0124 (8.15e-4) & \textbf{0.0112 (6.55e-4)} & 0.0112 (6.37e-4) & \textbf{0.961 (0.015)} & 0.003 \\
3200 & 0.0081 (4.70e-4) & \textbf{0.0080 (4.46e-4)} & 0.0079 (4.36e-4) & \textbf{0.959 (0.016)} & 0.002 \\
\bottomrule
\end{tabular}
\end{table}

\begin{table}[ht]
\centering
\small
\setlength{\tabcolsep}{5pt}
\caption{Nuisance-learning diagnostics for fitted $Q$-regression gradient estimation.}
\label{tab:generated-bellman-b1-nuisance}
\begin{tabular}{lcccc}
\toprule
$N$ & $\|\hat h_\omega-h^\star_{\omega,P}\|_{\LL}$ & $\|\hat j_\omega-j^\star_{\omega,P}\|_{\LL^d}$ & $\|\hat m-m^\star_P\|_{\LL}$ & Product bias \\
\midrule
200 & 0.1277 & 0.1248 & 0.1743 & 0.0432 \\
400 & 0.0765 & 0.0841 & 0.1140 & 0.0196 \\
800 & 0.0463 & 0.0575 & 0.0714 & 0.0073 \\
1600 & 0.0299 & 0.0411 & 0.0495 & 0.0034 \\
3200 & 0.0211 & 0.0317 & 0.0339 & 0.0018 \\
\bottomrule
\end{tabular}
\end{table}

\begin{figure}[ht]
    \centering
    \includegraphics[width=.58\linewidth]{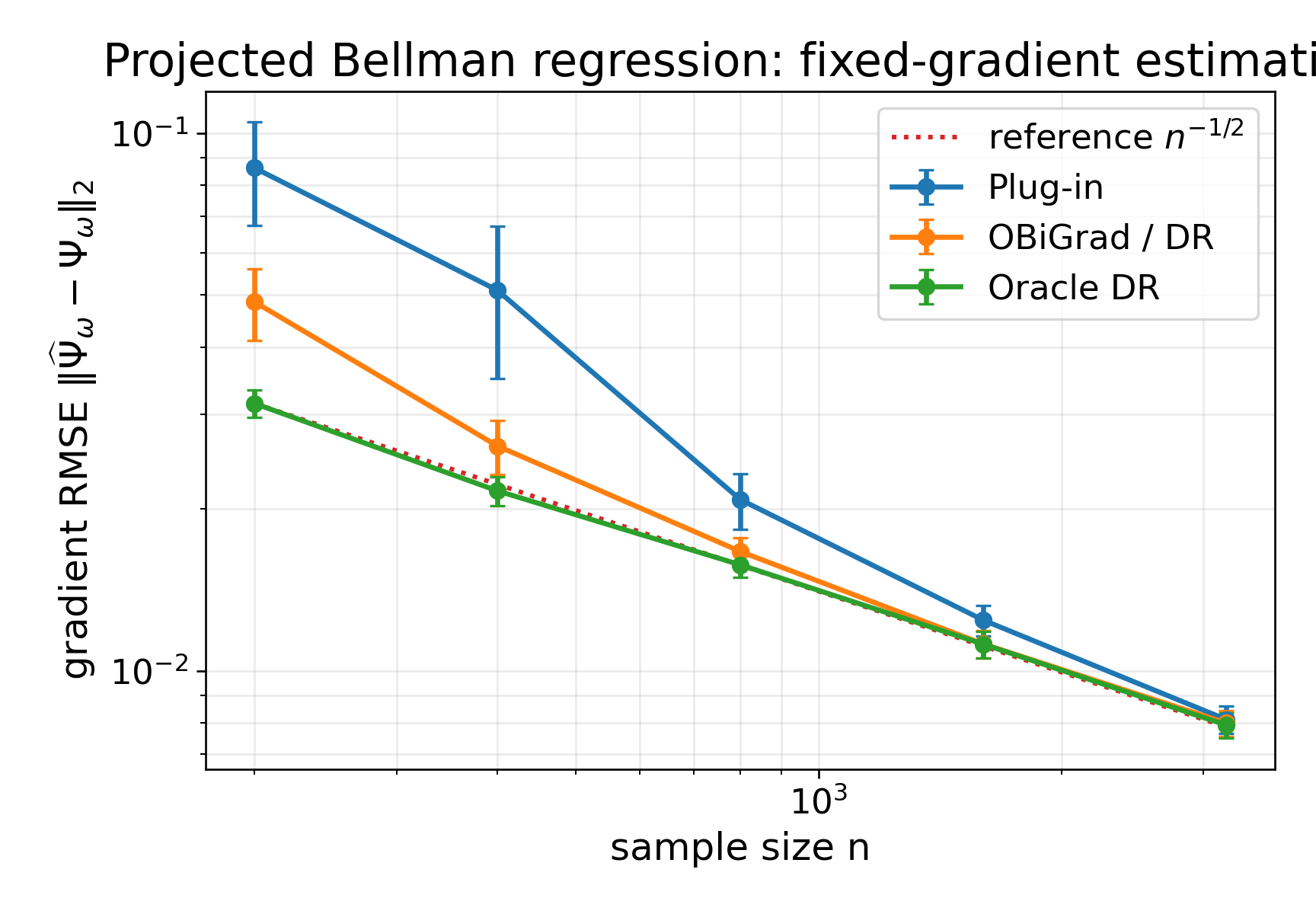}
    \caption{Fitted $Q$-regression fixed-gradient estimation. OBiGrad improves over PI at small sample sizes and approaches the oracle DR benchmark as $n$ grows.}
    \label{fig:app-bellman-gradient-rmse}
\end{figure}

\paragraph{Fitted $Q$-regression Experiment 2: Wald calibration.}
This experiment forms coordinate-wise $95\%$ Wald confidence intervals \eqref{eq:wald-ci} using the empirical covariance estimator. Results are averaged over $200$ replications. Tables~\ref{tab:generated-bellman-b2-wald}--\ref{tab:generated-bellman-b2-studentized} and Figures~\ref{fig:app-bellman-wald-curves}--\ref{fig:app-bellman-studentized} report coverage, interval length, RMSE, studentized quantiles, and QQ diagnostics. OBiGrad achieves close to nominal coverage in this harder fitted $Q$-regression design, with mild deviations at small $N$ due to harder nuisance estimation, consistent with the tail deviations observed in Figure~\ref{fig:main-diagnostics}.

\begin{table}[ht]
\centering
\small
\setlength{\tabcolsep}{5pt}
\caption{Fitted $Q$-regression Wald calibration.}
\label{tab:generated-bellman-b2-wald}
\begin{tabular}{lccccc}
\toprule
$N$ & Coverage & Length & RMSE & $|Z|>1.96$ & Product bias \\
\midrule
200 & 0.934 (0.017) & 0.1232 (0.0127) & 0.0612 (0.0132) & 0.066 & 0.114 \\
400 & 0.951 (0.015) & 0.0630 (0.0045) & 0.0257 (0.0048) & 0.049 & 0.035 \\
800 & 0.955 (0.014) & 0.0389 (0.0016) & 0.0118 (0.0013) & 0.045 & 0.020 \\
1600 & 0.954 (0.015) & 0.0253 (7.47e-4) & 0.0071 (5.28e-4) & 0.046 & 0.007 \\
3200 & 0.926 (0.018) & 0.0176 (4.70e-4) & 0.0050 (3.34e-4) & 0.074 & 0.004 \\
\bottomrule
\end{tabular}
\end{table}

\begin{table}[ht]
\centering
\small
\setlength{\tabcolsep}{5pt}
\caption{Studentized-error diagnostics for fitted $Q$-regression Wald intervals.}
\label{tab:generated-bellman-b2-studentized}
\begin{tabular}{lccccc}
\toprule
$N$ & Mean & SD & 2.5\% & Median & 97.5\% \\
\midrule
200 & 0.022 & 1.141 & -2.163 & 0.035 & 1.967 \\
400 & 0.018 & 1.059 & -1.972 & 0.007 & 1.931 \\
800 & 0.053 & 1.016 & -1.942 & 0.100 & 1.893 \\
1600 & 0.055 & 1.006 & -1.818 & 0.099 & 2.023 \\
3200 & -0.021 & 1.045 & -2.172 & 9.25e-4 & 2.020 \\
\bottomrule
\end{tabular}
\end{table}

\begin{figure}[ht]
    \centering
    \includegraphics[width=.48\linewidth]{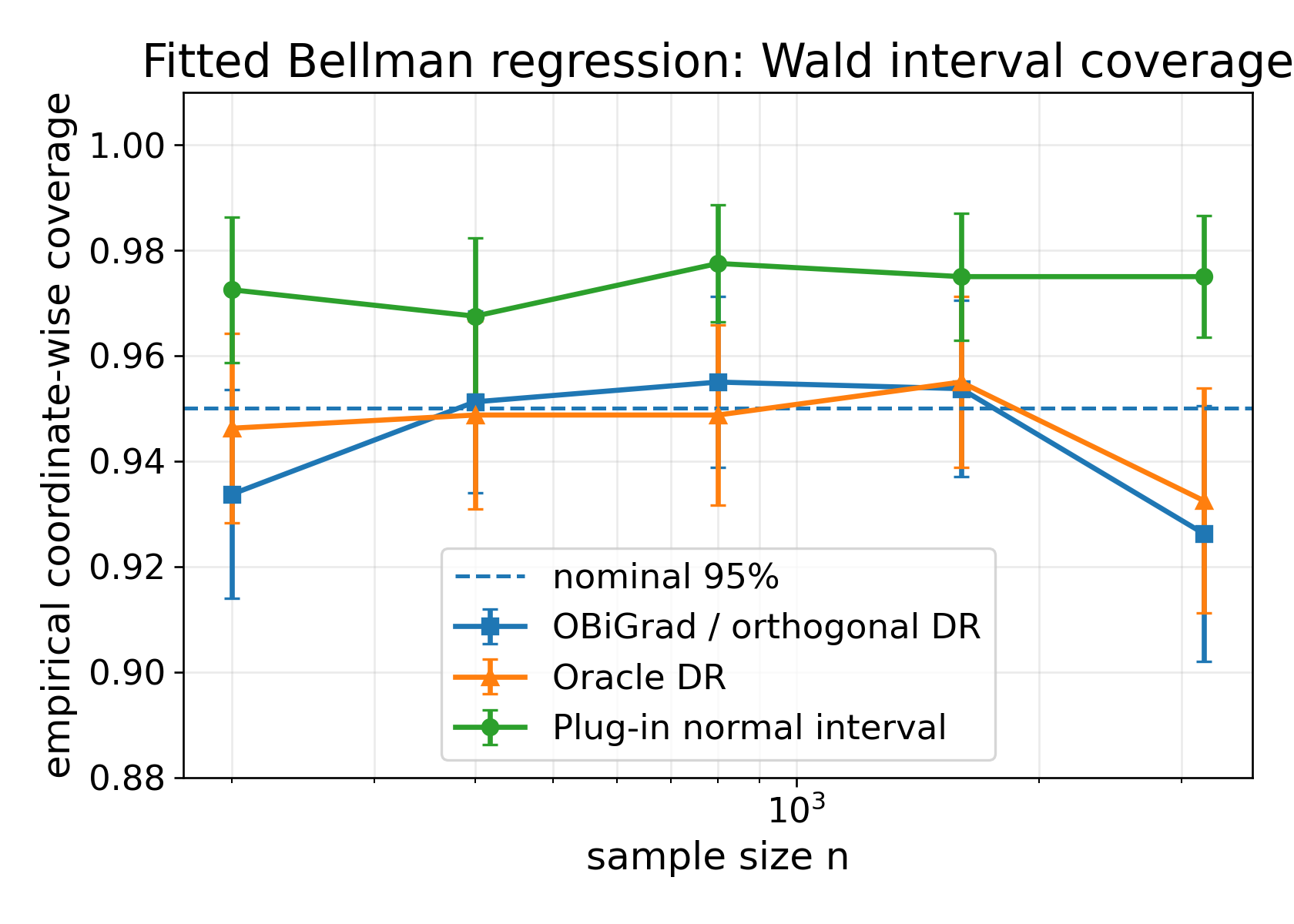}
    \hfill
    \includegraphics[width=.48\linewidth]{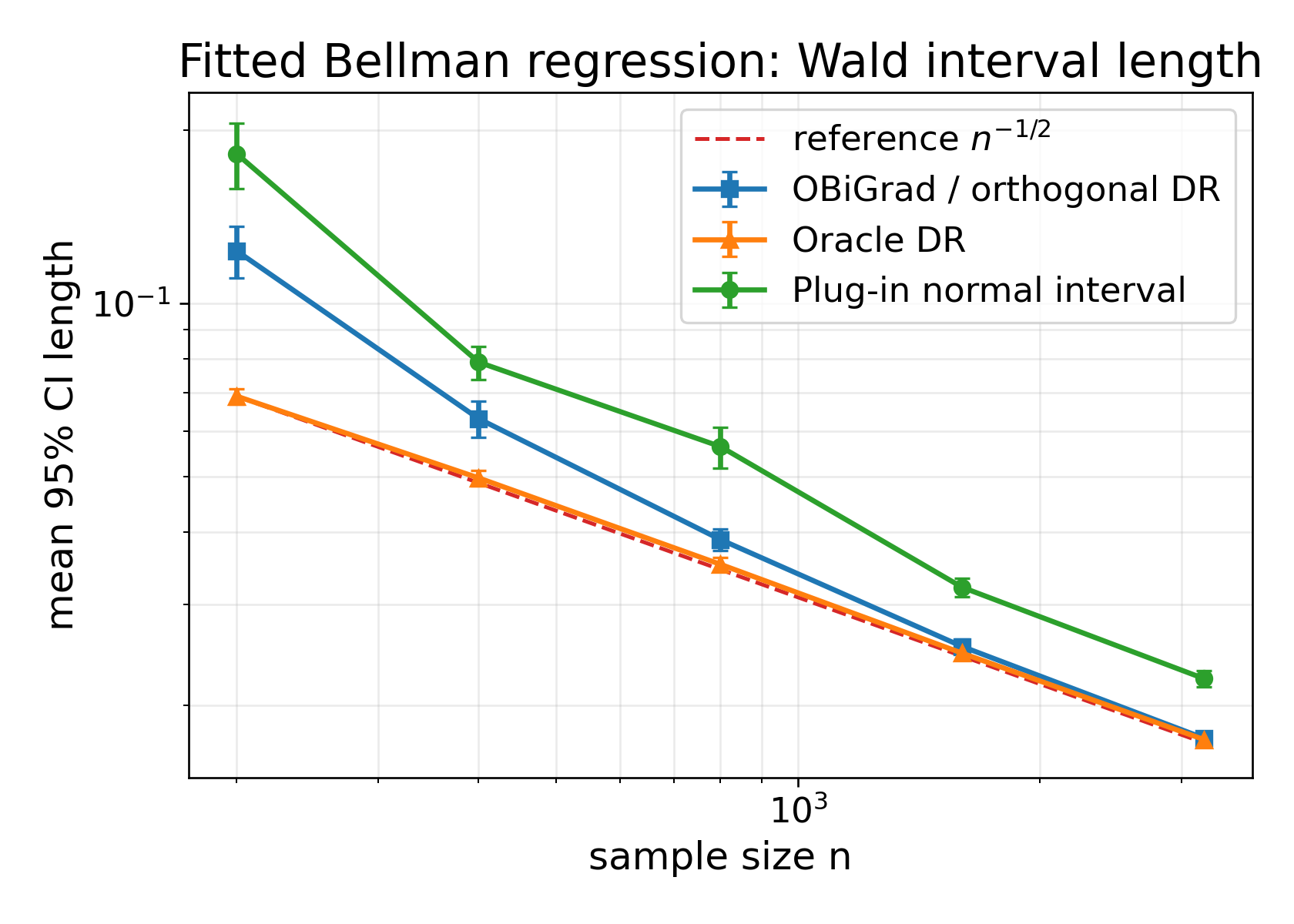}
    \caption{Fitted $Q$-regression Wald diagnostics. Left: coordinate-wise coverage of nominal $95\%$ intervals. Right: average interval length.}
    \label{fig:app-bellman-wald-curves}
\end{figure}

\begin{figure}[ht]
    \centering
    \includegraphics[width=.48\linewidth]{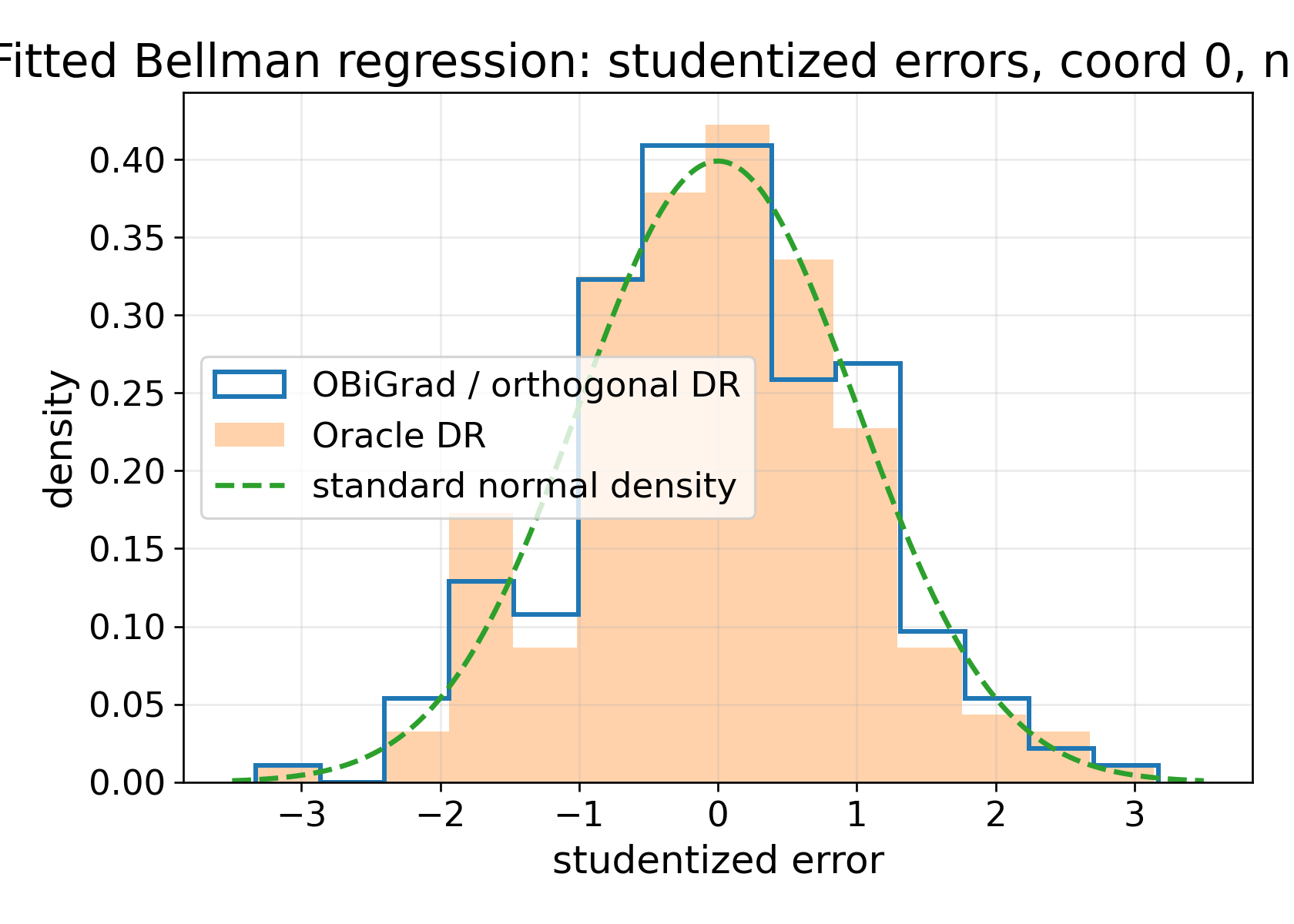}
    \hfill
    \includegraphics[width=.48\linewidth]{results/Bellman/b2/bellman_b2_studentized_qq_coord0_n3200.png}
    \caption{Fitted $Q$-regression studentized OBiGrad errors for coordinate $0$ at $N=3200$.}
    \label{fig:app-bellman-studentized}
\end{figure}

\paragraph{Fitted $Q$-regression Experiment 3: root estimation.}
The root experiment uses the two-feature continuation model $V_\omega(S')=\omega^\top(\sin S',\cos S')^\top$ with $\omega^\star=(0.65,-0.45)^\top$. The outcome is $Y=R+\gamma V_{\omega^\star}(S')+\varepsilon_Y$, so the unregularized population stationary point is exactly $\omega^\star$. We compare the plug-in hypergradient, OBiGrad, the oracle DR benchmark, and the oracle plug-in root. The root equations are affine in $\omega$, so they are solved directly rather than by gradient descent. Tables~\ref{tab:generated-bellman-b3-root-rmse}--\ref{tab:generated-bellman-b3-root-appendix} and Figures~\ref{fig:app-bellman-root-rmse}--\ref{fig:app-bellman-root-bias} show that OBiGrad tracks the oracle DR root and improves over the plug-in hypergradient for moderate and large sample sizes.

\begin{table}[ht]
\centering
\small
\setlength{\tabcolsep}{5pt}
\caption{Fitted $Q$-regression root-estimation RMSE. Parentheses report Monte Carlo $95\%$ error bars.}
\label{tab:generated-bellman-b3-root-rmse}
\begin{tabular}{lcccc}
\toprule
$N$ & PI & \textbf{OBiGrad} & Oracle DR & Oracle plug-in \\
\midrule
200 & 0.0722 (0.0108) & \textbf{0.0249 (0.0045)} & 0.0192 (0.0014) & 0.0360 (0.0027) \\
400 & 0.0379 (0.0179) & \textbf{0.0188 (0.0070)} & 0.0132 (0.0010) & 0.0213 (0.0016) \\
800 & 0.0127 (0.0019) & \textbf{0.0099 (6.70e-4)} & 0.0095 (6.23e-4) & 0.0162 (0.0012) \\
1600 & 0.0073 (5.01e-4) & \textbf{0.0067 (4.35e-4)} & 0.0067 (4.47e-4) & 0.0110 (8.70e-4) \\
3200 & 0.0050 (3.21e-4) & \textbf{0.0049 (3.30e-4)} & 0.0049 (3.25e-4) & 0.0083 (5.29e-4) \\
\bottomrule
\end{tabular}
\end{table}

\begin{table}[ht]
\centering
\small
\setlength{\tabcolsep}{5pt}
\caption{Additional fitted $Q$-regression root-estimation diagnostics.}
\label{tab:generated-bellman-b3-root-appendix}
\begin{tabular}{lccccc}
\toprule
$N$ & Method & Bias norm & Mean abs. & Median abs. & 90\% abs. \\
\midrule
200 & PI & 0.0375 & 0.0504 & 0.0292 & 0.1155 \\
200 & OBiGrad & 0.0022 & 0.0203 & 0.0182 & 0.0329 \\
200 & Oracle DR & 0.0018 & 0.0168 & 0.0158 & 0.0297 \\
200 & Oracle plug-in & 0.0015 & 0.0313 & 0.0277 & 0.0581 \\
400 & PI & 0.0119 & 0.0221 & 0.0152 & 0.0379 \\
400 & OBiGrad & 9.81e-4 & 0.0133 & 0.0106 & 0.0235 \\
400 & Oracle DR & 3.10e-4 & 0.0115 & 0.0104 & 0.0202 \\
400 & Oracle plug-in & 3.82e-4 & 0.0187 & 0.0172 & 0.0336 \\
800 & PI & 0.0022 & 0.0104 & 0.0094 & 0.0174 \\
800 & OBiGrad & 5.42e-4 & 0.0088 & 0.0086 & 0.0148 \\
800 & Oracle DR & 5.41e-4 & 0.0085 & 0.0081 & 0.0143 \\
800 & Oracle plug-in & 0.0010 & 0.0141 & 0.0130 & 0.0251 \\
1600 & PI & 4.12e-4 & 0.0065 & 0.0063 & 0.0111 \\
1600 & OBiGrad & 3.68e-4 & 0.0060 & 0.0056 & 0.0100 \\
1600 & Oracle DR & 3.53e-4 & 0.0060 & 0.0056 & 0.0100 \\
1600 & Oracle plug-in & 0.0011 & 0.0096 & 0.0085 & 0.0172 \\
3200 & PI & 2.71e-4 & 0.0044 & 0.0041 & 0.0075 \\
3200 & OBiGrad & 2.19e-4 & 0.0043 & 0.0040 & 0.0076 \\
3200 & Oracle DR & 2.16e-4 & 0.0043 & 0.0041 & 0.0076 \\
3200 & Oracle plug-in & 9.15e-4 & 0.0074 & 0.0071 & 0.0125 \\
\bottomrule
\end{tabular}
\end{table}

\begin{figure}[ht]
    \centering
    \includegraphics[width=.68\linewidth]{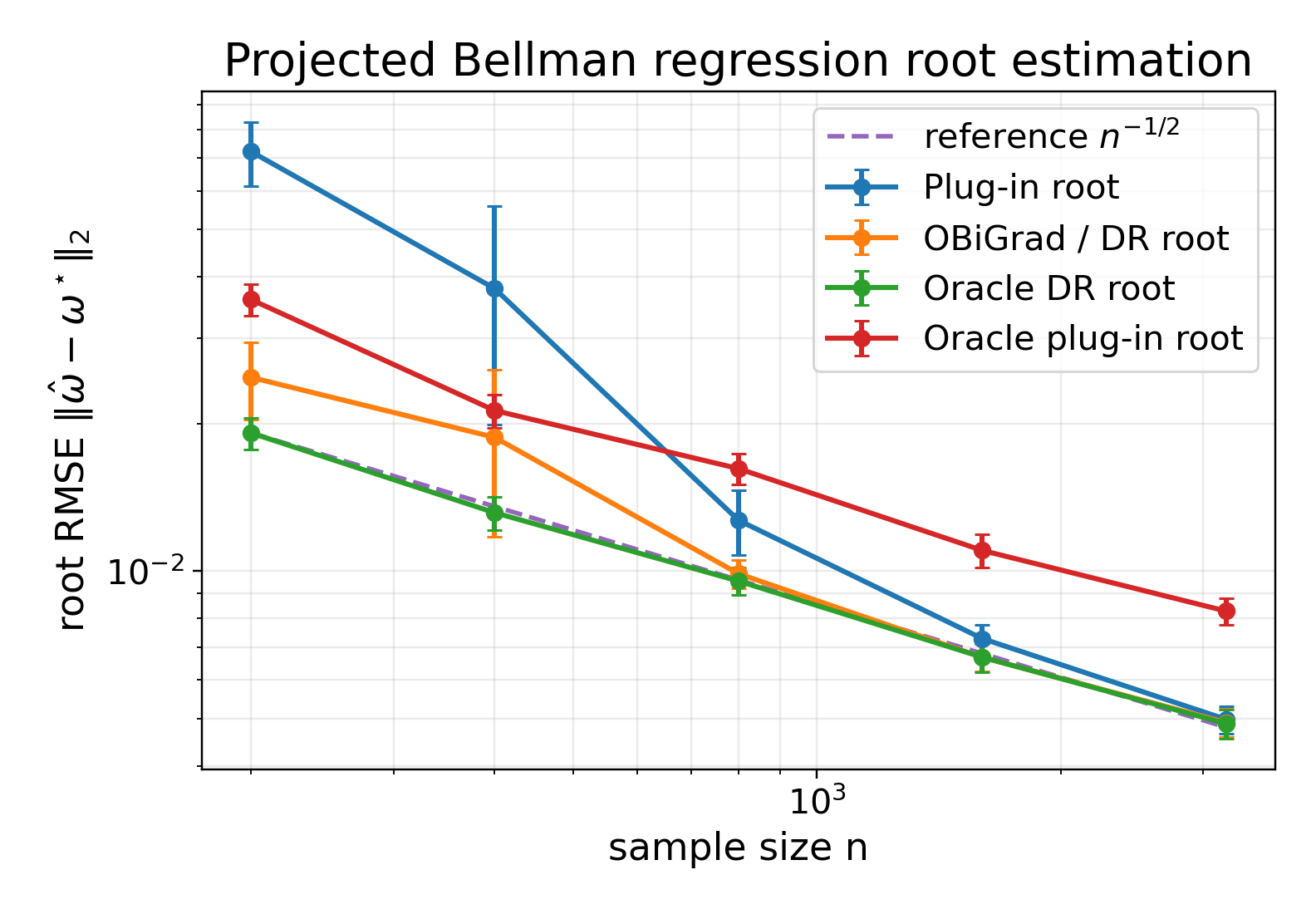}
    \caption{Fitted $Q$-regression root-estimation RMSE. OBiGrad tracks the oracle DR benchmark and improves over PI.}
    \label{fig:app-bellman-root-rmse}
\end{figure}

\begin{figure}[ht]
    \centering
    \includegraphics[width=.68\linewidth]{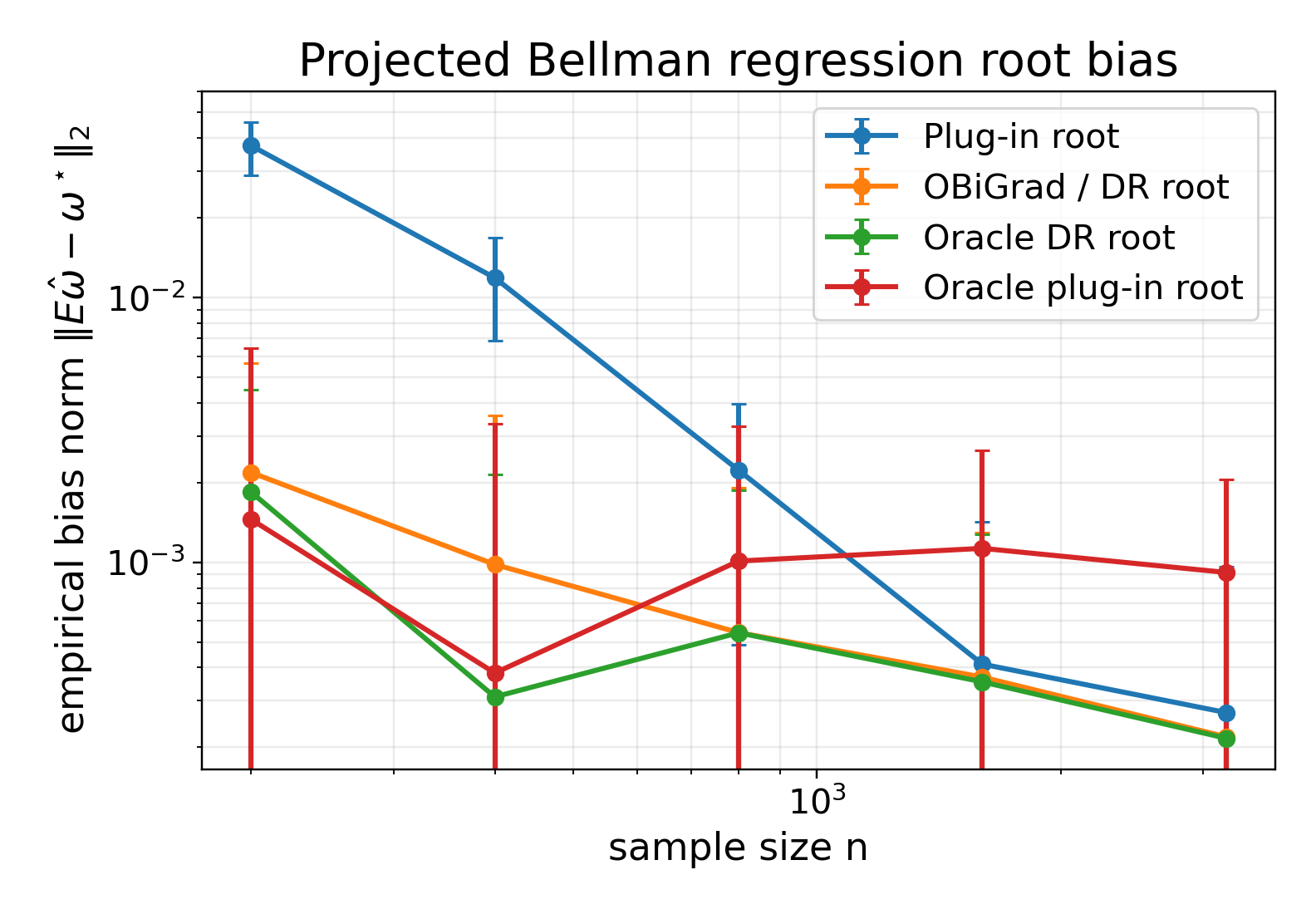}
    \caption{Fitted $Q$-regression root-estimation bias.}
    \label{fig:app-bellman-root-bias}
\end{figure}

\paragraph{Fitted $Q$-regression Experiment 4: KBO regularization bias.}
This experiment compares OBiGrad to fixed-$\lambda$ KBO at $N=600$, with $200$ replications. KBO uses a random Fourier feature (RFF) approximation to the Gaussian RKHS with kernel bandwidth $0.35$, $256$ random features, and $\lambda\in\{10^{-5},3\cdot10^{-5},10^{-4},3\cdot10^{-4},10^{-3},3\cdot10^{-3},10^{-2},3\cdot10^{-2},10^{-1}\}$. The regularized population target $\Psi_{\omega,\lambda}(P)$ is approximated by a Monte Carlo average using $12000$ samples. Tables~\ref{tab:generated-bellman-b4-kbo}--\ref{tab:generated-bellman-b4-appendix} and Figures~\ref{fig:app-bellman-kbo}--\ref{fig:app-bellman-kbo-total} show the same estimation--regularization tradeoff as in IV Experiment 3: small $\lambda$ reduces regularization bias $\|\Psi_{\omega,\lambda}(P)-\Psi_\omega(P)\|$ but increases estimation error, while large $\lambda$ stabilizes estimation but shifts the target away from $\Psi_\omega(P)$. OBiGrad improves over the plug-in hypergradient and over KBO for every $\lambda$ in the grid when total error is measured against $\Psi_\omega(P)$.

\begin{table}[ht]
\centering
\small
\setlength{\tabcolsep}{5pt}
\caption{Fitted $Q$-regression KBO gradient-error decomposition. Parentheses report Monte Carlo $95\%$ error bars.}
\label{tab:generated-bellman-b4-kbo}
\begin{tabular}{lccccc}
\toprule
$\lambda$ & KBO total & Reg. bias & KBO estimation & \textbf{OBiGrad} & PI \\
\midrule
$10^{-5}$ & 0.0262 (0.0017) & 0.0011 & 0.0261 (0.0017) & \textbf{0.0132 (8.47e-4)} & 0.0148 (0.0011) \\
$3\times 10^{-5}$ & 0.0261 (0.0017) & 0.0015 & 0.0260 (0.0017) & \textbf{0.0132 (8.47e-4)} & 0.0148 (0.0011) \\
$10^{-4}$ & 0.0259 (0.0016) & 0.0027 & 0.0258 (0.0017) & \textbf{0.0132 (8.47e-4)} & 0.0148 (0.0011) \\
$3\times 10^{-4}$ & 0.0260 (0.0016) & 0.0054 & 0.0255 (0.0016) & \textbf{0.0132 (8.47e-4)} & 0.0148 (0.0011) \\
$10^{-3}$ & 0.0282 (0.0017) & 0.0133 & 0.0249 (0.0015) & \textbf{0.0132 (8.47e-4)} & 0.0148 (0.0011) \\
$3\times 10^{-3}$ & 0.0400 (0.0022) & 0.0313 & 0.0240 (0.0015) & \textbf{0.0132 (8.47e-4)} & 0.0148 (0.0011) \\
$10^{-2}$ & 0.0831 (0.0024) & 0.0790 & 0.0231 (0.0015) & \textbf{0.0132 (8.47e-4)} & 0.0148 (0.0011) \\
$3\times 10^{-2}$ & 0.1683 (0.0025) & 0.1672 & 0.0229 (0.0017) & \textbf{0.0132 (8.47e-4)} & 0.0148 (0.0011) \\
$10^{-1}$ & 0.3007 (0.0024) & 0.3012 & 0.0228 (0.0015) & \textbf{0.0132 (8.47e-4)} & 0.0148 (0.0011) \\
\bottomrule
\end{tabular}
\end{table}

\begin{table}[ht]
\centering
\small
\setlength{\tabcolsep}{5pt}
\caption{Additional fitted $Q$-regression KBO diagnostics.}
\label{tab:generated-bellman-b4-appendix}
\begin{tabular}{lcc}
\toprule
$\lambda$ & Oracle DR & Product bias \\
\midrule
$10^{-5}$ & 0.0131 (8.67e-4) & 0.0045 \\
$3\times 10^{-5}$ & 0.0131 (8.67e-4) & 0.0045 \\
$10^{-4}$ & 0.0131 (8.67e-4) & 0.0045 \\
$3\times 10^{-4}$ & 0.0131 (8.67e-4) & 0.0045 \\
$10^{-3}$ & 0.0131 (8.67e-4) & 0.0045 \\
$3\times 10^{-3}$ & 0.0131 (8.67e-4) & 0.0045 \\
$10^{-2}$ & 0.0131 (8.67e-4) & 0.0045 \\
$3\times 10^{-2}$ & 0.0131 (8.67e-4) & 0.0045 \\
$10^{-1}$ & 0.0131 (8.67e-4) & 0.0045 \\
\bottomrule
\end{tabular}
\end{table}

\begin{figure}[ht]
    \centering
    \includegraphics[width=.75\linewidth]{results/Bellman/b4/bellman_b4_kbo_bias_decomposition.png}
    \caption{Fitted $Q$-regression KBO diagnostics. KBO estimation--regularization decomposition.}
    \label{fig:app-bellman-kbo}
\end{figure}

\begin{figure}[ht]
    \centering
    \includegraphics[width=.65\linewidth]{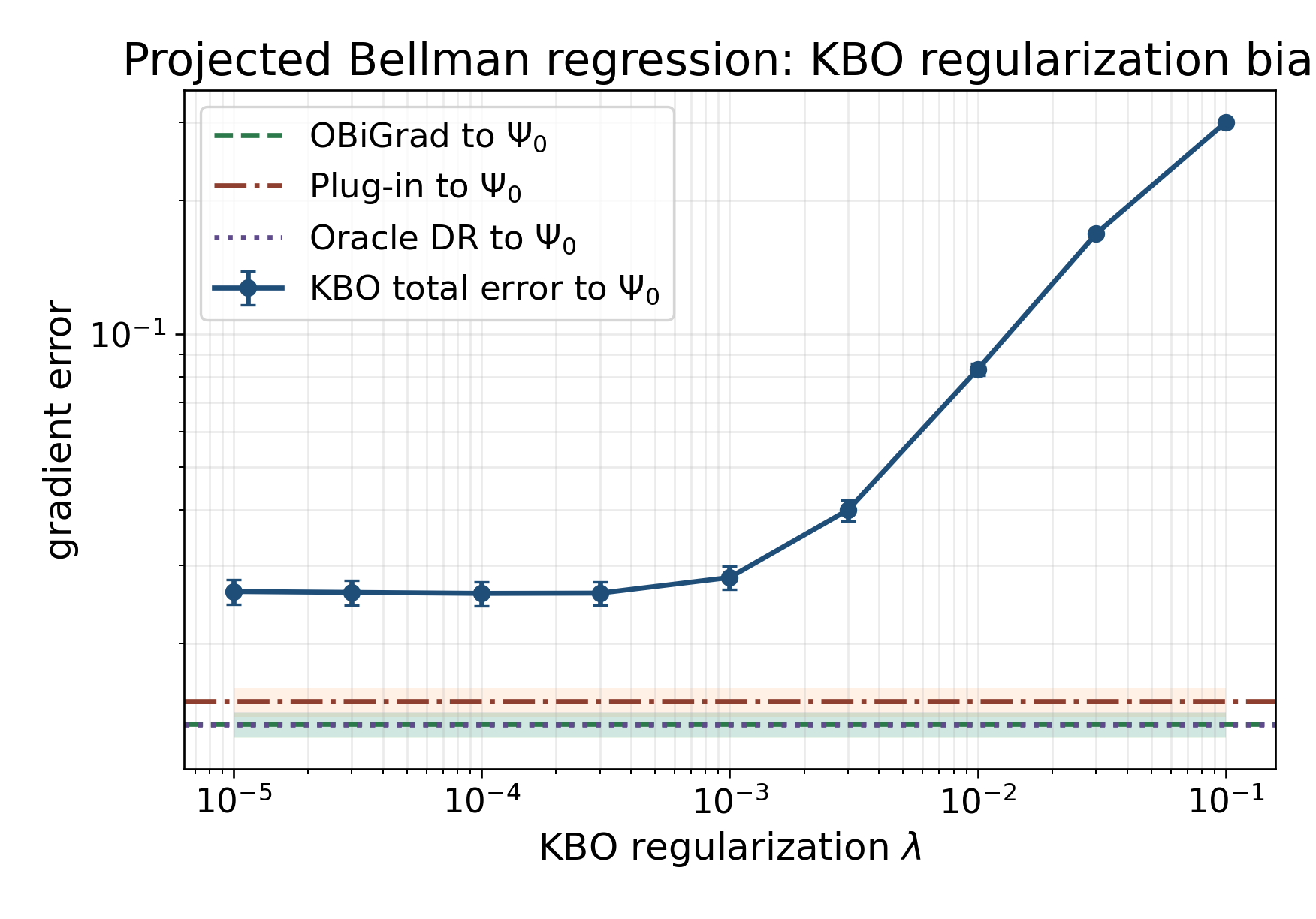}
    \caption{Fitted $Q$-regression KBO total error to the unregularized population target $\Psi_\omega(P)$.}
    \label{fig:app-bellman-kbo-total}
\end{figure}
\end{document}